\newcommand{\kbala}[1]{{\color{magenta}\bf[KB: #1]}}
\def\g{{\gamma}}
\def\t{{\theta}}
\def\betone{{\beta_{t+1}}}
\def\betsqr{{\beta_{t}^2}}
\def\bet{{\beta_{t}}}
\def\atone{{\alpha_{t+1}}}
\def\atonesqr{{\alpha_{t+1}^2}}
\def\atsqr{{\alpha_{t}^2}}
\def\at{{\alpha_{t}}}
\def\gtone{{\gamma_{t+1}}}
\def\sigmar{{\vartheta}}
\def\gt{{\gamma_{t}}}
\def\ttone{{\theta_{t+1}}}
\def\tt{{\theta_{t}}}
\def\deltone{{\delta_{t+1}}}
\def\delt{{\delta_{t}}}
\def\epstt{{\epsilon_{2,t}}}
\newcommand{\vbrac}[1]{\left|#1\right|}
\newcommand{\lrbrac}[1]{\left(#1\right)}
\def\hQt{{\hat{Q}_{t}}}
\newcommand{\expec}[2]{\mathbb{E}_{#2}\left[ #1 \right] }
\newcommand\numberthis{\addtocounter{equation}{1}\tag{\theequation}}  
\newcommand{\normtt}[1]{\lVert #1\rVert^2}
\let\hat\widehat
\let\tilde\widetilde
\newcommand{\Yifan}[1]{{\textcolor{magenta}{[Yifan: #1]}}}
\renewcommand*{\backref}[1]{\ifx#1\relax \else Page #1 \fi}
\renewcommand*{\backrefalt}[4]{%
  \ifcase #1 \footnotesize{(Not cited.)}%
  \or        \footnotesize{(Cited on page~#2.)}%
  \else      \footnotesize{(Cited on pages~#2.)}%
  \fi
}
\begin{document}
\title{\Large \bf Stochastic Optimization Algorithms for \\Instrumental Variable Regression with Streaming Data}
\author{
    Xuxing Chen$^\sharp$ \thanks{Department of Mathematics, University of California, Davis. Email: \href{mailto:xuxchen@ucdavis.edu}{xuxchen@ucdavis.edu}.    
    }
    \and
    Abhishek Roy$^\sharp$ \thanks{ Halıcıoğlu Data Science Institute, University of California, San Diego. Email: \href{mailto:a2roy@ucsd.edu}{a2roy@ucsd.edu}.    
    }
    \and
	Yifan Hu\thanks{College of Management of Technology, EPFL, Department of Computer Science, ETH Zurich, Switzerland. Email:
	\href{mailto:yifan.hu@epfl.ch}{yifan.hu@epfl.ch}. YH is supported by NCCR Automation of Swiss National Science Foundation.
	}
	\and
	Krishnakumar Balasubramanian\thanks{Department of Statistics, University of California, Davis. Email: \href{mailto:kbala@ucdavis.edu}{kbala@ucdavis.edu}. KB is supported by NSF grant DMS-2053918.
 }
}
\date{\today}
\maketitle
\def\thefootnote{$\sharp$}\footnotetext{XC and AR contributed equally to this work.}\def\thefootnote{\arabic{footnote}}

\begin{abstract}
    We develop and analyze algorithms for instrumental variable regression by viewing the problem as a conditional stochastic optimization problem. In the context of least-squares instrumental variable regression, our algorithms neither require matrix inversions nor mini-batches and provides a fully online approach for performing instrumental variable regression with streaming data. When the true model is linear, we derive rates of convergence in expectation, that are of order $\mathcal{O}(\log T/T)$ and $\mathcal{O}(1/T^{1-\iota})$ for any $\iota>0$, respectively under the availability of two-sample and one-sample oracles, respectively, where $T$ is the number of iterations. Importantly, under the availability of the two-sample oracle, our procedure avoids explicitly modeling and estimating the relationship between confounder and the instrumental variables, demonstrating the benefit of the proposed approach over recent works based on reformulating the problem as minimax optimization problems. Numerical experiments are provided to corroborate the theoretical results. 



\end{abstract}
\section{Introduction}\label{sec:intro}

Instrumental variable analysis is widely used in fields like econometrics, health care, social science, and online advertisement to estimate the causal effect of a random variable, $X$, on an outcome variable, $Y$, when an unobservable confounder influences both. By identifying an instrumental variable correlated with the variable $X$ but unrelated to the confounders, researchers can isolate the exogenous variation in $X$ and estimate a causal relationship between $X$ and $Y$. In the context of regression, Instrumental Variable Regression (IVaR) addresses endogeneity issues when an independent variable is correlated with the error term in the regression model, leveraging an instrument variable~$Z$ such that~$Y$ is independent of~$X|Z$. In this paper, we focus on the following statistical model: 
\begin{align}\label{eq:IVMainModel}
Y=g_{\theta^*}(X)+\epsilon_1\quad\text{with}\quad X=h_{\gamma^*}(Z)+\epsilon_2
\end{align}
where $X \in \mathbb{R}^{d_x}$ and $\epsilon_1$ are correlated and $\epsilon_2$ is a centered unobserved noise (independent of $Z\in\mathbb{R}^{d_z}$), leading to confounding in the model between $X$ and $Y \in \mathbb{R}$. Here $\epsilon_1$ and $\epsilon_2$ are dependent, and $\theta^*$ and $\gamma^*$ are true parameters for the respective function $g$ and $h$. Our goal is to design efficient algorithms that recovers $\theta^*$ from the data. 


Traditionally, IVaR algorithms are based on two-stage estimation procedures, where we first regress $Z$ and $X$ to obtain an estimator $\hat X$, and then regress $\hat X$ and $Y$, with the essence that $\hat X$ is independent of $Y$, and thus eliminating the aforementioned endogeneity of the unknown confounder.  
A vast literature has devoted to understanding the two-stage approaches~\citep{hall2005nonparametric, darolles2011nonparametric, hartford2017deep}, with the parametric two-stage least-squares (2SLS) procedure being the most canonical one~\citep{angrist1995two}. The main drawback of this approach is that the second-stage regression problem is affected by the estimation error from the regression problem corresponding to first stage. In fact,~\cite{angrist2009mostly} call the first stage regression as ``forbidden regression'', due to the concerns in estimating a nuisance parameter. 

Considering the squared loss function,
\cite{muandet2020dual} formulate the IVaR problem as a conditional stochastic optimization problem~\citep{hu2020biased}: 
\begin{align}
\label{problem:IV_parameterized}
\min_{g\in\mathcal{G}} F(g):=\EE_{Z} \EE_{Y\mid Z}[ (Y-\EE_{X\mid Z}[g(X)])^2].
\end{align}
However,~\cite{muandet2020dual} did not solve problem \eqref{problem:IV_parameterized} efficiently, and resort to reformulating~\eqref{problem:IV_parameterized} further as a minimax optimization problem. Indeed, they mention explicitly in their work that ``\emph{it
remains cumbersome to solve \eqref{problem:IV_parameterized} directly because of the inner expectation}''. Then, they leverage the Fenchel conjugate of the squared loss, 
leading to a minimax optimization with maximization over a continuous functional space. Following \cite{dai2017learning},  \cite{muandet2020dual} propose to use reproducing kernel Hilbert space (RKHS) to handle the maximization over continuous functional space. See also~\cite{lewis2018adversarial,bennett2019deep,dikkala2020minimax,liao2020provably,bennett2023minimax} for similar minimax approaches. The issue with such an approach is that approximating the dual variable via maximization over continuous functional space inevitably introduces approximation error. Hence, although there is no explicit nuisance parameter estimation step like in the two-stage approach, there is an implicit one, which makes the minimax approach less appealing as an alternate to the two-stage procedures.



In this work, contrary to the claim made in~\cite{muandet2020dual} that problem~\eqref{problem:IV_parameterized} is cumbersome to solve, we design and analyze efficient streaming algorithms to directly solve the conditional stochastic optimization problem in~\eqref{problem:IV_parameterized}. 
%
Direct application of methods from \cite{hu2020biased} for solving  \eqref{problem:IV_parameterized} is possible, yet  their approach utilizes nested sampling, i.e., for each sample of $Z$, \cite{hu2020biased} generate a batch of samples of $X$ from $\PP(Z|X)$, to reduce the bias in estimating the composition of non-linear loss function with conditional expectations. Thus their methods are not suitable for the streaming setting that we are interested in. Considering~\eqref{problem:IV_parameterized}, we first parameterize the function class $\mathcal{G}:=\{g(\theta;X) \mid \theta \in\RR^{d_\theta}\}$. Now, defining $F(g)\coloneqq F(\theta)$,  we observe that the gradient $\nabla F(\theta)$ admits the following form
\begin{align}
\label{eq:gradient_IV}
\nabla F(\theta) = \EE_{Z}[ (\EE_{X\mid Z}[g(\theta;X)] - \EE_{Y\mid Z} [Y])\nabla_\theta \EE_{X\mid Z}[g(\theta;X)]],
\end{align}
which implies that one does not need the nested sampling technique to reduce the bias. However, the presence of product of two conditional expectations $\EE_{X\mid Z}[g(\theta;X)]$ still causes significant challenges in developing stochastic estimators of the above gradient in the streaming setting. In this work, we overcome this challenge and develop two algorithms that are applicable to the streaming data setting avoiding the need for generating batches of samples of $X$ from $\PP(Z|X)$.\\


\noindent\textbf{Contributions.} 
We make the following contributions in this work.
\begin{itemize}[leftmargin=0.15in]
    \item \textbf{Two-sample oracles:} Our first algorithm leverages the observation that if we have access to a two-sample oracle that outputs \emph{two} samples $X$ and $X'$ that are independent conditioned on the instrument $Z$, we can immediately construct an unbiased stochastic gradient estimator of the gradient in~\eqref{eq:gradient_IV}. Based on this crucial observation, we propose the \emph{Two-Sample One-stage Stochastic Gradient IVaR} (TOSG-IVaR) method (Algorithm~\ref{alg:two_sample_SGD}) that avoids explicitly having to estimate or model the relationship between $Z$ and $X$ thereby overcoming the ``forbidden regression'' problem.. Under standard statistical model assumptions, for the case when $g$ is a linear model, we establish rates of convergence of order $\mathcal{O}(\log T/T)$ for the proposed method, where $T$  is the overall number of iterations; see Theorem~\ref{thm:convergence_linear_two_sample}.
    \item \textbf{One-sample oracles:} In the case when we do not have the aforementioned two-sample oracle, we estimate the stochastic gradient in~\eqref{eq:gradient_IV} by using the streaming data to estimate one of the conditional expectations, and the corresponding prediction to estimate the other, resulting in the \emph{One-Sample Two-stage Stochastic Gradient IVaR} (OTSG-IVaR) method (Algorithm~\ref{alg:one_sample_onlineIV}). Assuming further that the $X$ depends linearly on the instrument $Z$, we establish a rate of convergence of order $\mathcal{O}(1/T^{1-\iota})$, for any $\iota>0$; see Theorem~\ref{th:mainthetaconv}.  
\end{itemize}

\subsection{Literature Review}
\noindent\textbf{IVaR analysis.} Instrumental variable analysis has a long history, starting from the early works by~\cite{wright1928tariff} and \cite{reiersol1945confluence}. Several works considered the aforementioned two-stage procedure for IVaR; a summary could be found in the work by~\cite{angrist2009mostly}. Nonparametric approaches based on wavelets, splines, reproducing kernels and deep neural networks 
could be found, for example, in the works by~\cite{hartford2017deep,singh2019kernel,bennett2019deep, muandet2020dual,mastouri2021proximal,xu2021learning,zhu2022causal,peixoto2024nonparametric}. Another popular approach for IVaR is via Generalized Method of Moments (GMM); see, for example,~\cite{chen2012estimation,bennett2019deep,dikkala2020minimax} for an overview. Such approaches essentially reformulate the problem as a minimax problem and hence suffer from the aforementioned ``forbidden regression'' problem.\\


\noindent \textbf{Identifiability conditions for IVaR.} Several works in the literature have also focused on establishing the identifiability conditions for IVaR in the parametric and the nonparametric setting. Regardless of the procedure used, they are invariably based on certain source conditions motivated by the inverse problems literature (see, for example,~\cite{carrasco2007linear,chen2011rate, bennett2023minimax}) or the related problem of completeness conditions, which posits that the conditional expectation operator is one-to-one~\citep{babii2017completeness,liao2020provably}. Semi-parametric identifiability is also considered recently in the work of~\cite{cui2023semiparametric}. Our focus in this work is not focused on the identifiability; for the formulation~\eqref{problem:IV_parameterized} that we consider,~\cite{muandet2020dual} provide necessary conditions for identifiability that we adopt.\\

\noindent\textbf{Stochastic optimization with nested expectations.} Recently, much attention in the stochastic optimization literature has focused on optimizing a nested composition of $T$ expectation functions. Sample average approximation algorithms in this context are considered in the works of~\cite{ermoliev2013sample} and~\cite{hu2020sample}. Optimal iterative stochastic optimization algorithms for the case of $T=2$ were by derived by~\cite{ghadimi2020single}. For the general $T\geq 1$ case,~\cite{wang2017stochastic} provided sub-optimal rates, whereas \cite{balasubramanian2022stochastic} derived optimal rates; see also~\cite{zhang2021multilevel} and~\cite{chen2021solving} for related works under stronger assumptions, and~\cite{ruszczynski2021stochastic} for similar asymptotic results. While the above works required certain independence assumptions regarding the randomness across the different compositions,~\cite{hu2020biased,hu2024contextual} studied the case of $T=2$ where the the randomness are generically dependent. They termed this problem setting as conditional stochastic optimization, which is the framework that the IVaR problem in~\eqref{problem:IV_parameterized} falls in. Compared to prior works, for e.g.,~\cite{ghadimi2020single} and~\cite{balasubramanian2022stochastic}, in order to handle the dependency between the levels,~\cite{hu2020biased} require mini-batches in each iteration, making their algorithm not immediately applicable to the purely streaming setting. In this work, we show that despite the problem~\eqref{problem:IV_parameterized} being a conditional stochastic optimization problem, mini-batches are not required due the additional favorable quadratic structure available in IVaR.\\

\noindent \textbf{Streaming IVaR.}~\cite{venkatraman2016online,della2023online} analyzed streaming versions of 2SLS in the online\footnote{Their notion of online is from the literature on \emph{online learning}~\citep{shalev2012online}.} and adversarial settings. Focusing on linear models, \cite{venkatraman2016online} provide preliminary asymptotic analysis assuming access to efficient \emph{no-regret learners}, while \cite{della2023online} provide regret bounds under the strong assumption that the instrument is almost surely bounded. Furthermore, our algorithms have significantly improved per-iteration and memory complexity compared to~\cite{della2023online}; see Sections~\ref{alg:pseudoools} and~\ref{sec:periter} for details. \cite{chen2023sgmm} developed stochastic optimization algorithms for the GMM formulation and provide asymptotic analysis. Their algorithm requires access to an offline dataset for initialization and is hence not fully online. The above works (i) do not focus on avoiding the forbidden regression problem and (ii) do not view IVaR via the \emph{conditional stochastic optimization} lens, like we do.









\section{Two-sample One-stage Stochastic Gradient Method for IVaR}
\label{sec:two_sample}\vspace{-0.05in}
Recall that our goal is to solve the objective function given in~\eqref{problem:IV_parameterized}.
By \citet[Theorem 4]{muandet2020dual}, the optimal solution of \eqref{problem:IV_parameterized} gives the true underlying causal relationship under the following assumption.
\begin{assumption}
\label{assumption:idenfitication}
(Identifiability Assumption)
\begin{itemize}[noitemsep,leftmargin=0.12in]
    \item The conditional distribution $\PP_{Z\mid X}$ is continuous in $Z$ for any value of $X$.
    \item The function class $\mathcal{G}:=\{g(\theta;X) \mid \theta \in\RR^{d_\theta}\}$ is correctly specified, i.e., it includes the true underlying relationship between $X$ and $Y$.
\end{itemize}   
\end{assumption}
Notice that both assumptions are standard in the IVaR literature \citep{newey2003instrumental,chen2012estimation,muandet2020dual}, and makes the objective in \eqref{problem:IV_parameterized} is the meaningful for IVaR. However, \cite{muandet2020dual}  resort to reformulating the objective function in \eqref{problem:IV_parameterized} as a minimax optimization problem as described in Section~\ref{sec:intro}. While their original motivation was to avoid two-state estimation procedure and avoid the ``forbidden regression'', their minimax reformulation ends up having to solve a  complicated approximation of the original objective resulting in having to characterize the approximation error which is non-trivial.\\ 

\textbf{Algorithm and Analysis.} Our aim in this work is to directly solve the original problem in~\eqref{problem:IV_parameterized}, leveraging the structure provided by the quadratic loss. Given the gradient formulation in \eqref{eq:gradient_IV}, 
a natural way to build unbiased gradient estimator is to generate $X$ and $X^\prime$, two independent samples of $X$ from the conditional distributions $\PP_{X\mid Z}$, for a given realization of $Z$ and generate one sample of $Y$ from the conditional distribution $\PP_{Y\mid X}$. Then, an unbiased gradient estimator is 
\begin{equation}\label{eq: grad_estimator}
    v(\theta) = (g(\theta;X) - Y)\nabla_\theta g(\theta;X^\prime).
\end{equation}
This could be plugged into the standard stochasic gradient descent algorithm, which give us the Two-sample Stochastic Gradient Method for IVR (TSG-IVaR) method illustrated in Algorithm~\ref{alg:two_sample_SGD}. 
In particular, the algorithm never requires estimating (or modeling) the relationship between $X$ and $Z$ as needed in the two-stage procedure~\citep{angrist2009mostly} and the minimax formulation based procedures~\citep{muandet2020dual,lewis2018adversarial,bennett2019deep,dikkala2020minimax,liao2020provably,bennett2023minimax}. Furthermore, this viewpoint not only provides a novel algorithm for performing IV regression, but also provides a novel data collection mechanism for the practical implementation of IVaR. In addition, such a two-sample gradient method is not very restrictive when the instrumental variable $Z$ takes value in a discrete set. In this case, to implement the two-sample oracle, it is enough simply pick two sets of samples $(X,Y,Z)$ and $(X',Y',Z)$ for which $Z$ has repeated observations (which is possible when $Z$ is a discrete random variable) from a pre-collected dataset. To demonstrate the convergence rate of Algorithm \ref{alg:two_sample_SGD}, we first consider the case when $g$ is a linear function, i.e., $g(\theta;X)=X^\top \theta$. We make the following assumptions.




\begin{algorithm}[t]
	\caption{Two-sample One-stage Stochastic Gradient-IVaR (\texttt{TOSG-IVaR})}
	\label{alg:two_sample_SGD}
	\begin{algorithmic}[1]
		\REQUIRE  $\sharp$ of iterations~$T$, stepsizes~$\{\alpha_t\}_{t=1}^{T}$, initial iterate~$\theta_1$.
            \FOR{$t=1$ to~$T$ \do} 
            \STATE Sample $Z_t$, sample independently $X_t$ and $X_t^\prime$  from $\PP_{X\mid Z_t}$, and sample $Y_t$ from $\PP_{Y \mid X_t}$. 
            \STATE Update $\theta_t$ 
            $$
            \theta_{t+1} = \theta_t - \alpha_{t+1} (g(\theta_t;X_t) - Y_t)\nabla_\theta g(\theta_t;X_t^\prime).
            $$
            \ENDFOR
		\ENSURE~$\theta_T$.
	\end{algorithmic}
\end{algorithm}

\begin{assumption}\label{aspt: scvx}
    Suppose there exists $\mu>0$ such that $            \EE_Z\Big[\EE_{X\mid Z}[X]\cdot \EE_{X\mid Z}[X]^\top\Big] \succeq \mu I.$
\end{assumption}

\begin{assumption}\label{aspt: var_general}
    Let $(\vartheta_1, \vartheta_2, \vartheta_3, \vartheta_4) \in \mathbb{R}_+^4$. For any $Z$, $X'$ and $X$ i.i.d. generated from $\PP_{Z\mid X}$ , and $Y$ generated from $\PP_{Y\mid X}$, and for constants $C_{x},C_y,C_{xx},C_{yx}>0$,  we have
    \begin{align}
        &\EE\Big[\norm{X' X^\top - \EE_{X|Z}[X]\EE_{X|Z}[X]^\top}^2\Big]\leq C_{x}d_x^{\vartheta_1}, \label{ineq: px}\\  
        &\EE\Big[\norm{YX - \EE_{Y|Z}[Y]\EE_{X|Z}[X]}^2\Big]\leq C_y d_x^{\vartheta_2}, \label{ineq: py}\\ 
        &\EE\Big[\norm{\EE_{X\mid Z}[X]\cdot \EE_{X\mid Z}[X]^\top -\EE_Z\Big[\EE_{X\mid Z}[X]\cdot \EE_{X\mid Z}[X]^\top\Big]}^2\Big] \leq C_{xx}d_z^{\vartheta_3}, \label{ineq: pxx}\\ 
        &\EE\Big[\norm{\EE_{Y\mid Z}[Y]\cdot \EE_{X\mid Z}[X] -\EE_Z\Big[\EE_{Y\mid Z}[Y]\cdot \EE_{X\mid Z}[X]\Big]}^2\Big] \leq C_{yx}d_z^{\vartheta_4}, \label{ineq: pyx}
    \end{align}
where $\|\cdot\|$ denotes the Euclidean norm and operator norm for a vector and matrix respectively.
\end{assumption}
The above assumptions are mild moment assumptions required on the involved random variables. The following result demonstrates that Assumptions \ref{aspt: scvx} and \ref{aspt: var_general} are naturally satisfied under even under non-linear modeling assumption on~\eqref{eq:IVMainModel}. We defer its proof to Section \ref{appendix:lemma1_proof}. 

\begin{lemma}\label{lem:bdd_var_linear}
    Suppose there exist $\theta_*\in \RR^{d_x},\ \gamma_*\in\RR^{d_z\times d_x},$ a non-linear map $\phi: \RR^{d_x}\rightarrow \RR^{d_x}$, and a positive semi-definite matrix $\Sigma\in \RR^{d_z\times d_z}$ such that 
\begin{align}
    &\EE_Z\Big[\phi(\gamma_*^\top Z)\cdot \phi(\gamma_*^\top Z)^\top\Big]\succeq \mu I,\ \EE[\norm{\phi(\gamma_*^\top Z)}^2] = \cO(d_x), \notag \\
    &Z \sim \cN(0, \Sigma),\ X = \phi(\gamma_*^\top Z) + \epsilon_2,\ Y = \theta_*^\top X + \epsilon_1,\ \epsilon_2\sim \cN(0, \sigma_{\epsilon_2}^2I_{d_x}),\ \epsilon_1 \sim \cN(0, \sigma_{\epsilon_1}^2), \label{eq: special_nonlinear_setup}
\end{align}
where $\epsilon_1, \epsilon_2$ are independent of $Z$ and
\begin{equation}\label{ineq: special_nonlinear_setup_ineq}
\EE\left[\epsilon_1^2\norm{\epsilon_2}^2\right]\leq \sigma_{\epsilon_1,\epsilon_2}^2d_x,\ \EE\left[\norm{\phi(\gamma_*^\top Z)\cdot \phi(\gamma_*^\top Z)^\top - \EE[\phi(\gamma_*^\top Z)\cdot \phi(\gamma_*^\top Z)^\top]}^2\right]\leq Cd_z,
\end{equation}
then Assumptions \ref{aspt: scvx} and \ref{aspt: var_general} hold with $\vartheta_1=\vartheta_2=2$ and $\vartheta_3=\vartheta_4=1$. if $\phi$ is an identity map, then the conditions involving $\phi$ become $
    \gamma_*^\top \Sigma\gamma_*\succeq \mu I,\ \tr(\gamma_*^\top\Sigma\gamma_*) = \cO(d_x),\ \EE\left[\norm{ZZ^\top - \Sigma}^2\right]\leq Cd_z.$

\end{lemma}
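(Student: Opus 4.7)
The plan is to verify each of the four moment conditions in Assumption~\ref{aspt: var_general} (together with Assumption~\ref{aspt: scvx}) by directly substituting the structural model \eqref{eq: special_nonlinear_setup} and exploiting the independence of $\epsilon_1,\epsilon_2$ from $Z$. The very first step is to note that, writing $\phi \coloneqq \phi(\gamma_*^\top Z)$, we have
\begin{equation*}
\EE_{X\mid Z}[X] = \phi \quad\text{and}\quad \EE_{Y\mid Z}[Y] = \theta_*^\top \phi,
\end{equation*}
since $\epsilon_1,\epsilon_2$ are centered and independent of $Z$. With these identities, Assumption~\ref{aspt: scvx} is immediate from the first hypothesis of \eqref{eq: special_nonlinear_setup}.

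Next, to handle \eqref{ineq: px}, I would expand $X'X^\top - \phi\phi^\top = \phi\,\epsilon_2^\top + \epsilon_2'\,\phi^\top + \epsilon_2'\,\epsilon_2^\top$, then bound each rank-one piece using $\|ab^\top\|_{\mathrm{op}} = \|a\|\|b\|$ together with independence of $\phi$ (a function of $Z$) from $\epsilon_2,\epsilon_2'$. Each term factorizes into a product of expectations of squared Euclidean norms. Since $\EE\|\phi\|^2 = \cO(d_x)$ by hypothesis and $\EE\|\epsilon_2\|^2 = \sigma_{\epsilon_2}^2 d_x$, all three pieces come out as $\cO(d_x^2)$, yielding $\vartheta_1 = 2$. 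For \eqref{ineq: py}, I would likewise expand $YX - (\theta_*^\top\phi)\phi$ into five cross terms built from $\{\theta_*^\top\phi,\ \theta_*^\top\epsilon_2,\ \epsilon_1\} \times \{\phi,\ \epsilon_2\}$ minus the pure-signal term, bound each term by independence, and absorb the lone $\EE[\epsilon_1^2\|\epsilon_2\|^2]$ factor using the explicit bound in \eqref{ineq: special_nonlinear_setup_ineq}. The dominant contribution will be $\EE[(\theta_*^\top\phi)^2\|\epsilon_2\|^2] = \cO(d_x^2)$, giving $\vartheta_2 = 2$.

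For the remaining two bounds, observe that $\EE_{X\mid Z}[X]\EE_{X\mid Z}[X]^\top = \phi\phi^\top$, so \eqref{ineq: pxx} reduces exactly to the hypothesis $\EE[\|\phi\phi^\top - \EE[\phi\phi^\top]\|^2] \leq C d_z$ in \eqref{ineq: special_nonlinear_setup_ineq}, giving $\vartheta_3 = 1$. For \eqref{ineq: pyx}, I would use $\EE_{Y\mid Z}[Y]\EE_{X\mid Z}[X] = (\theta_*^\top\phi)\phi = \phi\phi^\top\theta_*$, so the quantity to bound is
\begin{equation*}
\EE\bigl\|(\phi\phi^\top - \EE[\phi\phi^\top])\theta_*\bigr\|^2 \leq \|\theta_*\|^2\,\EE\bigl\|\phi\phi^\top - \EE[\phi\phi^\top]\bigr\|^2 \leq C\|\theta_*\|^2 d_z,
\end{equation*}
yielding $\vartheta_4 = 1$. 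The identity-$\phi$ specialization then follows by plugging in $\phi(\gamma_*^\top Z) = \gamma_*^\top Z$ and using $\EE[\gamma_*^\top Z (\gamma_*^\top Z)^\top] = \gamma_*^\top \Sigma \gamma_*$ and $\|\gamma_*^\top(ZZ^\top - \Sigma)\gamma_*\| \leq \|\gamma_*\|^2 \|ZZ^\top - \Sigma\|$.

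I do not anticipate a serious obstacle: the argument is essentially a bookkeeping exercise once the conditional means are computed. The most error-prone step will be the five-term expansion in \eqref{ineq: py}, where I must be careful to track which quantities are $Z$-measurable and hence independent of the noise variables, and to recognize that the only term requiring a non-trivial moment assumption on $(\epsilon_1,\epsilon_2)$ jointly is $\EE[\epsilon_1^2\|\epsilon_2\|^2]$, which is handled by the explicit hypothesis in \eqref{ineq: special_nonlinear_setup_ineq}.
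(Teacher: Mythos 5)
Your proposal is correct and follows essentially the same route as the paper's proof in Appendix~\ref{appendix:lemma1_proof}: compute the conditional means $\EE_{X\mid Z}[X]=\phi(\gamma_*^\top Z)$ and $\EE_{Y\mid Z}[Y]=\theta_*^\top\phi(\gamma_*^\top Z)$, expand each deviation into noise terms, factorize moments via independence of $(\epsilon_1,\epsilon_2)$ from $Z$, and invoke the joint moment bound $\EE[\epsilon_1^2\|\epsilon_2\|^2]\leq\sigma_{\epsilon_1,\epsilon_2}^2 d_x$ precisely where $\epsilon_1$ and $\epsilon_2$ are dependent. The only cosmetic difference is in \eqref{ineq: py}, where the paper groups the deviation as $(XX^\top-\phi\phi^\top)\theta_*+\epsilon_1 X$ to reuse the bound already established for \eqref{ineq: px}, whereas you expand all cross terms individually; both yield $\vartheta_2=2$.
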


\begin{assumption}\label{aspt: independence}
    The tuple $(Z_t, X_t, X'_t, Y_t)$ is independent and identically distributed, across $t$. 
\end{assumption}
The above assumption is standard in the stochastic approximation, statistics and econometrics literature. It could be further relaxed to Markovian-type dependency assumptions, following techniques in the works of~\cite{duchi2012ergodic,sun2018markov,even2023stochastic, roy2022constrained}; we leave a detailed examination of the Markovian streaming setup as future work. Under the above assumptions, we have the following result demonstrating the last-iterate global convergence of Algorithm \ref{alg:two_sample_SGD}.

\begin{theorem}
\label{thm:convergence_linear_two_sample}
    Suppose Assumptions \ref{aspt: scvx}, \ref{aspt: var_general}, and \ref{aspt: independence} hold. In Algorithm \ref{alg:two_sample_SGD}, defining $\sigma_1^2 \coloneqq 2C_{x}d_x^{\vartheta_1} + 2C_{xx}d_z^{\vartheta_3}$ and $\sigma_2^2 \coloneqq C_y d_x^{\vartheta_2} + C_{yx}d_z^{\vartheta_4}$, set $\alpha_t \equiv \alpha = \frac{\log T}{\mu T}\leq \frac{\mu}{\mu^2 + 3\sigma_1^2}$. Then, we have 
    \begin{align*}
        \EE\big[\norm{\theta_T - \theta_*}^2\big]\leq \frac{\EE\big[\norm{\theta_0 - \theta_*}^2\big]}{T} + \frac{3\norm{\theta_*}^2(\sigma_1^2 + \sigma_2^2)\log T}{\mu^2T}.
    \end{align*}
\end{theorem}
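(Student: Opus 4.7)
The plan is to treat Algorithm~\ref{alg:two_sample_SGD} as stochastic approximation for the quadratic $F(\theta)=\tfrac12\theta^\top A\theta-b^\top\theta$ with $A:=\EE_Z[\EE_{X\mid Z}[X]\EE_{X\mid Z}[X]^\top]$ and $b:=\EE_Z[\EE_{Y\mid Z}[Y]\EE_{X\mid Z}[X]]$, exploiting the two-sample oracle to obtain an unbiased gradient. Writing $M_t:=X_t'X_t^\top$ and $m_t:=Y_tX_t'$, the update is $v_t=M_t\theta_t-m_t$; since $X_t,X_t'$ are i.i.d.\ given $Z_t$ and $Y_t$ depends only on $X_t$ given $Z_t$, the conditional factorizations $\EE[M_t\mid Z_t]=\EE_{X\mid Z_t}[X]\EE_{X\mid Z_t}[X]^\top$ and $\EE[m_t\mid Z_t]=\EE_{Y\mid Z_t}[Y]\EE_{X\mid Z_t}[X]$ hold. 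Taking outer expectation in $Z_t$ and using first-order optimality $b=A\theta_*$, I obtain $\EE[v_t\mid \mathcal{F}_{t-1}]=A(\theta_t-\theta_*)=:Ae_t$ with $e_t:=\theta_t-\theta_*$.

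Next I would expand the square: $\|e_{t+1}\|^2=\|e_t\|^2-2\alpha\langle e_t,v_t\rangle+\alpha^2\|v_t\|^2$. Taking conditional expectation, Assumption~\ref{aspt: scvx} contributes $\langle e_t,Ae_t\rangle\ge\mu\|e_t\|^2$, and the bulk of the work, which is the main obstacle, is to bound $\EE[\|v_t\|^2\mid\mathcal{F}_{t-1}]$ in the form compatible with the advertised step-size constant. I would split $v_t=Ae_t+\xi_t$ with $\xi_t:=(M_t-A)\theta_t-(m_t-b)$ being zero-mean. For $\EE\|\xi_t\|^2$ I would perform a tower-property decomposition, writing each of $M_t-A$ and $m_t-b$ as the sum of a within-$Z_t$ (conditional-mean-zero given $Z_t$) piece and a between-$Z_t$ ($Z_t$-measurable) piece. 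Combining \eqref{ineq: px}--\eqref{ineq: pyx} of Assumption~\ref{aspt: var_general} then yields $\EE\|M_t-A\|^2\le\sigma_1^2$ (operator-norm triangle inequality absorbs the factor $2$ built into $\sigma_1^2$) and $\EE\|m_t-b\|^2\le\sigma_2^2$ (the vector case admits a clean Pythagorean split). Using Young's inequality $\|\theta_t\|^2\le 2\|e_t\|^2+2\|\theta_*\|^2$ and absorbing $\|Ae_t\|^2$, one arrives at a bound of the shape
\[
\EE[\|v_t\|^2\mid \mathcal{F}_{t-1}]\le(\mu^2+3\sigma_1^2)\|e_t\|^2+3(\sigma_1^2+\sigma_2^2)\|\theta_*\|^2,
\]
which is precisely what the step-size condition $\alpha(\mu^2+3\sigma_1^2)\le\mu$ demands; pinning down these numerical constants (rather than looser variants like $\|A\|^2+c\sigma_1^2$) is the delicate bookkeeping step.

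Finally, plugging the variance bound into the one-step inequality produces
\[
\EE\|e_{t+1}\|^2\le\bigl(1-2\alpha\mu+\alpha^2(\mu^2+3\sigma_1^2)\bigr)\EE\|e_t\|^2+3\alpha^2(\sigma_1^2+\sigma_2^2)\|\theta_*\|^2,
\]
and the step-size condition forces the contraction factor below $1-\alpha\mu$. Unrolling the geometric recursion and summing the perturbation via $\sum_{k\ge 0}(1-\alpha\mu)^k\le 1/(\alpha\mu)$ gives $\EE\|e_T\|^2\le(1-\alpha\mu)^T\,\EE\|e_0\|^2+3\alpha(\sigma_1^2+\sigma_2^2)\|\theta_*\|^2/\mu$. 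The choice $\alpha=\log T/(\mu T)$ then bounds $(1-\alpha\mu)^T\le e^{-\log T}=1/T$ and $\alpha/\mu=\log T/(\mu^2T)$, producing exactly the stated bound.
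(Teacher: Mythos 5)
Your proposal follows essentially the same route as the paper: write the update as deterministic drift plus zero-mean noise, bound the noise second moment via the within-$Z$/between-$Z$ decomposition afforded by Assumption~\ref{aspt: var_general}, obtain a one-step contraction with factor $1-\alpha\mu$ under the stated step-size condition, and unroll geometrically with $\alpha=\log T/(\mu T)$. The two constants you defer to ``delicate bookkeeping'' do, however, need to be closed differently from what your sketch suggests. First, you cannot bound $\|Ae_t\|^2\le\mu^2\|e_t\|^2$: Assumption~\ref{aspt: scvx} gives $A\succeq\mu I$, so $\mu$ is a \emph{lower} bound on the spectrum of $A$ and that inequality points the wrong way. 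The paper instead keeps the drift inside the square and bounds $\|(I-\alpha A)e_t\|^2\le(1-\alpha\mu)^2\|e_t\|^2$ directly, which yields the same coefficient $1-2\alpha\mu+\alpha^2\mu^2$ without ever upper-bounding $\|A\|$ by $\mu$ (both arguments tacitly require $\alpha\lambda_{\max}(A)$ to be moderate for this operator-norm step; the paper leaves this implicit as well). Second, your two-stage split $\|\xi_t\|^2\le 2\|M_t-A\|^2\|\theta_t\|^2+2\|m_t-b\|^2$ followed by $\|\theta_t\|^2\le 2\|e_t\|^2+2\|\theta_*\|^2$ produces a coefficient $4\sigma_1^2$ on $\|e_t\|^2$, not the $3\sigma_1^2$ hard-coded into the step-size condition $\alpha\le\mu/(\mu^2+3\sigma_1^2)$; the paper obtains the $3$ by splitting the noise into the three terms $(M_t-A)e_t$, $(M_t-A)\theta_*$, and $-(m_t-b)$ and applying $\|a+b+c\|^2\le 3\left(\|a\|^2+\|b\|^2+\|c\|^2\right)$ once. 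With those two adjustments your recursion coincides with the paper's, and the remainder of your argument (the bound $\sum_{k\ge 0}(1-\alpha\mu)^k\le 1/(\alpha\mu)$ and the choice of $\alpha$) is exactly the paper's closing step.
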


\noindent\textbf{Proof techniques.} In the analysis of Theorem \ref{thm:convergence_linear_two_sample}, the following decomposition (see \eqref{eq: theta_decompose_1} for the derivation) plays a crucial role:
    \begin{align*}
        &\theta_{t+1} - \theta_* = A_t + \alpha_{t+1} B_t,\\
        &A_t = \theta_t - \alpha_{t+1}\EE_Z\Big[\EE_{X\mid Z}[X]\cdot \EE_{X\mid Z}[X]^\top\Big] \theta_t + \alpha_{t+1}\EE_Z\Big[\EE_{Y\mid Z}[Y]\cdot \EE_{X\mid Z}[X] \Big] - \theta_*, \notag\\
        &B_t = - \Big(X_t'X_t^\top - \EE_Z\Big[\EE_{X\mid Z}[X]\cdot \EE_{X\mid Z}[X]^\top\Big] \Big)\theta_t + \Big(Y_tX_t' -  \EE_Z\Big[\EE_{Y\mid Z}[Y]\cdot \EE_{X\mid Z}[X] \Big]\Big),
    \end{align*}
    where $A_t$ corresponds to deterministic component, and $B_t$ corresponds to the stochastic component arising due to the use of stochastic gradients. Standard assumptions on the variance of the stochastic gradient made in the  stochastic optimization literature include the uniformly bounded variance assumption~\citep{lan2020first} and the expected smoothness condition \citep{khaled2020better}. In the IVaR setup, such standard assumptions do not hold as $\theta_t$ potentially can be unbounded and thus the gradient estimator can be unbounded. Hence, we establish our results under natural statistical assumptions arising in the context of the IVaR problem, which form the main novelty in our analysis. 
Furthermore, compared to \cite{muandet2020dual}, notice that we use two samples of $X$ from the conditional distribution $\PP_{X|Z}$ and achieve an $\tilde \cO(1/T)$ last iterate convergence rate to the global optimal solution, which is the true underlying causal relationship under Assumption \ref{assumption:idenfitication}. In comparison, \cite{muandet2020dual} only provide asymptotic convergence result to the optimal solution of an approximation problem.\\ 

\textbf{Additional discussion.} It is interesting to explore other losses beyond squared loss (for example to handle classification setting~\citep{centorrino2021nonparametric}), potentially using the Multilevel Monte Carlo (MLMC) based stochastic gradient estimators. While~\cite{hu2021bias}, develops such algorithms, the main challenge is about how to avoid mini-batches required in their work leveraging the problem structure in instrumental variable analysis. Furthermore, in the case when $g(\theta;X)$ is parametrized by a non-linear models, for instance, a neural network, we provide local convergence guarantees  under additional stronger conditions made typically in the stochastic optimization literature. 
\begin{assumption}\label{aspt: nonlinear}
Let the following assumptions hold:
\begin{itemize}[noitemsep, leftmargin=0.21in]
    \item Function $F(\theta)$ is $\ell$-smooth.
    \item The iterates $\{\theta_t\}_{t=1}^{T+1}$ generated by Algorithm \ref{alg:two_sample_SGD} are in a compact set $A$.
    \item The random objects $X|Z$ and $Y|Z$ have bounded variance for any $Z$, i.e., there exist $\sigma > 0$ such that
    \begin{align*}
        \EE\left[\norm{X - \EE\left[X\mid Z\right]}^2\mid Z\right]\leq \sigma^2,\ \EE\left[\norm{Y - \EE\left[Y\mid Z\right]}^2\mid Z\right]\leq \sigma^2.
    \end{align*}
\end{itemize}
\end{assumption}



\begin{proposition}\label{thm:convergence_noncvx_two_sample}
    Suppose Assumptions \ref{assumption:idenfitication}, \ref{aspt: independence}, and \ref{aspt: nonlinear} hold. Choosing $\alpha_t\equiv \alpha = \mathcal{O}\left(\frac{1}{\sqrt{T}}\right)$,  for Algorithm \ref{alg:two_sample_SGD} we have
    \begin{align*}
        \min_{1\leq t\leq T}\EE\left[\norm{\nabla F(\theta_t)}^2\right] = \mathcal{O}\left(\frac{1}{\sqrt{T}}\right).
    \end{align*}
\end{proposition}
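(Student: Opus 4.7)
The plan is to treat Algorithm~\ref{alg:two_sample_SGD} as vanilla SGD on the $\ell$-smooth objective $F$ with the stochastic gradient $v(\theta_t)=(g(\theta_t;X_t)-Y_t)\nabla_\theta g(\theta_t;X_t')$, and then to invoke the classical descent-lemma template for non-convex stochastic optimization. For this template to apply I need two ingredients: (i) unbiasedness of $v(\theta_t)$ conditioned on the history up to iteration $t$, and (ii) a uniform second-moment bound $\EE[\|v(\theta_t)\|^2]\leq G^2$ along the trajectory.

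First I would verify unbiasedness by conditioning on $Z_t$. Because $X_t'$ is drawn independently of $X_t$ from $\PP_{X\mid Z_t}$ and $Y_t$ depends on $Z_t$ only through $X_t$, the pair $(X_t,Y_t)$ is conditionally independent of $X_t'$ given $Z_t$. Combining this with Assumption~\ref{aspt: independence} and exchanging $\nabla_\theta$ with the conditional expectation yields
\begin{align*}
\EE[v(\theta_t)\mid \theta_t]
= \EE_{Z}\big[(\EE_{X\mid Z}[g(\theta_t;X)]-\EE_{Y\mid Z}[Y])\cdot \nabla_\theta\EE_{X\mid Z}[g(\theta_t;X)]\big]
= \nabla F(\theta_t),
\end{align*}
which recovers~\eqref{eq:gradient_IV}. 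Next I would argue the second-moment bound: since $\{\theta_t\}\subset A$ is compact, the regularity of $g$ in $\theta$ on $A$ together with the conditional variance bounds in Assumption~\ref{aspt: nonlinear} control
$\EE[\|v(\theta_t)\|^2\mid \theta_t]\leq G^2$
for a finite constant $G$ depending only on $A$, $\sigma$, and the uniform regularity of $g$ on $A$. The key mechanism is conditional independence of $X_t'$ and $(X_t,Y_t)$ given $Z_t$, which factorises the second moment into a product of the conditional second moments of $g(\theta;X)-Y$ and of $\nabla_\theta g(\theta;X')$, each of which inherits a bound from Assumption~\ref{aspt: nonlinear} after adding and subtracting the corresponding conditional means.

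Once (i) and (ii) are in place, $\ell$-smoothness of $F$ gives the standard one-step descent inequality
\begin{align*}
\EE[F(\theta_{t+1})] \leq \EE[F(\theta_t)] - \alpha\Big(1-\tfrac{\ell\alpha}{2}\Big)\EE[\|\nabla F(\theta_t)\|^2] + \tfrac{\ell\alpha^2}{2}G^2.
\end{align*}
Summing over $t=1,\dots,T$, lower-bounding $F$ by $\min_{\theta\in A}F(\theta)$ (finite by continuity of $F$ and compactness of $A$), and choosing $\alpha=c/\sqrt{T}$ with $c$ small enough so that $1-\ell\alpha/2\geq 1/2$, a standard rearrangement gives
\begin{align*}
\min_{1\leq t\leq T}\EE[\|\nabla F(\theta_t)\|^2] \leq \frac{1}{T}\sum_{t=1}^{T}\EE[\|\nabla F(\theta_t)\|^2] = \mathcal{O}\Big(\frac{1}{\sqrt{T}}\Big),
\end{align*}
which is the claimed rate.

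The main obstacle I expect is step (ii): Assumption~\ref{aspt: nonlinear} only controls the \emph{conditional} variance of $X$ and $Y$ given $Z$, not the raw moments of $g(\theta;X)$ or $\nabla_\theta g(\theta;X')$ when $X$ has unbounded support. Converting conditional variance into a bound on $\EE[\|v(\theta)\|^2]$ therefore implicitly uses mild growth or Lipschitz regularity of $g(\theta;\cdot)$ and $\nabla_\theta g(\theta;\cdot)$ uniformly over $\theta\in A$, which has to be either posited as part of the non-linear modeling assumption or extracted from the combination of $\ell$-smoothness of $F$ and compactness of $A$. Everything after that point is a routine application of the non-convex SGD template.
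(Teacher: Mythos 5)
Your proposal matches the paper's own argument: the paper likewise establishes unbiasedness of $v(\theta_t)$ from the conditional independence of $X_t'$ and $(X_t,Y_t)$ given $Z_t$, bounds the variance of the product $(g(\theta;X)-Y)\nabla_\theta g(\theta;X')$ using that same conditional independence together with compactness of $A$ and the conditional variance bounds in Assumption~\ref{aspt: nonlinear}, and then invokes the standard non-convex SGD descent-lemma analysis (Ghadimi--Lan) with $\alpha=\mathcal{O}(1/\sqrt{T})$. The caveat you flag about converting conditional variances of $X,Y$ into moment bounds on $g$ and $\nabla_\theta g$ is also present (and equally glossed over) in the paper, which simply absorbs the needed regularity of $g$ on $A$ into the constant $\sigma_v$.
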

The proof of the proposition is immediate. Note that under Assumption~\ref{aspt: nonlinear}, we can deduce that the unbiased gradient estimator $v(\theta) = (g(\theta; X) - Y)\nabla_{\theta}g(\theta; X')$ has a bounded variance since
\begin{align*}
    \text{Var}(v(\theta)) = &\text{Var}(g(\theta; X) - Y)\text{Var}(\nabla_{\theta}g(\theta; X')) \\
    &+\text{Var}(g(\theta; X) - Y)\left(\EE\left[\nabla_{\theta}g(\theta; X')\right]\right)^2 + \text{Var}(\nabla_{\theta}g(\theta;X'))\left(\EE\left[g(\theta; X) - Y\right]\right)^2 \leq \sigma_{v}^2,
\end{align*}
where the variance and expectation are taken conditioning on $Z$ and $\theta$, and $\sigma_v>0$ is a constant that only depends on $\sigma$, function $g$ and the compact set $A$ in Assumption \ref{aspt: nonlinear}. Then one can directly follow the analysis of non-convex stochastic optimization (see, for example, \citet[Theorem 2.1]{ghadimi2013stochastic}) to obtain Proposition \ref{thm:convergence_noncvx_two_sample}. Relaxing the Assumption \ref{aspt: nonlinear} (typically made in the stochastic optimization literature) with more natural assumptions on the statistical model and obtaining a result as in Theorem~\ref{thm:convergence_linear_two_sample} for the non-convex setting is left as future work.
\section{One-sample Two-stage Stochastic Gradient Method for IVaR}\label{sec:one_sample}

We now examine designing streaming IVaR algorithm with access to the classical one-sample oracle, i.e.,  we observe a streaming set of samples $(X_t,Y_t,Z_t)$ at each time point $t$. Note that in this case, using the same $X_t$ (instead of $X'_t$) in~\eqref{eq: grad_estimator} makes the stochastic gradient estimator biased.\\

\noindent \textbf{Intuition.} 
Consider the case of linear models, i.e., $Y=\theta_*^\top X +\epsilon_1$ with $X=\gamma_*^\top Z+\epsilon_2,$
where $\t_*\in\mathbb{R}^{d_x\times 1}$, and $\g_*\in\mathbb{R}^{d_z\times d_x}$, as also considered in Lemma~\ref{lem:bdd_var_linear}. Recall the true gradient in~\eqref{eq:gradient_IV} and the stochastic gradient estimator of Algorithm~\ref{alg:two_sample_SGD} in~\eqref{eq: grad_estimator}. Since we no longer have $X_t'$, we replace the term $X_t'$ with the predicted mean of $X_t$ given $Z_t$. Suppose that $\g_*$ is known. We specifically replace $\nabla_{\tt} g(\tt;X_t')=X_t'$ by $\expec{X_t}{\mid Z_t}=\g_*^\top Z_t$. 
In such a case, indeed we have an unbiased gradient estimator: 
\begin{align*}
    &\expec{ \g_*^\top Z_t(X_t^\top\tt-Y_t)}{t}
       = \expec{ \expec{X_t}{\mid Z_t}(\expec{X_t}{\mid Z_t}^\top\tt-\expec{Y_t}{\mid Z_t})}{t} \\ =&\expec{\g_*^\top Z_tZ_t^\top\g_*(\tt-\t_*)}{t}=\g_*^\top \Sigma_Z\g_*(\tt-\t_*) =\nabla_\theta F(\theta_t),
\end{align*}
where $\expec{\cdot}{t}$ is the conditional expectation w.r.t the filtration defined on $\{\g_1,\t_1,\g_2,\t_2,\cdots,\g_t,\tt\}$.
       
       In  reality, $\g_*$ is unknown beforehand. Hence, we estimate $\g_*$ using some online procedure and replace $\nabla_{\tt} g(\tt;X_t')$ by $\gt^\top Z_t$ instead of $\g_*^\top Z_t$.
It leads to the following updates:
\begin{align}\label{eq:thetaupdatecso}
             \t_{t+1}=\t_t-\atone\g_t^\top Z_t(X_t^\top\tt-Y_t),\quad\quad \g_{t+1}=\g_t-\betone Z_t(Z_t^\top\gt-X_t^\top). 
    \end{align}
    A closer inspection reveals that the updates in \eqref{eq:thetaupdatecso} can diverge until $\gt$ is close enough to $\g_*$. It is easy to see this fact from the following expansion of $\ttone-\t_*$. We have
\begin{align*}
    \ttone-\t_*=&\hQt(\tt-\t_*)+\atone(\gt-\g_*)^\top\Sigma_{ZY}+\atone D_t\t_*+\atone\gt^\top\xi_{Z_t}\g_*(\tt-\t_*)\\    &+\atone\gt^\top\xi_{Z_t}\g_*\t_*+\atone\gt^\top\xi_{Z_tY_t}-\atone \gt^\top Z_t\epstt^\top\tt,\numberthis\label{eq:thetaupdateexpandedcso}
\end{align*}
where 
\begin{align*}
\xi_{Z_t}=\Sigma_Z-Z_tZ_t^\top,\quad \xi_{Z_tY_t}=\Sigma_{ZY}-Z_tY_t, \quad
\hQt\coloneqq\lrbrac{I-\atone\gt^\top\Sigma_Z\g_*}.
\end{align*}

However, the matrix $\gt^\top\Sigma_Z\g_*$ may not be positive semi-definite, even if $\Sigma_Z$ is positive definite. Thus the negative eigenvalues associated with $\gt^\top\Sigma_Z\g_*$ might cause the $\theta_t$ iterates to first diverge, before eventually converging as $\gt$ gets closer to $\g_*$. 
We illustrate this intuition in a simple experiment in Figure~\ref{fig:wrapfig}. To resolve this issue, we propose Algorithm~\ref{alg:one_sample_onlineIV}, where we replace $g(\tt,X_t) =X_t^\top \tt$ with $ Z_t^T\gt\tt$ in \eqref{eq:thetaupdatecso}.
With such a modification, in the corresponding decomposition for $\theta_{t+1} - \theta_*$ (see \eqref{eq:thetaupdateexpanded}), we have $\hQt=\lrbrac{I-\atone\gt^\top\Sigma_Z\gt}$, where the matrix product $\gt^\top\Sigma_Z\gt$ is always positive semi-definite. 
Hence, with a properly chosen stepsize $\alpha_{t}$ we could quantify the convergence of $\theta_t$ to $\theta_*$ non-asymptotically. Nevertheless, assuming a warm-start condition on $\theta_0$, we also show the convergence of  \eqref{eq:thetaupdatecso},  in Appendix~\ref{sec:csoconvergence} for completeness.\\


\begin{figure}[t]
  \centering
    \includegraphics[width=0.50\textwidth]{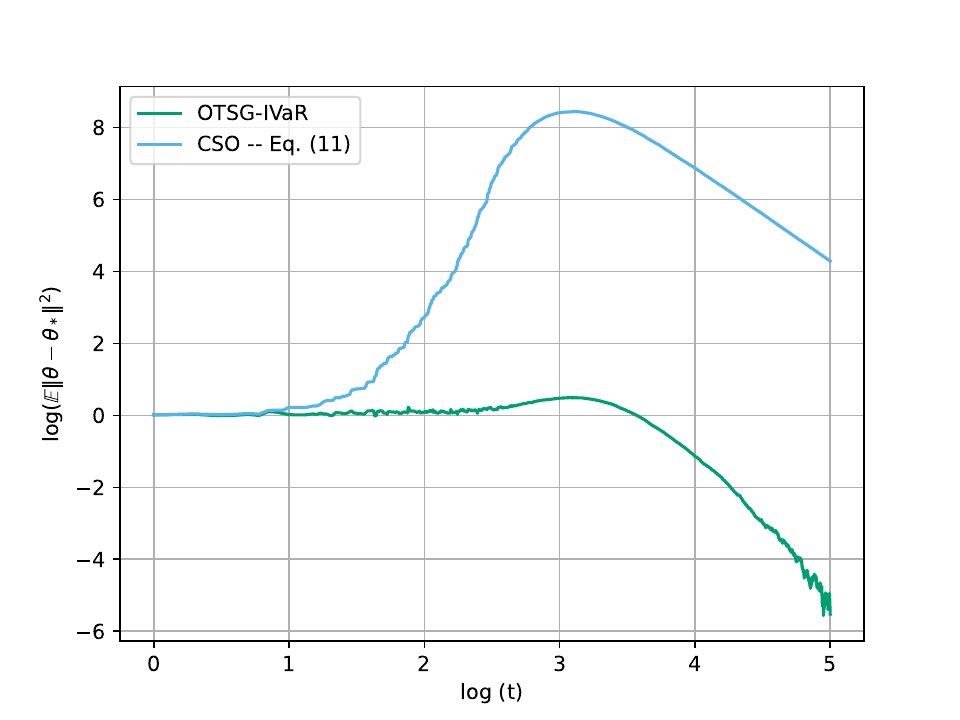}
\caption{\eqref{eq:thetaupdatecso} can initially diverge before converging eventually, leading to a worse performance in practical settings compared to Algorithm~\ref{alg:one_sample_onlineIV}. See Appendix~\ref{sec:fig3app} for the experimental setup.}.
\label{fig:wrapfig}
\end{figure}

\textbf{Algorithm and Analysis.} Based on the intuition, we present Algorithm~\ref{alg:one_sample_onlineIV}. One could interpret the algorithm as the SGD analogy of the offline 2SLS algorithm~\citep{angrist1995two}. It is also related to the framework of non-linear two-stage stochastic approximation algorithms~\citep{doan2020finite,dalal2018finite,mokkadem2006convergence}; albeit the updates of $\theta_t$ and $\gamma_t$ are coupled since both updates use $Z_t$. Furthermore, the dependency between the randomness between the two stages in the IVaR problem, makes the analysis significantly different and more challenging from the classical analysis of two-stage algorithms (see below Theorem~\ref{th:mainthetaconv} for additional details). Finally, while Algorithm~\ref{alg:one_sample_onlineIV} is designed for linear models, the intuition behind the method is also applicable to non-linear models (i.e., between $Z$ and $X$, and $X$ and $Y$). We focus on linear models in this work in order to derive our theoretical results. A detailed treatment of the nonlinear case (for which the analysis is significantly nontrivial) is left for future work. We make the following additional assumptions for the convergence analysis of  Algorithm~\ref{alg:one_sample_onlineIV}.

\begin{algorithm}[t!]
	\caption{One-Sample Two-stage Stochastic Gradient-IVarR (\texttt{OTSG-IVaR})}
	\label{alg:one_sample_onlineIV}
	\begin{algorithmic}[1]
		\REQUIRE  Stepsizes~$\{\alpha_t\}_{t}$, $\{\beta_t\}_{t}$, initial iterates~$\gamma_1,\theta_1$.
            \FOR{$t=1,~2,~\cdots$ \do} 
            \STATE Sample $Z_t$, sample $X_t$ from $\PP_{X\mid Z_t}$, Sample $Y_t$ from $\PP_{Y \mid X_t}$. 
            \STATE Update 
            \begin{align}
             &\t_{t+1}=\t_t-\atone\g_t^\top Z_t(Z_t^\top\g_t\t_t-Y_t),\label{eq:thetaupdate}\\
    &\g_{t+1}=\g_t-\betone Z_t(Z_t^\top\g_t-X_t^\top)\label{eq:gammaupdate}.   
    \end{align}
            \ENDFOR
	\end{algorithmic}
\end{algorithm}




\begin{assumption}\label{as:boundedcovarnoisevar}
For some constants $C_z,C_{zy}>0$, we have the following bounds on the fourth moments: 
    \begin{align*}
        \expec{\norm{\Sigma_Z-ZZ^\top}^4}{}\leq C_z d_z^{\sigmar_5}, \quad \expec{\norm{\Sigma_{ZY}-ZY}^4}{}\leq C_{zy}d_z^{\sigmar_6}, \quad \vartheta:=\max\{\vartheta_5,\vartheta_6\}.\numberthis\label{eq:boundedcovarnoisevar}
    \end{align*}
\end{assumption}
\begin{assumption}\label{as:sigmazposdef}
 There exist constants $0<\mu_Z \leq \lambda_Z <\infty$ such that $\mu_Z I_{d_z}\preceq \Sigma_Z\preceq  \lambda_{Z}I_{d_z} $. 
\end{assumption}

The above conditions are rather mild moment conditions, similar to Assumption~\ref{aspt: var_general}, and could be easily verified for the linear model setting we consider.

\begin{assumption}\label{as:gtbounded}
 $\{\gt\}_t$ is within a compact set of diameter $C_\g d_z^\varkappa$ for some constants $C_\g>0$, $\varkappa\geq 0$.
\end{assumption}
We emphasize that Assumption~\ref{as:gtbounded} is only  for the uncoupled sequence $\gamma_t$, which is an SGD sequence for solving a strongly-convex problem. It holds easily in various cases, for example by projecting the iterates onto any compact sets or a sufficiently large ball containing $\gamma^*$. It is also well-known that, without any projection operations, $\{\gt\}_t$ sequence is almost surely bounded \cite{polyak1992acceleration} under our assumptions. Finally, similar assumptions routinely appear in the analysis of SGD algorithms in various related settings; see, for example, \cite{tseng1998incremental,gurbuzbalaban2019convergence,haochen2019random,nagaraj2019sgd,ahn2020sgd,rajput2020closing}. 

We now present our result on the convergence of $\{\tt\}_t$ below in Theorem~\ref{th:mainthetaconv} (see Appendix~\ref{sec:pfmainthetaconv} for the proof). 
In comparison to Theorem~\ref{thm:convergence_linear_two_sample} (regarding Algorithm~\ref{alg:two_sample_SGD}), we highlight that Theorem~\ref{th:mainthetaconv} provides an any-time guarantee, as the total number of iterations is not required in advance by Algorithm~\ref{alg:one_sample_onlineIV}. 

\begin{theorem}\label{th:mainthetaconv} 
     Suppose Assumptions~\ref{aspt: scvx}, \ref{aspt: independence} (without $X'_t)$, \ref{as:boundedcovarnoisevar}, \ref{as:gtbounded}, and \ref{as:sigmazposdef} hold. In Algorithm~\ref{alg:one_sample_onlineIV}, for any $\iota>0$, set $\at=C_\alpha t^{-1+\iota/2}$ and $\bet= C_\beta t^{-1+\iota/2},$ where $C_\alpha= \min\{0.5d_z^{-4\varkappa-\vartheta/2}\lambda_Z^{-1}C_\g^{-2},0.5(\norm{\g_*}\lambda_Z)^{-2}\}$, and $C_\beta=\mu^2 d_z^{-1-2\varkappa}/128$. Then, we have
     \begin{align*}
         \expec{\norm{\tt-\t^*}^2}{}=O\lrbrac{\frac{1}{t^{1-\iota}}}.
     \end{align*}
\end{theorem}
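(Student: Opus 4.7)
I would treat the coupled system \eqref{eq:thetaupdate}--\eqref{eq:gammaupdate} as a two-timescale scheme in which the $\gamma_t$ recursion is an isolated SGD on a strongly convex quadratic and the $\theta_t$ recursion is a perturbed SGD whose effective Hessian $\gamma_t^{\top}\Sigma_Z\gamma_t$ is driven to $\gamma_*^{\top}\Sigma_Z\gamma_*$. Concretely, I would (i) show that $\mathbb{E}\|\gamma_t-\gamma_*\|^2=O(t^{-(1-\iota)})$, and then (ii) expand $\theta_{t+1}-\theta_*$ in the same spirit as \eqref{eq:thetaupdateexpandedcso} but with the modified prediction $Z_t^{\top}\gamma_t\theta_t$, so that the leading ``contractive'' factor becomes the PSD matrix $\hat Q_t = I-\alpha_{t+1}\gamma_t^{\top}\Sigma_Z\gamma_t$, and (iii) use the gamma-rate from (i) to control the bias/cross terms arising from replacing $\gamma_*$ by $\gamma_t$.

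\textbf{Step 1: analysis of $\gamma_t$.} Writing $\gamma_{t+1}-\gamma_* = (I-\beta_{t+1}Z_tZ_t^{\top})(\gamma_t-\gamma_*)+\beta_{t+1}Z_t\epsilon_{2,t}^{\top}$ and using Assumptions~\ref{as:boundedcovarnoisevar}--\ref{as:sigmazposdef}, the recursion is a standard SGD on a $\mu_Z$-strongly convex quadratic loss with bounded fourth moments. With $\beta_t=C_\beta t^{-1+\iota/2}$ and the choice of $C_\beta$, one obtains by the usual telescoping/Lyapunov argument $\mathbb{E}\|\gamma_t-\gamma_*\|^2=O(t^{-(1-\iota)})$ and an analogous fourth-moment bound via Cauchy--Schwarz that will be needed for the cross terms later.

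\textbf{Step 2: decomposition for $\theta_t$.} Plugging $Y_t=\theta_*^{\top}X_t+\epsilon_{1,t}$ and $X_t=\gamma_*^{\top}Z_t+\epsilon_{2,t}$ into \eqref{eq:thetaupdate} and adding and subtracting $\alpha_{t+1}\gamma_t^{\top}\Sigma_Z\gamma_t\theta_t$ and $\alpha_{t+1}\gamma_t^{\top}\Sigma_Z\gamma_*\theta_*$, I would write
\begin{align*}
    \theta_{t+1}-\theta_* &= \hat Q_t(\theta_t-\theta_*)+\alpha_{t+1}\gamma_t^{\top}\Sigma_Z(\gamma_*-\gamma_t)\theta_* \\
    &\quad + \alpha_{t+1}\gamma_t^{\top}\bigl(\xi_{Z_t}\gamma_t\theta_t-\xi_{Z_t}\gamma_*\theta_*\bigr)+\alpha_{t+1}\gamma_t^{\top}\xi_{Z_tY_t}-\alpha_{t+1}\gamma_t^{\top}Z_t\epsilon_{2,t}^{\top}\theta_t,
\end{align*}
with $\xi_{Z_t},\xi_{Z_tY_t}$ as defined after \eqref{eq:thetaupdateexpandedcso}. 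Because $\hat Q_t$ is PSD, the operator-norm bound $\|\hat Q_t\|\le 1$ holds as soon as $\alpha_{t+1}\|\gamma_t\|^2\lambda_Z\le 1$, which is ensured by the chosen $C_\alpha$ together with Assumption~\ref{as:gtbounded}. Squaring and taking conditional expectation given $\mathcal{F}_t$, the bias term contributes $O(\alpha_{t+1}^2\|\gamma_t-\gamma_*\|^2)$ to the cross product and $O(\alpha_{t+1}\|\gamma_t-\gamma_*\|\cdot\|\theta_t-\theta_*\|)$ via Young's inequality; the remaining martingale-difference and higher-order terms contribute $O(\alpha_{t+1}^2 d_z^{\vartheta+4\varkappa})$ through Assumption~\ref{as:boundedcovarnoisevar}.

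\textbf{Step 3: closing the recursion and the main obstacle.} I would set $u_t:=\mathbb{E}\|\theta_t-\theta_*\|^2$ and derive a scalar recursion of the form
\begin{align*}
    u_{t+1}\le \bigl(1-c\,\alpha_{t+1}\bigr)u_t + C_1\alpha_{t+1}\,\mathbb{E}\|\gamma_t-\gamma_*\|^2+C_2\alpha_{t+1}^2,
\end{align*}
where the contraction coefficient $c>0$ must come from the fact that $\gamma_t^{\top}\Sigma_Z\gamma_t\succeq (\mu/2)I$ eventually, which I would verify using the Step~1 gamma bound plus a standard perturbation argument ($\gamma_t^{\top}\Sigma_Z\gamma_t-\gamma_*^{\top}\Sigma_Z\gamma_*$ is controlled by $\|\gamma_t-\gamma_*\|$). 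Feeding in $\mathbb{E}\|\gamma_t-\gamma_*\|^2=O(t^{-(1-\iota)})$ and $\alpha_{t+1}=C_\alpha(t+1)^{-1+\iota/2}$, a standard induction/Chung-type lemma yields $u_t=O(t^{-(1-\iota)})$. The principal obstacle is twofold: first, the contraction is only ``eventually'' strong, so a burn-in period must be handled by showing $u_t$ cannot blow up in the transient phase (this is where Assumption~\ref{as:gtbounded} and the specific magnitude of $C_\alpha$ matter to keep $\|\hat Q_t\|\le 1$ uniformly); second, $\gamma_t$ and $\theta_t$ share the same $Z_t$, so the cross terms are not mean-zero after conditioning on $\theta_t$ alone, and I must condition on $\mathcal{F}_t$ containing both $\gamma_t$ and $\theta_t$ and then use the gamma-rate bound as an exogenous driver, which is the key place where the coupled analysis departs from the classical two-timescale template.
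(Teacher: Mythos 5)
Your architecture is genuinely different from the paper's: you run the recursion directly on $\theta_t-\theta_*$ and try to extract the contraction from the random matrix $\hat Q_t=I-\alpha_{t+1}\gamma_t^\top\Sigma_Z\gamma_t$, whereas the paper compares $\theta_t$ to the deterministic trajectory $\tilde{\theta}_{t+1}=\tilde{\theta}_t-\alpha_{t+1}\gamma_*^\top(\Sigma_Z\gamma_*\tilde{\theta}_t-\Sigma_{ZY})$, contracts with the \emph{deterministic} matrix $Q_t=I-\alpha_{t+1}\gamma_*^\top\Sigma_Z\gamma_*$ (whose spectral gap $\mu$ is guaranteed by Assumption~\ref{aspt: scvx}), pushes $D_t=\gamma_*^\top\Sigma_Z\gamma_*-\gamma_t^\top\Sigma_Z\gamma_t$ into the error, and then removes a residual $\sqrt{d_z\beta_t}$ term by a progressive-sharpening bootstrap (Lemmas~\ref{lm:deltintermedbound} and~\ref{lm:improvedfinaldeltrate}). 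The difference in decompositions is not fatal by itself, but your route has a concrete gap at the contraction step. The statement that $\gamma_t^\top\Sigma_Z\gamma_t\succeq(\mu/2)I$ ``eventually'' is a random event, not a deterministic fact: Step~1 only gives $\EE\|\gamma_t-\gamma_*\|^2=O(d_z\beta_t)$, so the event fails with probability $O(\beta_t)$, and under Assumption~\ref{as:gtbounded} alone $\gamma_t^\top\Sigma_Z\gamma_t$ may be singular on that event. If you split on the bad event you pay $\sqrt{\EE\|\theta_t-\theta_*\|^4}\sqrt{\PP(\mathrm{bad})}=O(\sqrt{\beta_t})$; if instead you lower-bound $\gamma_t^\top\Sigma_Z\gamma_t\succeq\mu I-\|D_t\| I$ and decouple by Cauchy--Schwarz you get $\alpha_{t+1}\EE\left[\|D_t\|\,\|\theta_t-\theta_*\|^2\right]\le\alpha_{t+1}\sqrt{\EE\|D_t\|^2}\sqrt{\EE\|\theta_t-\theta_*\|^4}=O(\alpha_{t+1}\sqrt{\beta_t})$. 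Either way the driving term in your scalar recursion is $O(\alpha_{t+1}\sqrt{\beta_t})$ rather than the $O(\alpha_{t+1}\EE\|\gamma_t-\gamma_*\|^2)$ you wrote, and the Chung-type lemma then yields only $u_t=O(\sqrt{\beta_t})=O(t^{-1/2+\iota/4})$, well short of $O(t^{-(1-\iota)})$. This $\sqrt{\beta_t}$ floor is precisely the obstruction the paper's bootstrap is built to remove.

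A repair that keeps your architecture is to use $\|D_t\|\le\mu/2+\|D_t\|^2/(2\mu)$ pointwise and then bound $\EE\left[\|D_t\|^2\|\theta_t-\theta_*\|^2\right]\le\sqrt{\EE\|D_t\|^4}\sqrt{\EE\|\theta_t-\theta_*\|^4}=O(\beta_t)$, which restores an $O(\alpha_{t+1}\beta_t)$ driving term --- but this, like the indicator split, requires a \emph{uniform fourth-moment bound on $\|\theta_t-\theta_*\|$}. That bound is the hardest ingredient of the paper's proof (Lemma~\ref{lm:thetabounded}, where the condition $\sum_t(\alpha_t^2+\alpha_t\sqrt{\beta_t})<\infty$ and the specific exponent choices enter), and your proposal establishes fourth moments only for $\gamma_t$, never for the coupled iterate $\theta_t$; ``showing $u_t$ cannot blow up in the transient phase'' is exactly this missing lemma, not a routine burn-in argument. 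Two smaller points: your decomposition contains the term $-\alpha_{t+1}\gamma_t^\top Z_t\epsilon_{2,t}^\top\theta_t$, which belongs to the CSO update \eqref{eq:thetaupdatecso} rather than to \eqref{eq:thetaupdate} --- in Algorithm~\ref{alg:one_sample_onlineIV} the first-stage noise enters only through $Y_t$, i.e.\ multiplied by $\theta_*$ and absorbed into $\gamma_t^\top\xi_{Z_tY_t}$, and eliminating the $\theta_t$-dependent noise term is the whole point of the modified prediction; also, Lemma~\ref{lm:gtconvrate} gives $\EE\|\gamma_t-\gamma_*\|^2=O(d_z\beta_t)=O(t^{-(1-\iota/2)})$, not $O(t^{-(1-\iota)})$, though this does not affect the target rate.
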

\begin{remark}
In Theorem~\ref{th:mainthetaconv}, we present the step-size choices for the fastest rate of convergence. In the proof of Theorem~\ref{th:mainthetaconv} (see Appendix~\ref{sec:pfmainthetaconv}), we show that convergence can be guaranteed for a range of step-sizes given by $\at=C_\alpha t^{-a}$, $\bet= C_\beta t^{-b},$ where $1/2<a,b<1$, $b>2-2a$ with corresponding rate being $
         \expec{\norm{\tt-\t^*}^2}{}=O(\max\{t^{-b(2-(1-\iota/2)^{-1})},t^{-a}\log(2/\iota-1)\})
     $. In particular, one requires $a,b<1$ to ensure $(\at-\atone)/\at=o(\at)$, and $(\bet-\betone)/\bet=o(\bet)$, as is standard in  stochastic approximation literature (see, for example,  \cite{chen2020statistical,polyak1992acceleration}). 
\end{remark}
\textbf{Proof Techniques.} The major challenge towards the  convergence analysis of $\{\tt\}_t$ lies in the interaction term $\gt Z_tZ_t^\top\gt\tt$ between $\gamma_t$ and $\theta_t$ in \eqref{eq:thetaupdate}. This multiplicative interaction term leads to  an involved dependence between the noise in the stochastic gradient updates for the two stages. Such a dependence has not been considered in existing analysis of non-linear two time-scale algorithms \citep{mokkadem2006convergence,maei2009convergent,dalal2018finite,doan2020finite,xu2021sample,wang2021non,doan2022nonlinear}. In addition, \cite{doan2022nonlinear} considers the case when the noise sequence is not only independent of each other but also independent of iterate locations. Furthermore, they assumes (see their Assumption 3) that the condition in Assumption~\ref{aspt: scvx} holds for all $\gamma$ whereas Assumption~\ref{aspt: scvx} only needs to hold for $\g_*$, that is much milder. Similarly, many works (for example, Assumption 1 in \cite{wang2021non}, Assumption 2 in \cite{xu2021sample} and Theorem 2 in \cite{maei2009convergent}) assume that the iterates of both stages are bounded in a compact set and consequently, and hence the variance of the stochastic gradients are also uniformly bounded.




In our setting, firstly, the stochastic gradient in~\eqref{eq:thetaupdate}, evaluated at $(\tt,\gt)$ is biased:
    \begin{align*}
       &\expec{ \gt^\top Z_t(Z_t^\top\gt\tt-Y_t)}{t,Z_t}=\expec{\gt^\top Z_t(Z_t^\top\g_t\tt-Z_t^\top\g_*\t_*)}{t,Z_t}=\expec{\gt^\top \Sigma_Z(\g_t\tt-\g_*\t_*)}{t}\\
       =&\gt^\top \Sigma_Z\g_t(\tt-\t_*)+\gt^\top \Sigma_Z(\gt-\g_*)\t_* \neq \g_*^\top \Sigma_Z\g_*(\tt-\t_*) = \nabla_\theta F(\theta_t).
    \end{align*}
Furthermore, even under Assumption~\ref{as:gtbounded}, the variance of the stochastic gradient is not~\eqref{eq:thetaupdate} uniformly bounded. Overcoming these issues, in addition to the aforementioned dependence between the noise in the stochastic gradient updates for the two stages, forms the major novelty in our analysis. We proceed by noting that if $\g_*$, $\Sigma_Z$, and $\Sigma_{ZY}$ were known beforehand, one conduct deterministic gradient updates, i.e., $\tilde{\t}_{t+1}=\tilde{\t}_t-\atone\g_*^\top \left(\Sigma_Z\g_*\tilde{\t}_t-\Sigma_{ZY}\right)$,  to obtain $\t_*$.
By standard results on gradient descent for strongly convex functions (see, for example, \cite{nesterov2013introductory}), $\{\tilde{\theta}_t\}_t$ converges exponentially fast as stated in Lemma~\ref{lm:tildethetaconvrate}. Hence, it remains to show that the trajectory of $\tt$ converges to the trajectory of $\tilde{\t}_t$. That is, defining the sequence $\delt\coloneqq \tt-\tilde{\t}_t$, our goal is to establish the convergence rate of $\expec{\lVert\delt\rVert_2^2}{}$. We first provide an intermediate bound (see Lemma~\ref{lm:deltintermedbound})  and then progressively sharpen to a tighter bound (see Lemma~\ref{lm:improvedfinaldeltrate}). In doing so, it is also required to show that $\expec{\norm{\tt}^4}{}$ is bounded, which we prove in Lemma~\ref{lm:thetabounded}. The proof of Lemma~\ref{lm:thetabounded} is non-trivial and requires carefully chosen stepsizes satisfying $\sum_{t=1}^\infty(\atsqr+\at\sqrt{\bet})<\infty$.\vspace{-0.05in}

\section{Numerical Experiments}\label{sec:exp}
\vspace{-0.05in}
\textbf{Experiments for Algorithm \ref{alg:two_sample_SGD} (\texttt{TOSG-IVaR}).} We first consider the following problem, in which $(Z, X, Y)$ is generated via
\begin{align*}
    Z \sim \mathcal{N}(0, I_{d_z}),\ X = \phi(\gamma_*^\top Z) + c\cdot (h + \epsilon_x),\ Y = \theta_*^\top X + c\cdot(h_1 + \epsilon_y),
\end{align*}
where $c>0$ is a scalar to control the variance of the noise vector, and $h_1$ is the first coordinate of $h$. The noise vectors (or scalar) $h, \epsilon_x, \epsilon_y$ are independent of $Z$, and we have $h\sim \mathcal{N}(\mathbf{1}_{d_x}, I_{d_x})$, $\epsilon_x \sim \mathcal{N}(0, I_{d_x})$,$\epsilon_y \sim \cN(0, 1)$. In each iteration, one tuple $(X, X', Y)$ is generated and used to update $\theta_t$ according to Algorithm \ref{alg:two_sample_SGD}. We set $(d_x, d_z)\in \{(4, 8), (8, 16)\}$, $c\in \{0.1, 1.0\}$, and $\phi(s)\in \{s, s^2\}$. We repeat each setting 50 times and report the curves of $\EE[\norm{\theta_t - \theta_*}^2]$ in Figure \ref{fig: algo1}, where the expectation is computed as the average of $\norm{\theta_t - \theta_*}^2$ of all trials, and the shaded region represents the standard deviation. The first row and the second row correspond to $\phi(s) = s$ and $\phi(s) = s^2$ respectively. Here, $c=0.1$ for odd columns and $c=1.0$ for even columns. We have $(d_x, d_z) = (4, 8)$ for the first two columns and $(d_x, d_z) = (8, 16)$ for the last two columns. Empirically, we can observe that our Algorithm \ref{alg:two_sample_SGD} performs well across all different settings.

\begin{figure}
    \centering
    \subfigure[]{\includegraphics[width=0.233\textwidth]{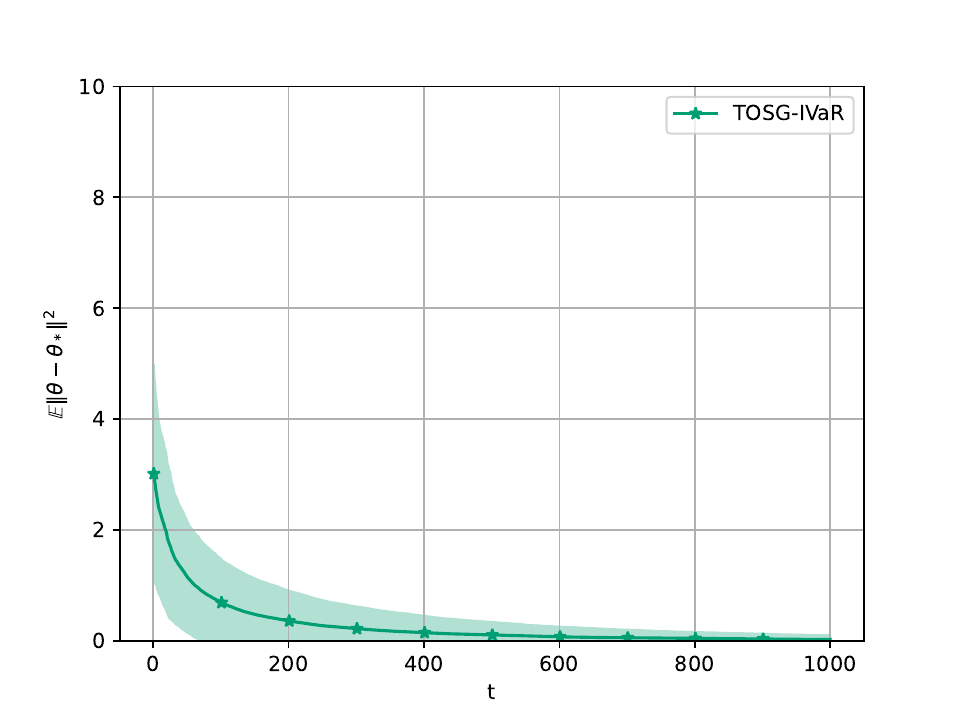}}
    \subfigure[]{\includegraphics[width=0.233\textwidth]{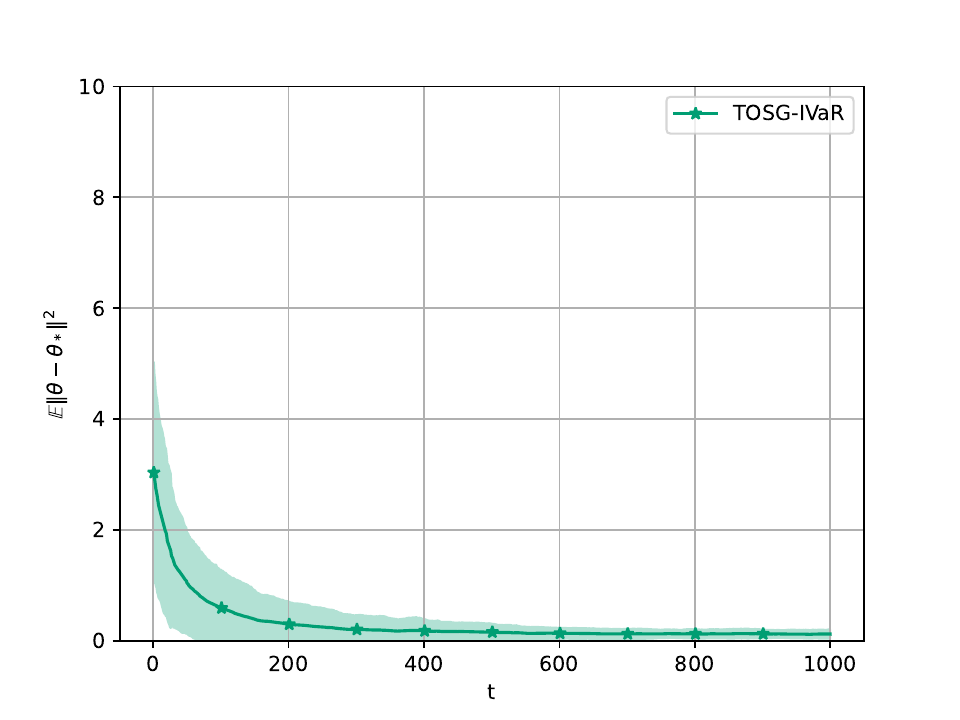}}
    \subfigure[]{\includegraphics[width=0.233\textwidth]{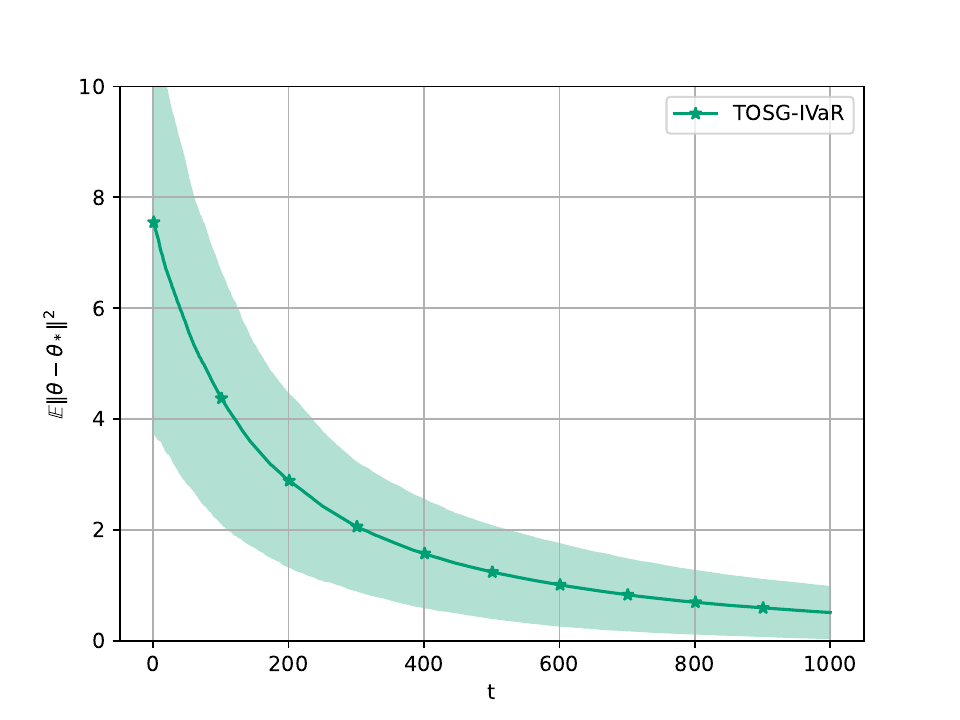}}
    \subfigure[]{\includegraphics[width=0.233\textwidth]{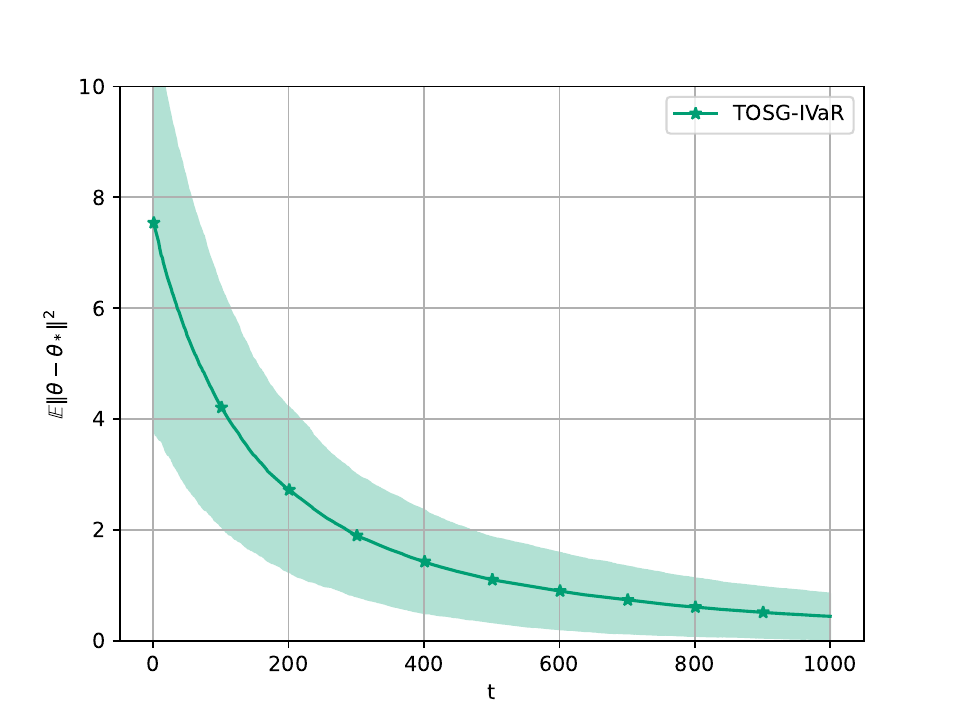}}
    \subfigure[]{\includegraphics[width=0.233\textwidth]{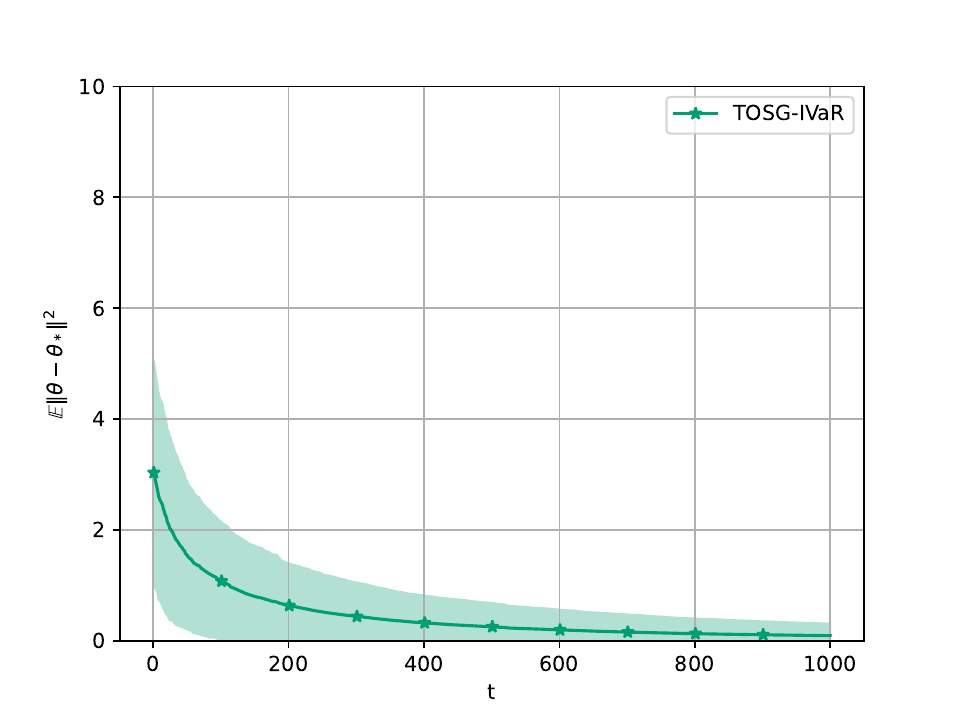}}
    \subfigure[]{\includegraphics[width=0.233\textwidth]{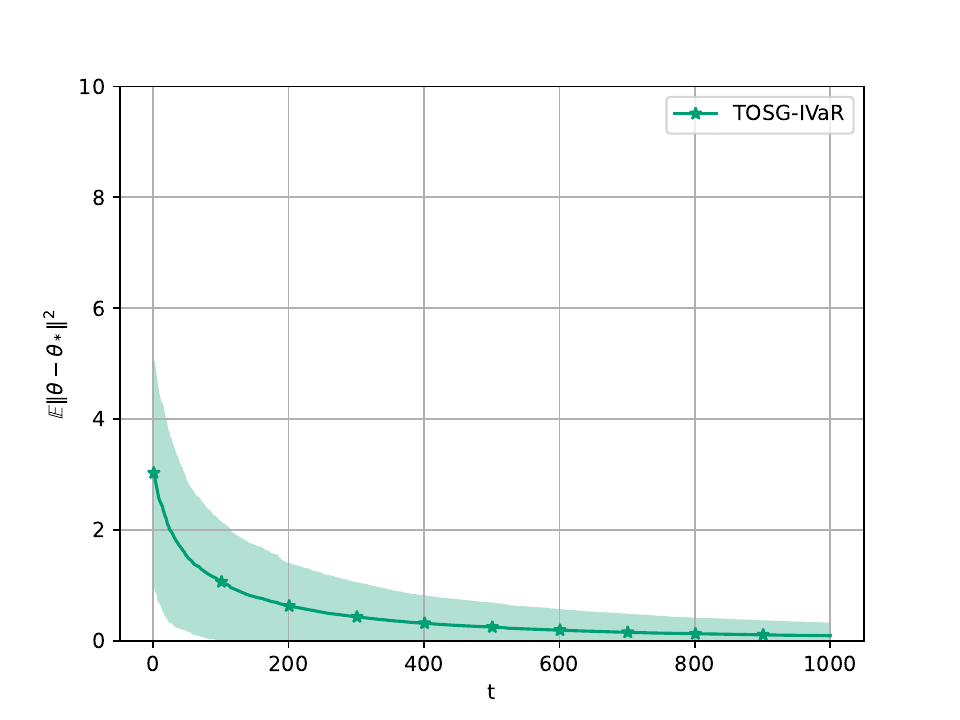}}
    \subfigure[]{\includegraphics[width=0.233\textwidth]{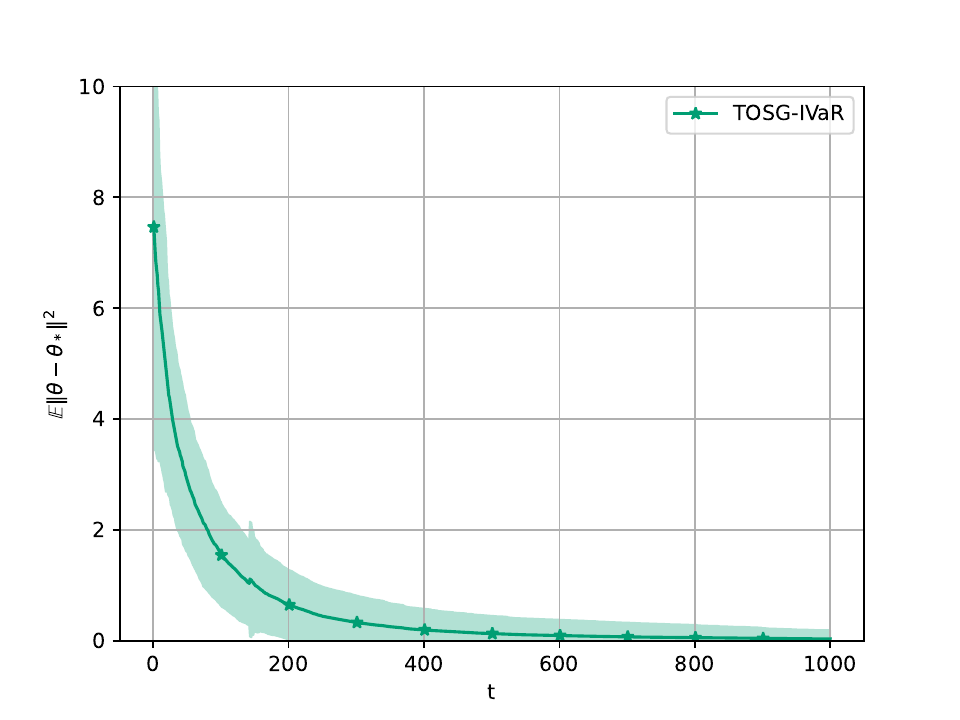}}
    \subfigure[]{\includegraphics[width=0.233\textwidth]{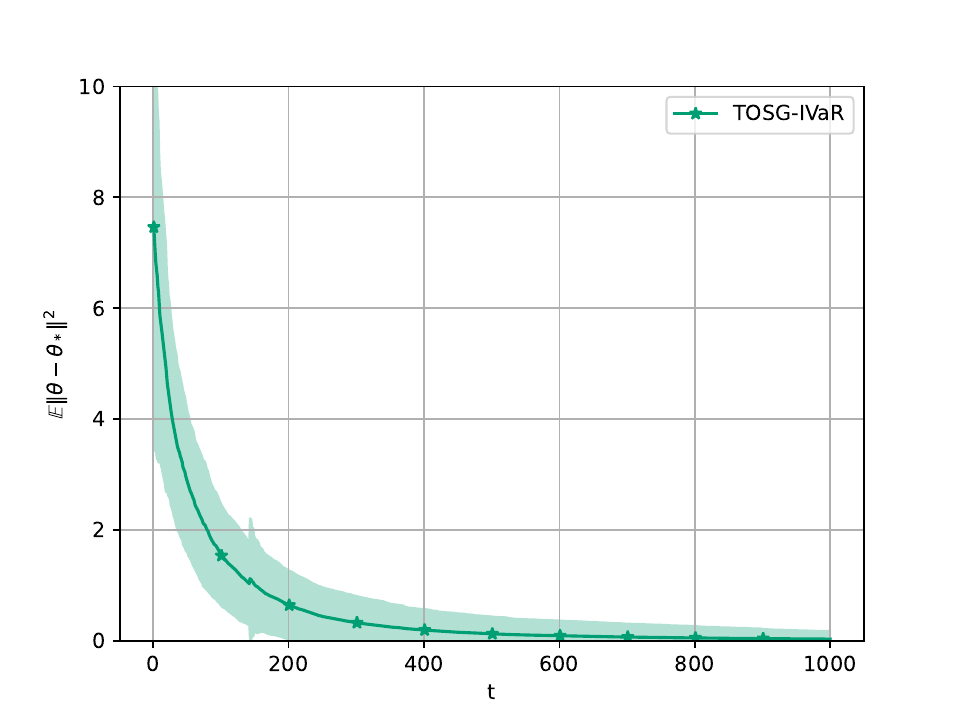}}
    \caption{$\EE[\norm{\theta_t - \theta_*}^2]$ of Algorithm \ref{alg:two_sample_SGD} under different settings detailed in Section~\ref{sec:exp}.}
    \label{fig: algo1}
\end{figure}

\noindent\textbf{Experiments for Algorithm \ref{alg:one_sample_onlineIV} (\texttt{OTSG-IVaR}).} 
Next, we compare our Algorithm \ref{alg:one_sample_onlineIV} as well as its variant and Algorithm 1 in \cite{della2023online}. We write ``OTSG-IVaR'', ``CSO -- Eq. (11)'' and ``[DVB23]'' to represent Algorithm \ref{alg:one_sample_onlineIV}, Algorithm \ref{alg:one_sample_onlineIV} with the updates replaced by \eqref{eq:thetaupdatecso} and Algorithm 1 in \cite{della2023online} (see Appendix~\ref{alg:pseudoools}).
We follow simulation settings similar to \cite{della2023online}:
\begin{align*}
    &Y=\theta_*^\top X +\nu, \qquad X=\gamma_*^\top Z+\epsilon, \qquad
    \epsilon=\sigma_\epsilon \cN(0,I_{d_x}),\qquad \nu=\rho\epsilon_1+\cN(0,0.25), \numberthis\label{eq:exptsetting}
\end{align*}
where $\epsilon_1$ is the first coordinate of $\epsilon$, $\theta_*\in\mathbb{R}^{d_x}$ is a unit vector chosen uniformly randomly, and $\gamma_*\in\mathbb{R}^{d_z\times d_x}$ where $\gamma_{ij}=0$ for $i\neq j$, and $\gamma_{ij}=1$ for $i= j$, $i=1,2,\cdots,d_x$, and $j=1,2,\cdots,d_z$. Here $\rho$ controls the level of endogeneity in the model. We compare the performance of Algorithm~\ref{alg:one_sample_onlineIV} with \eqref{eq:thetaupdatecso}, and O2SLS \citep{della2023online} for $\rho=1,4$, and $\sigma_\epsilon=0.5,1$. By varying $\sigma_\epsilon$ we control the correlation between $X$ and $Z$. We consider two settings $(d_x,d_z)=(1,1)$, and $(d_x,d_z)=(8,16)$. As performance metric, in Figure \ref{fig: algo2_dist} we plot $\expec{\norm{\tt-\t_*}^2}{}$ where the $\expec{\cdot}{}$ is approximated by averaging over 50 trials, and both axes are in $\log$ scale (base 10). We also show, in Figure \ref{fig: algo2_mse}, the convergence of the test Mean Squared Error (MSE) evaluated over 400 test samples to the best possible Test MSE where $\t_*$ and $\g_*$ are known beforehand. For Figures \ref{fig: algo2_dist} and \ref{fig: algo2_mse}, the first row and second row corresponds to $(d_x, d_z) = (1, 1)$ and $(d_x, d_z) = (8, 16)$ respectively, and $\sigma_{\epsilon} = 0.5$ in odd columns and $\sigma_{\epsilon} = 1.0$ in even columns. We have $\rho = 1.0$ for the first two columns and $\rho = 4.0$ for the last two columns.
We can observe that O2SLS has much larger variance in different settings, while our algorithms perform consistently well in all settings.
\vspace{-0.05in}

\begin{figure}
    \centering
    \subfigure[]{\includegraphics[width=0.233\textwidth]{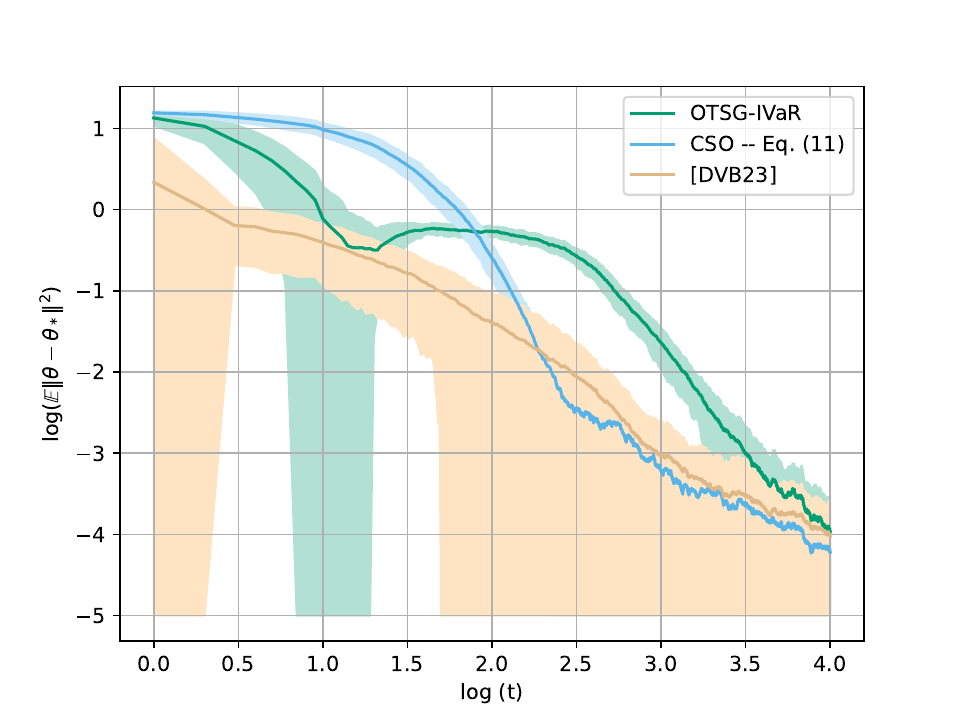}}
    \subfigure[]{\includegraphics[width=0.233\textwidth]{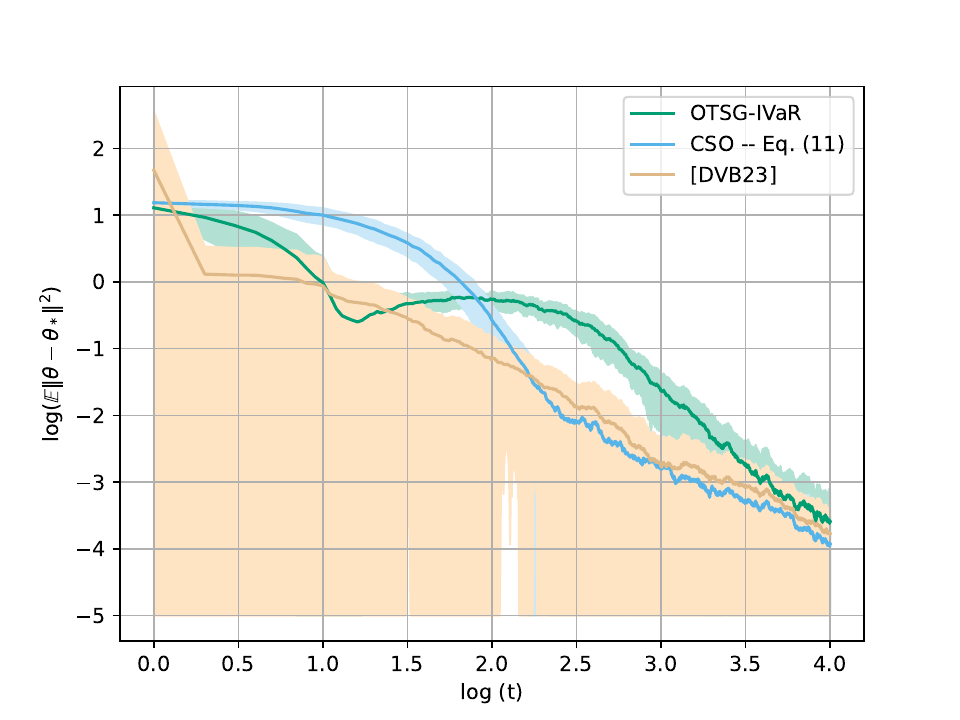}}
    \subfigure[]{\includegraphics[width=0.233\textwidth]{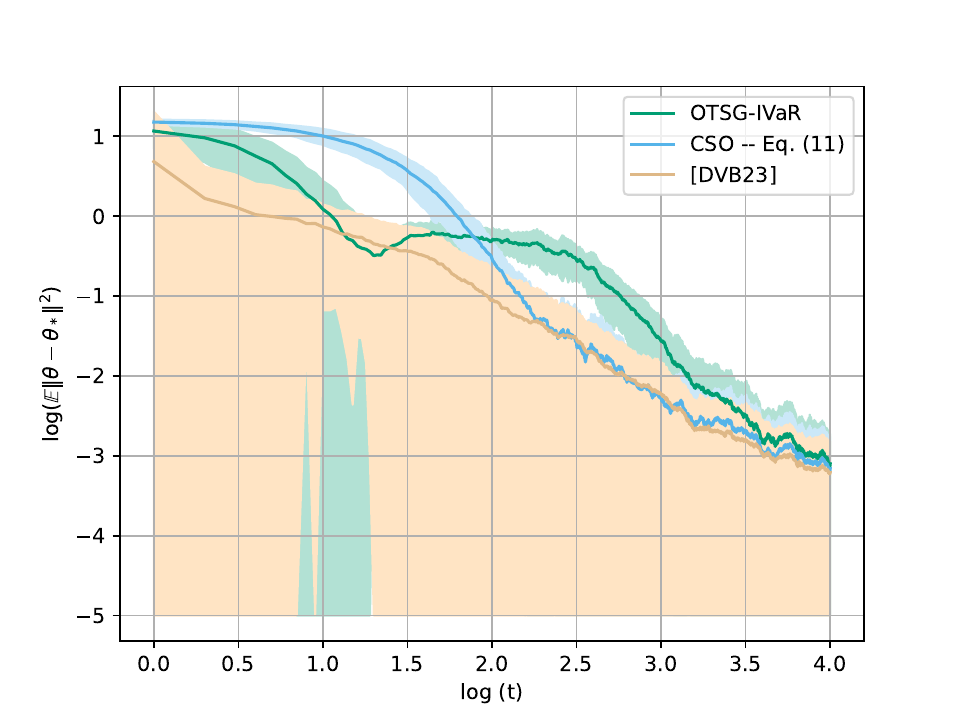}}
    \subfigure[]{\includegraphics[width=0.233\textwidth]{figures/Algo2/dist_to_optima/DistancetoOfflineOptima_dx1_dz1_rho4_sig_epsilon0.500000.pdf}}
    \subfigure[]{\includegraphics[width=0.233\textwidth]{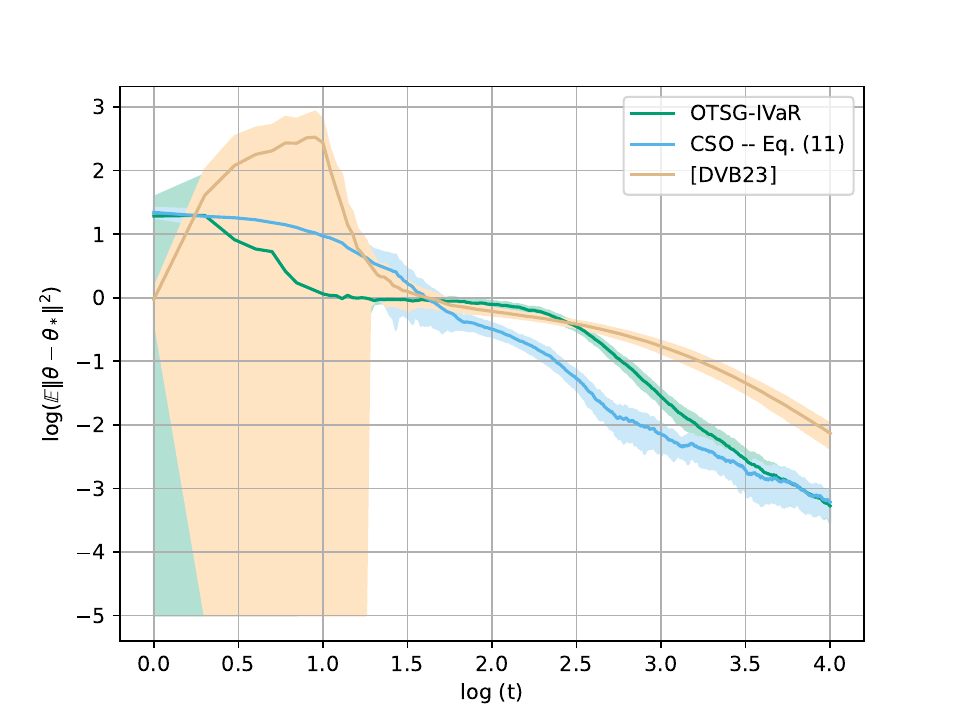}}
    \subfigure[]{\includegraphics[width=0.233\textwidth]{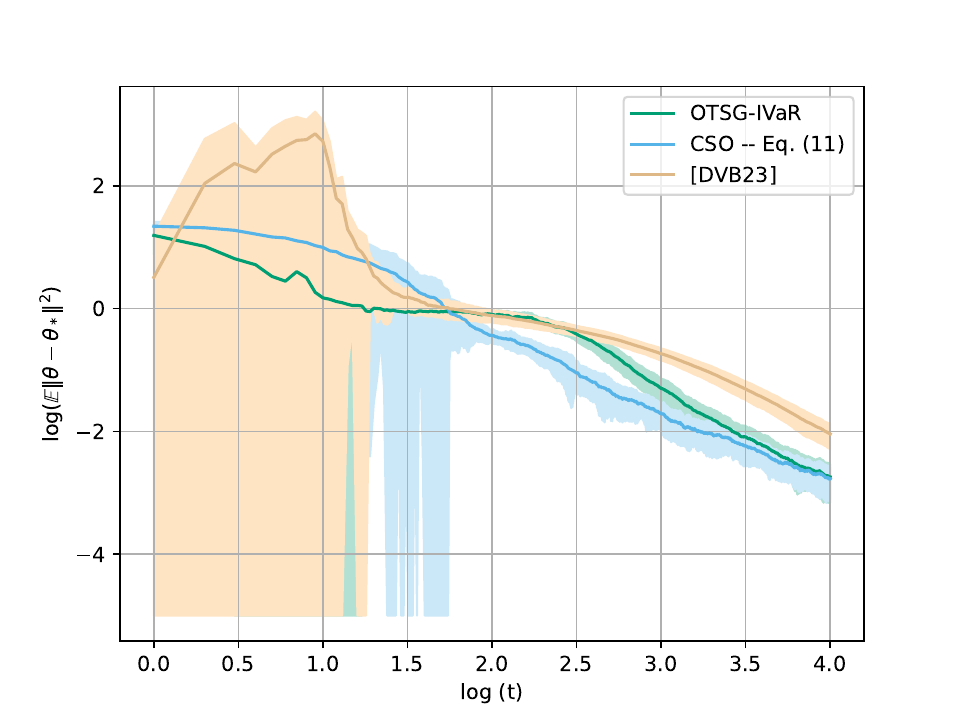}}
    \subfigure[]{\includegraphics[width=0.233\textwidth]{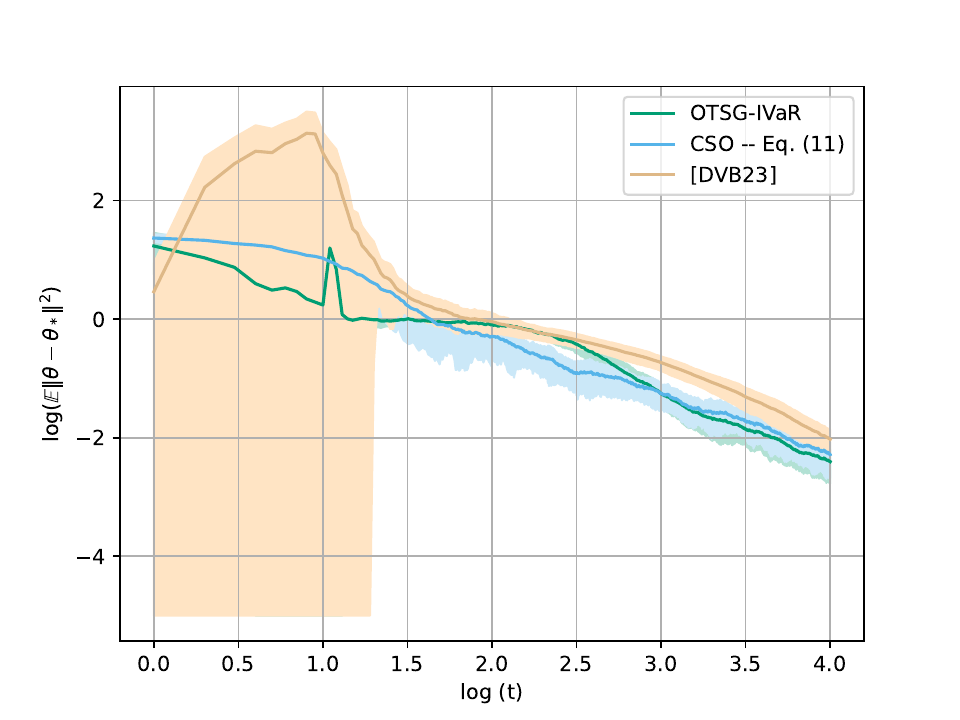}}
    \subfigure[]{\includegraphics[width=0.233\textwidth]{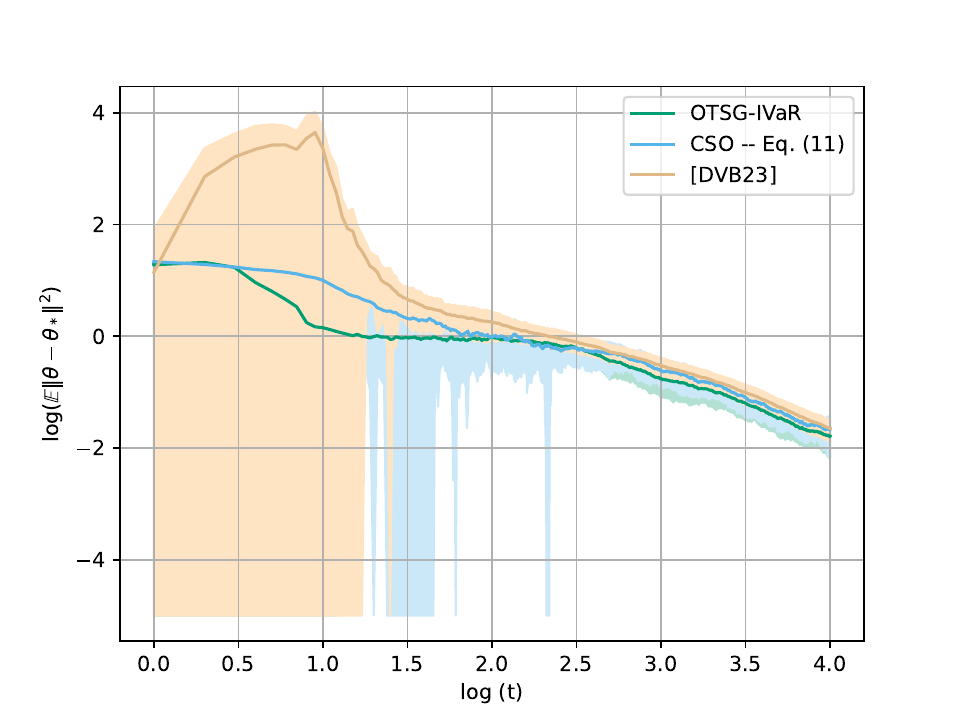}}
    \caption{Comparison of $\EE[\norm{\theta_t - \theta_*}^2]$ ($\log$-$\log$ scale) for Algorithm~\ref{alg:one_sample_onlineIV}, Eq.~\ref{eq:thetaupdatecso} and~\cite{della2023online}.}
    \label{fig: algo2_dist}
    \vspace{-0.1in}
\end{figure}

\begin{figure}
    \centering
    \subfigure[]{\includegraphics[width=0.233\textwidth]{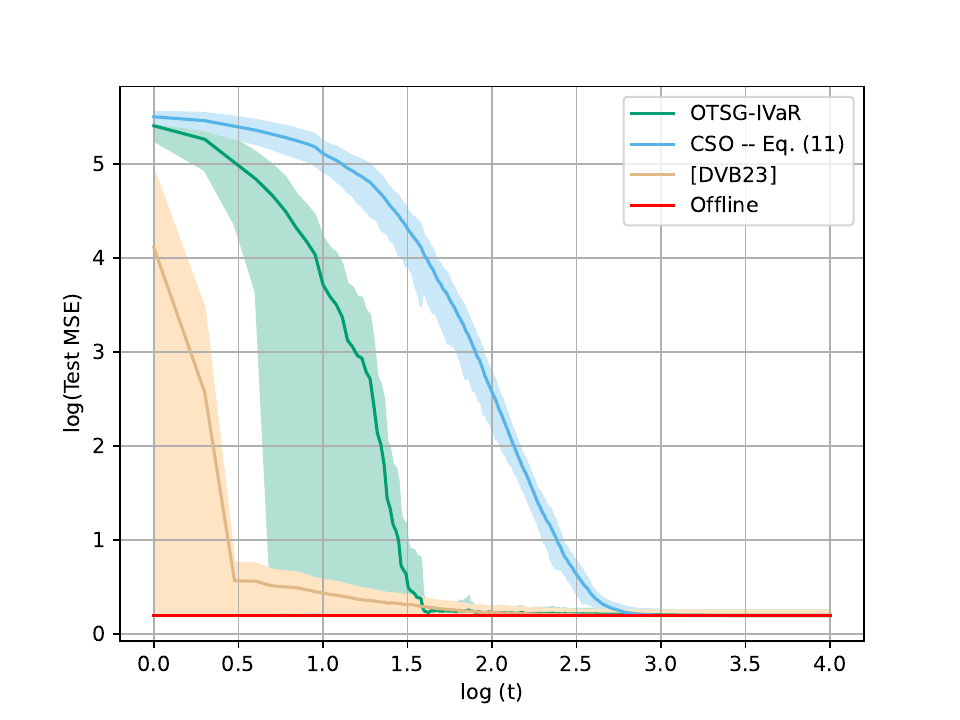}}
    \subfigure[]{\includegraphics[width=0.233\textwidth]{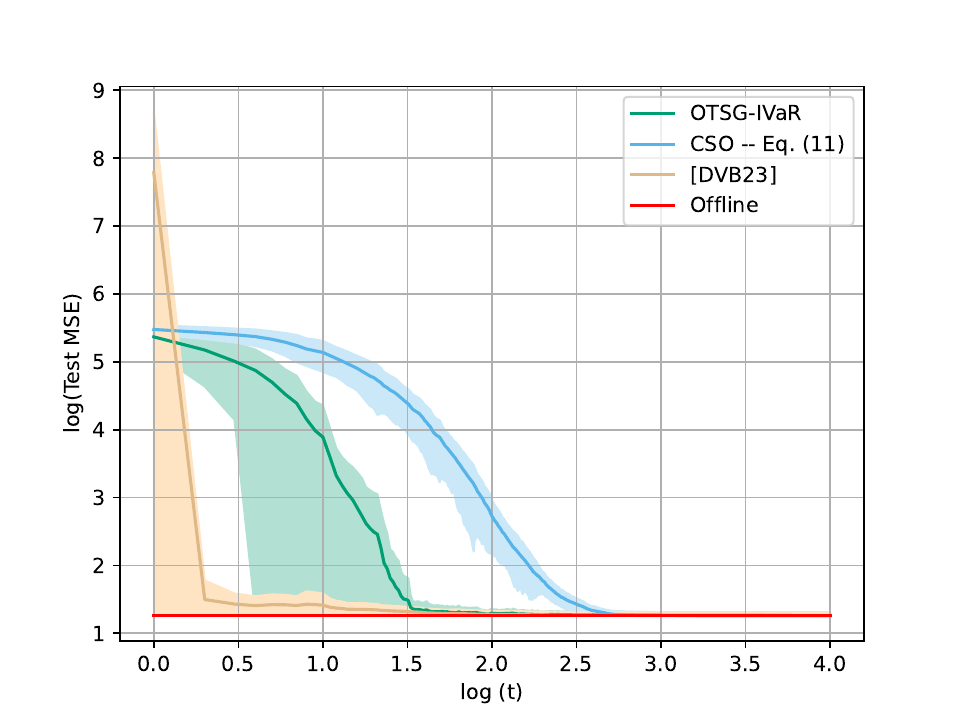}}
    \subfigure[]{\includegraphics[width=0.233\textwidth]{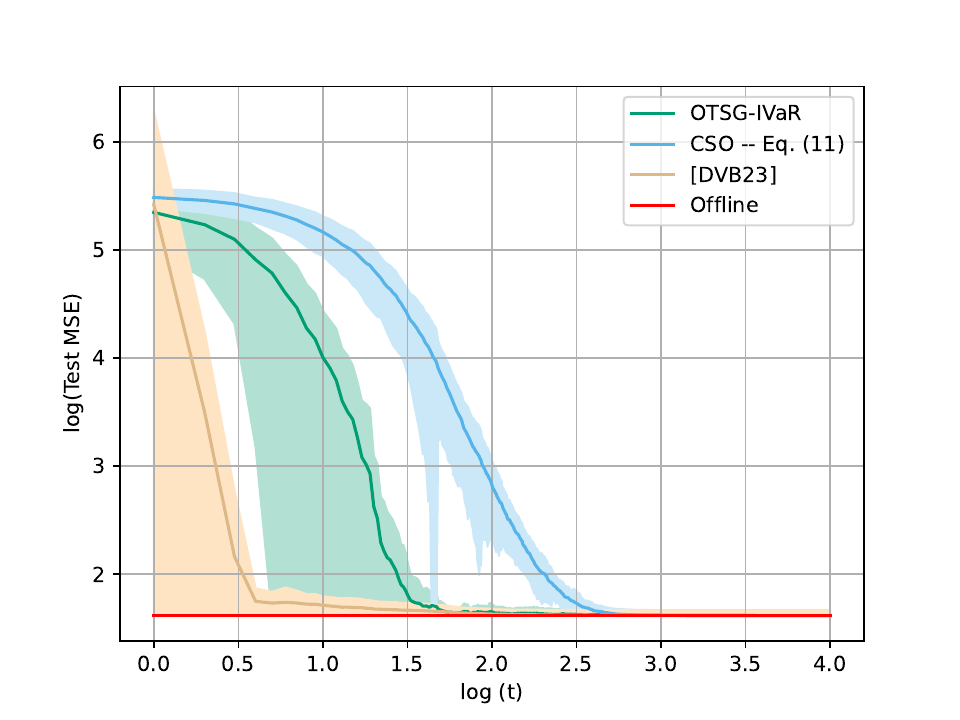}}
    \subfigure[]{\includegraphics[width=0.233\textwidth]{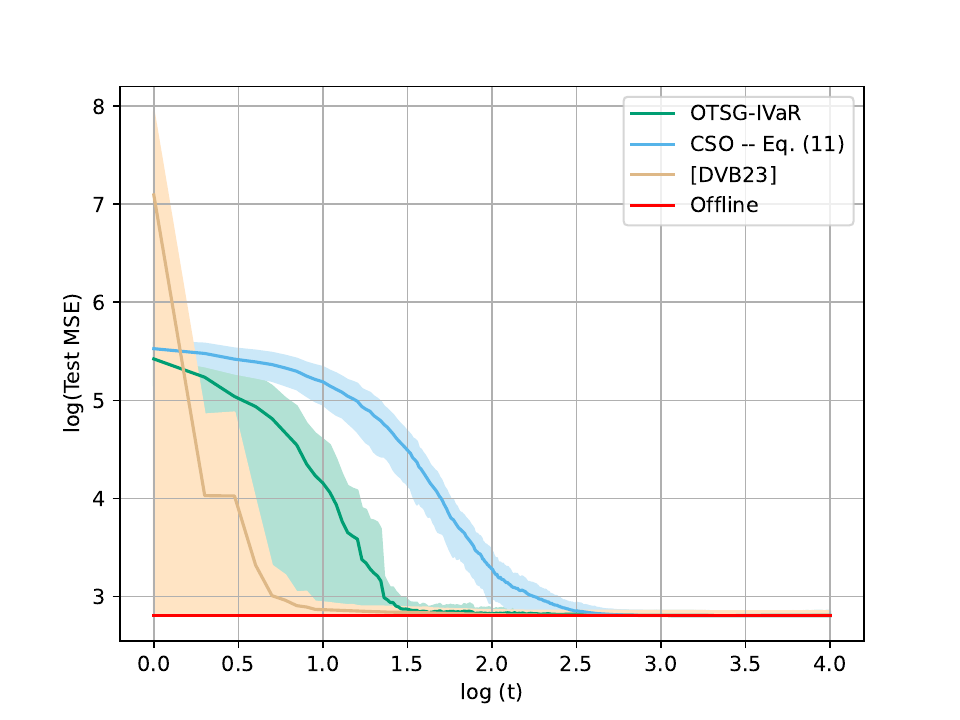}}
    \subfigure[]{\includegraphics[width=0.233\textwidth]{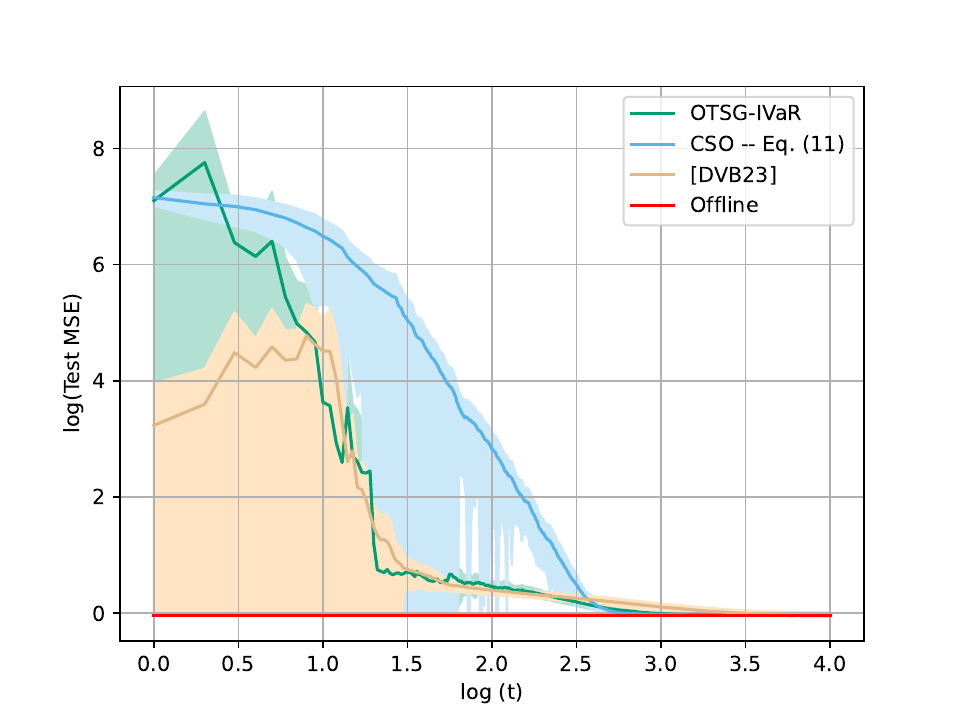}}
    \subfigure[]{\includegraphics[width=0.233\textwidth]{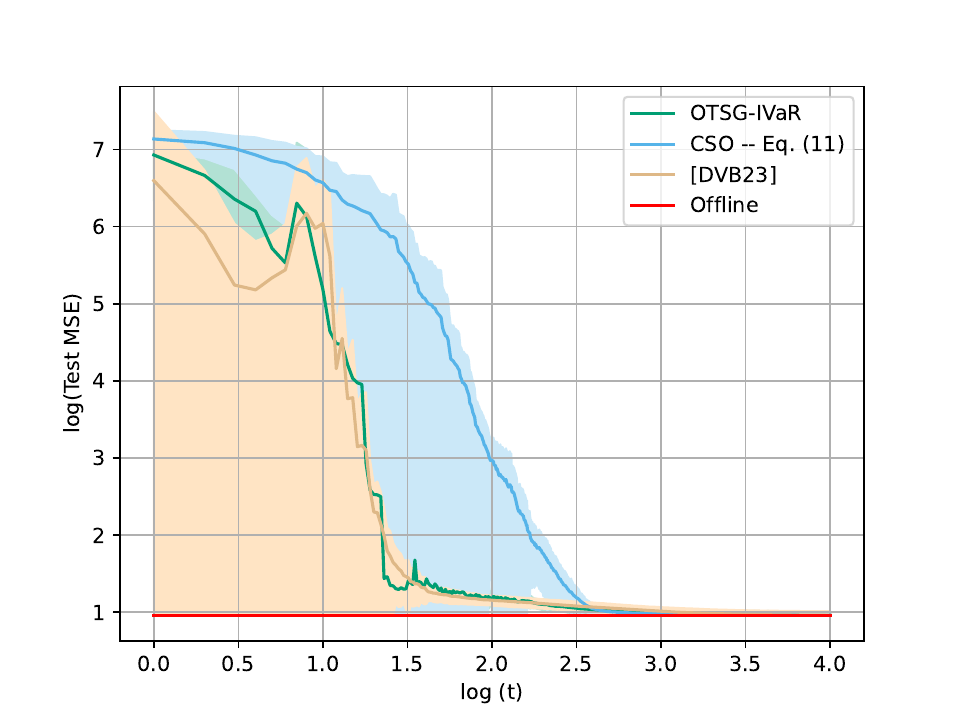}}
    \subfigure[]{\includegraphics[width=0.233\textwidth]{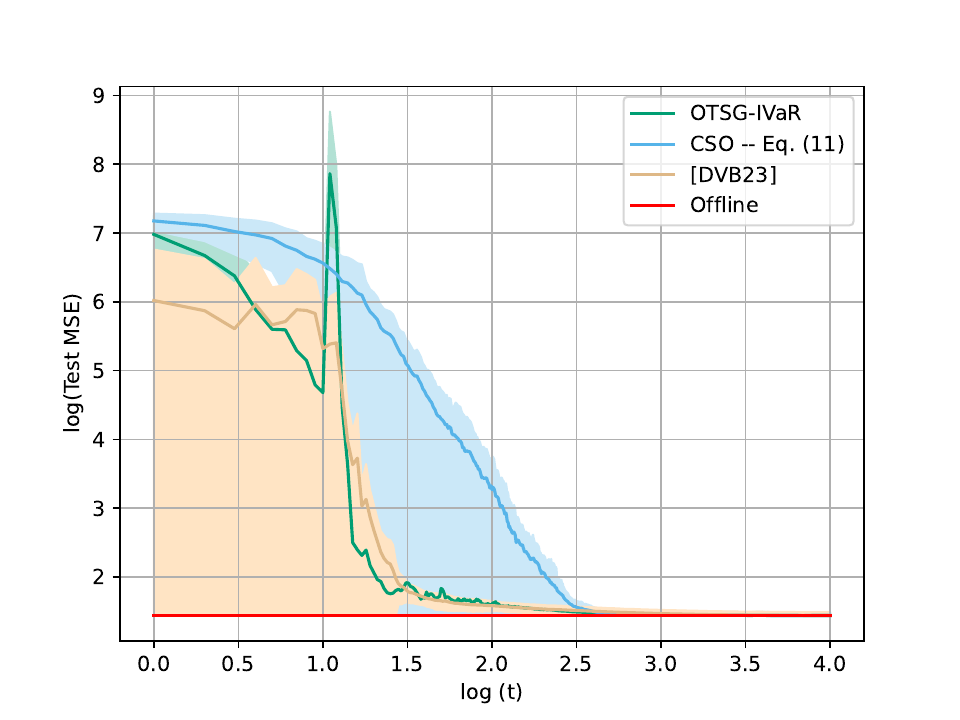}}
    \subfigure[]{\includegraphics[width=0.233\textwidth]{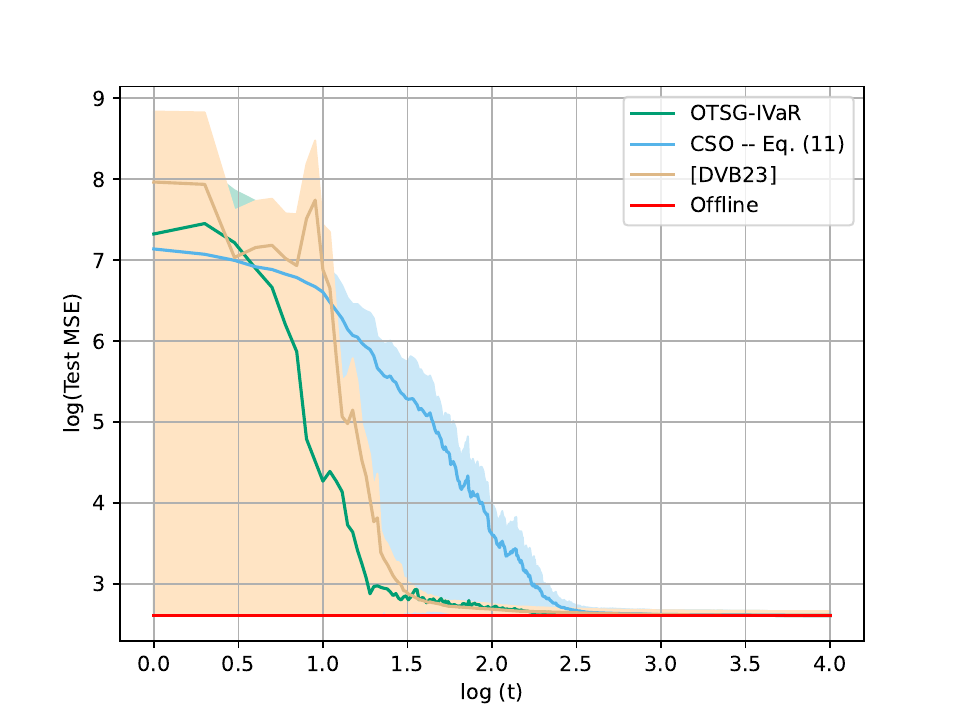}}
    \caption{Comparison of Test MSE ($\log$-$\log$ scale) for Algorithm~\ref{alg:one_sample_onlineIV}, Eq.~\ref{eq:thetaupdatecso} and~\cite{della2023online}.}
    \label{fig: algo2_mse}
    \vspace{-0.1in}
\end{figure}

\section{Conclusion}
\vspace{-0.1in}
We presented streaming algorithms for least-squares IVaR based on directly solving the associated conditional stochastic optimization formulation in~\eqref{problem:IV_parameterized}. Our algorithms have several benefits, including avoidance of mini-batches and matrix inverses. We show that the expected rates of convergences for the proposed algorithms are of order $\mathcal{O}(\log T/T)$ and $\mathcal{O}(1/T^{1-\iota})$, for any $\iota>0$, under the availability of two-sample and one-sample oracles, respectively. As future work, it is interesting to develop streaming inferential methods for IVaR. Leveraging related works for the vanilla SGD~\citep{polyak1992acceleration, anastasiou2019normal, shao2022berry, chen2020statistical, zhu2023online} to the setting of Algorithms~\ref{alg:two_sample_SGD} and~\ref{alg:one_sample_onlineIV}, provides a concrete direction to establish Central Limit Theorems and develop limiting covariance estimation procedures.  


\medskip
\bibliographystyle{abbrvnat}
\bibliography{ref}
\begin{appendix}
\appendix


\section{Online updates of \cite{della2023online}}\label{alg:pseudoools}
For the sake of clarity, we present the O2SLS algorithm proposed in \citep[v3]{della2023online}\footnote{Note that the streaming algorithm was not present in version 1, i.e., \citep[v1]{della2023onlinev1}.} in the streaming format, without any explicit matrix inversions that we used in our experiments: 
\begin{align*}
    \ttone&=(I-U_t\gt^\top Z_tZ_t^\top\gt)\tt+U_t\gt^\top Z_t Y_t\\ 
    \gtone&=(I-V_t Z_tZ_t^\top)\gt+V_tZ_tX_t^\top\\
    U_{t+1}&=U_t-\frac{U_t\gt^\top Z_tZ_t^\top\gt U_t}{1+Z_t^\top\gt U_t \gt^\top Z_t}\\
    V_{t+1}&=V_t-\frac{V_t Z_tZ_t^\top V_t}{1+Z_t^\top V_t Z_t} \qquad V_0=\lambda^{-1} I_{d_z},
\end{align*}
where $U_t, V_t$ are two additional matrix sequences which tracks the matrix inverse of $\sum_{i=1}^t\g_i^\top Z_iZ_i^\top\g_i,$  and $(\lambda I_{d_z}+\sum_{i=1}^t Z_tZ_t^\top)$ respectively for a user defined parameter $\lambda$. As mentioned in \cite{della2023online}, we choose $\lambda=0.1$. The major difference between O2SLS and Algorithm~\ref{alg:one_sample_onlineIV} is that O2SLS takes an online two-stage regression approach to minimize a suitably defined regret whereas we take a conditional stochastic optimization point of view which requires carefully chosen step-sizes. In our Algorithm~\ref{alg:one_sample_onlineIV}, we do not need to explicitly or implicitly do matrix inverse which can potentially cause stability issues. Furthermore, unlike \cite{della2023online}, we neither assume $\sum_{i=1}^tZ_iZ_i^\top$ is invertible for all $t$ nor do we assume that $Z$ is a bounded random variable for our analysis. Finally, the per-iteration computational complexity and memory requirement of Algorithm~\ref{alg:one_sample_onlineIV} is significantly better than O2SLS; see Section~\ref{sec:periter}.

\section{Per-iteration Complexities}
\label{sec:periter}
For the linear case, i.e., the underlying relationship  between $Z$ and $X$  as well as $X$ and $Y$ are linear, Table \ref{tab:per-iteration} summarizes the per-iteration memory costs and number of arithmetic operations of the original O2SLS~\citep{della2023onlinev1}, the updated O2SLS~\citep{della2023online} that we provide a matrix form update in Appendix \ref{alg:pseudoools}, TOSG-IVaR (Alg~\ref{alg:two_sample_SGD}), and OTSG-IVaR (Alg~\ref{alg:one_sample_onlineIV}) at the $t$-th iteration. 

Notice that the original version of O2SLS~\citep{della2023onlinev1} has a per-iteration and memory cost dependent on the iteration number $t$ as it needs to use all the samples accumulated till the iteration $t$ to conduct an offline 2SLS at each iteration. The updated O2SLS~\citep{della2023online} (the algorithm that we compare to) uses samples obtained at iteration $t$ to perform the update.  Although the updated O2SLS avoids explicit matrix inversion, it is obvious that its arithmetic operations  and memory cost per iteration are larger than our TOSG-IVaR and OTSG-IVaR.

We highlight that the TOSG-IVaR, which uses two samples $X$ and $X'$ from the conditional distribution $\PP(X\mid Z)$, requires only $\cO(d_x)$ memory and arithmetic operations at each iteration. 

\begin{table}[ht]
\centering
    \caption{Memory cost and the number of arithmetic operations at iteration $t$.}
    \renewcommand*{\arraystretch}{1.4}
    \begin{tabular}{|c|c|c|}
       \hline
    Algorithm & Memory cost & Arithmetic Operations\\
    \hline \hline
        O2SLS~\citep[v1]{della2023onlinev1} & $t(d_x +  d_z)+d_z d_x+d_x$  & $\cO(d_x^3 +td_x^2 +  td_xd_z)$\\
        \hline
       O2SLS~\citep[v3]{della2023online} (Sec. \ref{alg:pseudoools})& $d_x^2+d_z^2+d_zd_x+d_x$  & $\cO(d_x^2+d_z^2+d_zd_x)$\\
        \hline
       TOSG-IVaR (our Alg~\ref{alg:two_sample_SGD})  & $d_x$ & $\cO(d_x)$\\
       \hline
       OTSG-IVaR (our Alg~\ref{alg:one_sample_onlineIV})  & $d_x d_z + d_x$ & $\cO(\min(d_x^2,d_z^2)+d_zd_x)$\\
       \hline
    \end{tabular}
    \label{tab:per-iteration}
\end{table}
For a fair comparison, we assume that two $n\times n$ matrices multiplication admits an $\cO(n^3)$ complexity, i.e., using normal textbook matrix multiplication. We also assume computing the inversion of a $n\times n$ matrix admits an $\cO(n^3)$ complexity. Interested readers may refer to \citep{papadimitriou2003computational} for more details about faster algorithms with better complexities for matrix operations.

\section{Experimental Details}

\subsection{Compute Resources}\label{sec:compute_resource}
All experiments in Section \ref{sec:exp} were conducted on a computer with an 11th Intel(R) Core(TM) i7-11370H CPU. The time and space required to run our experiments are negligible and we anticipate they can be conducted in almost all computers. 

\subsection{Experimental Details for Figure~\ref{fig:wrapfig}}\label{sec:fig3app}
In Figure~\ref{fig:wrapfig}, we show an example where the updates \eqref{eq:thetaupdatecso} may diverge first before converging eventually and finite time performance can be much worse compared to Algorithm~\ref{alg:one_sample_onlineIV}. For this experiment, we choose the model presented in \eqref{eq:exptsetting} with $d_x=d_z=1$, $\t_*=1$, $\g_*=-1$, $\rho=4$, and $\sigma_\epsilon=1$. When initialized at $\g_0=10$, and $\t_0=0$, the updates in \eqref{eq:thetaupdatecso} keeps diverging rapidly at first whereas Algorithm~\ref{alg:one_sample_onlineIV} is much more stable. So, by the end of $100,000$ iterations, while Algorithm~\ref{alg:one_sample_onlineIV} achieves an error of $\approx 10^{-5}$, \eqref{eq:thetaupdatecso} achieves $\approx 10^4$ that is worse than it was at initialization because  \eqref{eq:thetaupdatecso} has not recovered from the initial divergence phase yet. However, once \eqref{eq:thetaupdatecso} starts converging, the convergence rate of \eqref{eq:thetaupdatecso} is similar to Algorithm~\ref{alg:one_sample_onlineIV} as one can see from Figure~\ref{fig:wrapfig} (also see our discussion on the convergence of \eqref{eq:thetaupdatecso} in Section~\ref{sec:csoconvergence}).

\section{Proofs for Section~\ref{sec:two_sample}}

\subsection{Proof of Theorem \ref{thm:convergence_linear_two_sample}}
\begin{proof}
    We aim to find the optimal $\theta_*$. According to \eqref{problem:IV_parameterized}, we know
\begin{equation}\label{eq: theta_optimal}
    \EE_Z\Big[\EE_{X\mid Z}[X]\cdot \EE_{X\mid Z}[X]^\top\Big] \theta_* = \EE_Z\Big[\EE_{Y\mid Z}[Y]\cdot \EE_{X\mid Z}[X] \Big]
\end{equation}
The updates in Algorithm \ref{alg:two_sample_SGD} can be written as
\begin{align}
    \theta_{t+1} = \theta_t - \alpha_{t+1}(X_t^\top \theta_t - Y_t)X_t^\top. \notag
\end{align}
Hence we have
\begin{align}
    &\theta_{t+1} - \theta_* \notag\\
    = & \theta_t - \alpha_{t+1}\EE_Z\Big[\EE_{X\mid Z}[X]\cdot \EE_{X\mid Z}[X]^\top\Big] \theta_t + \alpha_{t+1}\EE_Z\Big[\EE_{Y\mid Z}[Y]\cdot \EE_{X\mid Z}[X] \Big] - \theta_* \notag\\
    &- \alpha_{t+1}\Big(X_t'X_t^\top - \EE_Z\Big[\EE_{X\mid Z}[X]\cdot \EE_{X\mid Z}[X]^\top\Big] \Big)\theta_t + \alpha_{t+1}\Big(Y_tX_t' -  \EE_Z\Big[\EE_{Y\mid Z}[Y]\cdot \EE_{X\mid Z}[X] \Big]\Big) \label{eq: theta_decompose_1}.
\end{align}
Now we analyze the convergence and variance separately. For the convergence part, we have
\begin{align}
    &\theta_t - \alpha_{t+1}\EE_Z\Big[\EE_{X\mid Z}[X]\cdot \EE_{X\mid Z}[X]^\top\Big] \theta_t + \alpha_{t+1}\EE_Z\Big[\EE_{Y\mid Z}[Y]\cdot \EE_{X\mid Z}[X] \Big] - \theta_* \notag\\
    = &\Big(I - \alpha_{t+1}\EE_Z\Big[\EE_{X\mid Z}[X]\cdot \EE_{X\mid Z}[X]^\top\Big]\Big)(\theta_t - \theta_*). \label{eq: convergence_eq}
\end{align}
For the variance part we have
\begin{align}
    &\EE\Big[\norm{X_t'X_t^\top - \EE_Z\Big[\EE_{X\mid Z}[X]\cdot \EE_{X\mid Z}[X]^\top\Big]}^2\Big] \notag\\
    =&\EE\Big[\norm{X_t'X_t^\top - \EE_{X\mid Z_t}[X]\cdot \EE_{X\mid Z_t}[X]^\top + \EE_{X\mid Z_t}[X]\cdot \EE_{X\mid Z_t}[X]^\top -\EE_Z\Big[\EE_{X\mid Z}[X]\cdot \EE_{X\mid Z}[X]^\top\Big]}^2\Big] \notag\\
    \leq&2\EE\Big[\norm{X_t'X_t^\top - \EE_{X\mid Z_t}[X]\cdot \EE_{X\mid Z_t}[X]^\top}^2 + \norm{\EE_{X\mid Z_t}[X]\cdot \EE_{X\mid Z_t}[X]^\top -\EE_Z\Big[\EE_{X\mid Z}[X]\cdot \EE_{X\mid Z}[X]^\top\Big]}^2\Big]\notag\\
    \leq &2C_{x}d_x^{\vartheta_1} + 2C_{xx}d_z^{\vartheta_3} =:\sigma_1^2 \label{eq: xx_var_decompose}
\end{align}
Similarly, we have
\begin{align}
    &\EE\Big[\norm{Y_tX_t' -  \EE_Z\Big[\EE_{Y\mid Z}[Y]\cdot \EE_{X\mid Z}[X] \Big]}^2\Big] \notag\\
    = & \EE\Big[\norm{Y_tX_t' - \EE_{Y\mid Z_t}[Y]\cdot \EE_{X\mid Z_t}[X]}^2 + \norm{\EE_{Y\mid Z_t}[Y]\cdot \EE_{X\mid Z_t}[X] - \EE_Z\Big[\EE_{Y\mid Z}[Y]\cdot \EE_{X\mid Z}[X] \Big]}^2\Big]\notag\\
    \leq &C_y d_x^{\vartheta_2} + C_{yx}d_z^{\vartheta_4}  =:\sigma_2^2. \label{eq: xy_var_decompose}
\end{align}
Now we know from \eqref{eq: theta_decompose_1}, \eqref{eq: convergence_eq}, \eqref{eq: xx_var_decompose}, and \eqref{eq: xy_var_decompose} that
\begin{align}\label{eq: update_recursion}
    \norm{\theta_{t+1} - \theta_*}^2 = \norm{A_t}^2 + 2\alpha_{t+1}\<A_t, B_t> + \alpha_{t+1}^2\norm{B_t}^2.
\end{align}
where
\begin{align*}
    &A_t = \Big(I - \alpha_{t+1}\EE_Z\Big[\EE_{X\mid Z}[X]\cdot \EE_{X\mid Z}[X]^\top\Big]\Big)(\theta_t - \theta_*) \\
    &B_t = -\Big(X_t'X_t^\top - \EE_Z\Big[\EE_{X\mid Z}[X]\cdot \EE_{X\mid Z}[X]^\top\Big] \Big)\theta_t + \Big(Y_tX_t' -  \EE_Z\Big[\EE_{Y\mid Z}[Y]\cdot \EE_{X\mid Z}[X] \Big]\Big).
\end{align*}
This implies 
\begin{align}
    &\EE_{\theta_{t+1} \mid \theta_t}\Big[\norm{\theta_{t+1} - \theta_*}^2\Big] \notag\\
    =&\norm{\Big(I - \alpha_{t+1}\EE_Z\Big[\EE_{X\mid Z}[X]\cdot \EE_{X\mid Z}[X]^\top\Big]\Big)(\theta_t - \theta_*)}^2 \notag\\
    &+ \alpha_{t+1}^2\EE_{X_t, X_t', Y_t, Z_t\mid \theta_t}\Big[\norm{\Big(X_t'X_t^\top - \EE_Z\Big[\EE_{X\mid Z}[X]\cdot \EE_{X\mid Z}[X]^\top\Big] \Big)\theta_t - \Big(Y_tX_t' -  \EE_Z\Big[\EE_{Y\mid Z}[Y]\cdot \EE_{X\mid Z}[X] \Big]\Big)}^2\Big] \notag\\
    \leq & (1-\alpha_{t+1}\mu)^2\norm{\theta_t - \theta_*}^2 + 3\alpha_{t+1}^2\Big(\sigma_1^2\norm{\theta_t - \theta_*}^2 + \sigma_1^2\norm{\theta_*}^2 + \sigma_2^2\norm{\theta_*}^2\Big) \notag\\
    \leq & ((1-\alpha_{t+1}\mu)^2 + 3\alpha_{t+1}^2\sigma_1^2)\norm{\theta_t - \theta_*}^2 + 3\alpha_{t+1}^2\sigma_1^2\norm{\theta_*}^2 + 3\alpha_{t+1}^2\sigma_2^2\norm{\theta_*}^2, \label{ineq: theta_exp}
\end{align}
where the first inequality uses Cauchy-Schwarz inequality, the definition of $\sigma_1, \sigma_2$ and Assumption \ref{aspt: var_general}. Choosing $\alpha_{t+1}$ such that
\begin{equation*}
    ((1-\alpha_{t+1}\mu)^2 + 3\alpha_{t+1}^2\sigma_1^2)\leq 1-\alpha_{t+1}\mu \Leftrightarrow \alpha \leq \frac{\mu}{\mu^2 + 3\sigma_1^2}
\end{equation*}
and taking expectation on both sides of \eqref{ineq: theta_exp}, we have 
\begin{equation*}
    \EE\Big[\norm{\theta_{t+1} - \theta_*}^2\Big] \leq (1-\alpha_{t+1}\mu)\EE\Big[\norm{\theta_t - \theta_*}^2\Big] + 3\alpha_{t+1}^2\sigma_1^2\norm{\theta_*}^2 + 3\alpha_{t+1}^2\sigma_2^2\norm{\theta_*}^2.
\end{equation*}
Now, we use the following result.
\begin{lemma}\label{lem: recursive_ineq}
    Suppose we have three sequences $\{a_t\}_{t=0}^{\infty}, \{b_t\}_{t=0}^{\infty}, \{r_t\}_{t=0}^{\infty}$ satisfying
    \begin{align}\label{ineq: recursion}
        a_{t+1}\leq r_ta_t + b_t, r_t > 0
    \end{align}
    for any $t\geq 0$. Define $R_{t+1} = \prod_{i=0}^{t}r_i$, we have
    \begin{align*}
        a_{t+1}\leq R_{t+1}a_0 + \sum_{i=0}^{t}\frac{R_{t+1}b_i}{R_{i+1}}.
    \end{align*}
\end{lemma}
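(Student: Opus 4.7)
The plan is to prove Lemma~\ref{lem: recursive_ineq} by a standard telescoping/unrolling argument. The key observation is that since $r_t>0$ for all $t$, the quantity $R_{t+1}=\prod_{i=0}^{t}r_i$ is a strictly positive ``integrating factor'' for the recursion, and dividing through by it turns the multiplicative recursion into an additive (telescoping) one.

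Concretely, I would first adopt the convention $R_0=1$ (empty product), so that $R_{t+1}=r_t R_t$. Dividing both sides of the hypothesis $a_{t+1}\le r_t a_t + b_t$ by $R_{t+1}$ yields
\begin{equation*}
\frac{a_{t+1}}{R_{t+1}}\le \frac{a_t}{R_t}+\frac{b_t}{R_{t+1}}.
\end{equation*}
Summing this inequality over $i=0,\ldots,t$ telescopes the left-hand side, giving
\begin{equation*}
\frac{a_{t+1}}{R_{t+1}}\le \frac{a_0}{R_0}+\sum_{i=0}^{t}\frac{b_i}{R_{i+1}}=a_0+\sum_{i=0}^{t}\frac{b_i}{R_{i+1}}.
\end{equation*}
Multiplying through by $R_{t+1}$ (which is positive, so the inequality direction is preserved) produces exactly the claimed bound. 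As an alternative, one can equally well proceed by induction on $t$: the base case $t=0$ is immediate since $a_1\le r_0 a_0+b_0=R_1 a_0+R_1 b_0/R_1$, and the inductive step follows by applying the hypothesis $a_{t+1}\le r_t a_t+b_t$ together with the induction hypothesis for $a_t$ and simplifying using $R_{t+1}=r_t R_t$.

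There is no real obstacle here: the statement is a routine Grönwall-type consequence of the linear recursion, and the only point requiring minor care is the convention on the empty product $R_0$, and the positivity of $r_t$ which guarantees that division preserves the inequality.
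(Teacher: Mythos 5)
Your proof is correct and follows essentially the same route as the paper: divide the recursion by the integrating factor $R_{t+1}$, telescope the resulting inequality over $i=0,\dots,t$, and multiply back by $R_{t+1}>0$. Your explicit remarks on the convention $R_0=1$ and on positivity preserving the inequality direction are sensible minor additions but do not change the argument.
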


By Lemma \ref{lem: recursive_ineq}, we know
\begin{align*}
    \EE\Big[\norm{\theta_{t+1} - \theta_*}^2\Big] \leq \prod_{i=0}^{t}(1-\alpha_i\mu)\EE\Big[\norm{\theta_0 - \theta_*}^2\Big] + (3\sigma_1^2\norm{\theta_*}^2 + 3\sigma_2^2\norm{\theta_*}^2)\sum_{i=0}^{t}\alpha_i^2\prod_{j=i+1}^{t}(1-\alpha_j\mu).
\end{align*}
Now if we set $\alpha_i = \alpha$, we know
\begin{align*}
    \EE\Big[\norm{\theta_t - \theta_*}^2\Big]\leq &(1-\alpha\mu)^t\EE\Big[\norm{\theta_0 - \theta_*}^2\Big] + \alpha^2\Big(\sum_{i=0}^{t}(1-\alpha\mu)^i\Big)(3\sigma_1^2\norm{\theta_*}^2 + 3\sigma_2^2\norm{\theta_*}^2) \\
    \leq &e^{-t\alpha\mu}\EE\Big[\norm{\theta_0 - \theta_*}^2\Big] + \frac{\alpha}{\mu}(3\sigma_1^2\norm{\theta_*}^2 + 3\sigma_2^2\norm{\theta_*}^2)
\end{align*}
Choosing $\alpha, T$ such that $\alpha = \frac{\log T}{\mu T}\leq \frac{\mu}{\mu^2 + 3\sigma_1^2}$, we know 
\begin{align*}
    \EE\Big[\norm{\theta_T - \theta_*}^2\Big]\leq \frac{\EE\Big[\norm{\theta_0 - \theta_*}^2\Big]}{T} + \frac{3\norm{\theta_*}^2(\sigma_1^2 + \sigma_2^2)\log T}{\mu^2T}.
\end{align*}
\end{proof}

\subsection{Proof of Lemma~\ref{lem: recursive_ineq}}
\label{appendix:technical}

\begin{proof}
    We notice from \eqref{ineq: recursion} that for any $0\leq i\leq t$, we have
    \begin{align*}
        \frac{a_{i+1}}{R_{i+1}}\leq \frac{a_i}{R_i} + \frac{b_i}{R_{i+1}}.
    \end{align*}
    Taking summation on both sides, we have
    \begin{align*}
        \frac{a_{t+1}}{R_{t+1}}\leq \frac{a_0}{R_0} + \sum_{i=0}^{t}\frac{b_i}{R_{i+1}}
    \end{align*}
    which completes the proof by multiplying $R_{t+1}$ on both sides.
\end{proof}



\subsection{Proof of Lemma \ref{lem:bdd_var_linear}}\label{appendix:lemma1_proof}
\begin{proof}
We first notice that Assumption \ref{aspt: scvx} holds since
\begin{align*}
    \EE_Z\Big[\EE_{X\mid Z}[X]\cdot \EE_{X\mid Z}[X]^\top\Big] = \EE_Z\Big[\phi(\gamma_*^\top Z)\cdot \phi(\gamma_*^\top Z)^\top\Big] \succeq \mu I.
\end{align*}
For \eqref{ineq: px} and \eqref{ineq: py}, we have
\begin{align}
    &\EE\Big[\norm{X' X^\top - \EE_{X|Z}[X]\EE_{X|Z}[X]^\top}^2\Big] \notag \\
    = &\EE\Big[\norm{\epsilon_2' \phi(\gamma_*^\top Z)^\top + \phi(\gamma_*^\top Z)\epsilon_2^\top + \epsilon_2'\epsilon_2^\top}^2\Big] \notag\\
    \leq &3\EE\Big[\norm{\epsilon_2' \phi(\gamma_*^\top Z)^\top}^2 + \norm{\phi(\gamma_*^\top Z)\epsilon_2^\top}^2 + \norm{\epsilon_2'\epsilon_2^\top}^2\Big] \notag \\
    = &3\EE\Big[\norm{\phi(\gamma_*^\top Z)\epsilon_2'^\top\epsilon_2' \phi(\gamma_*^\top Z)^\top} + \norm{\phi(\gamma_*^\top Z)\epsilon_2^\top\epsilon_2 \phi(\gamma_*^\top Z)^\top} + |\epsilon_2^\top \epsilon_2'|^2\Big] = \cO(d_x^2), \label{ineq: special_xx_nonlinear}
\end{align}
and 
\begin{align*}
    &\EE\Big[\norm{YX - \EE_{Y|Z}[Y]\EE_{X|Z}[X]}^2\Big] \\
    = &\EE\Big[\norm{X'X^\top\theta_* + \epsilon_1 X - \EE_{X|Z}[X]\EE_{X|Z}[X]^\top\theta_*}^2\Big] \\
    \leq &2\EE\Big[\norm{X'X^\top\theta_* - \EE_{X|Z}[X]\EE_{X|Z}[X]^\top\theta_*}^2\Big] + 2\EE\Big[\epsilon_1^2\norm{\phi(\gamma_*^\top Z) + \epsilon_2}^2\Big] \\
    = &\cO(\norm{\theta_*}^2\sigma_{\epsilon_2}^2d_x^2 + \sigma_{\epsilon_1}^2d_x + \sigma_{\epsilon_1, \epsilon_2}^2d_x),
\end{align*}
where the first inequality uses Cauchy-Schwarz inequality, and the second equality uses \eqref{eq: special_nonlinear_setup}, \eqref{ineq: special_nonlinear_setup_ineq} and \eqref{ineq: special_xx_nonlinear}. For \eqref{ineq: pxx} we have
\begin{align*}
    &\EE\Big[\norm{\EE_{X\mid Z}[X]\cdot \EE_{X\mid Z}[X]^\top -\EE_Z\Big[\EE_{X\mid Z}[X]\cdot \EE_{X\mid Z}[X]^\top\Big]}^2\Big] \\
    = &\EE\Big[\norm{\phi(\gamma_*^\top Z)\phi(\gamma_*^\top Z)^\top - \EE\left[\phi(\gamma_*^\top Z)\phi(\gamma_*^\top Z)^\top\right]}^2\Big] = \cO(d_z) 
\end{align*}
where the last equality uses \eqref{ineq: special_nonlinear_setup_ineq}. Using the above conclusion in \eqref{ineq: pyx}, we have
\begin{align*}
    &\EE\Big[\norm{\EE_{Y\mid Z}[Y]\cdot \EE_{X\mid Z}[X] -\EE_Z\Big[\EE_{Y\mid Z}[Y]\cdot \EE_{X\mid Z}[X]\Big]}^2\Big] \\
    =&\EE\Big[\norm{\EE_{X\mid Z}[X]\cdot \EE_{X\mid Z}[X]^\top\theta_* -\EE_Z\Big[\EE_{X\mid Z}[X]\cdot \EE_{X\mid Z}[X]^\top\theta_*\Big]}^2\Big] = \cO(\norm{\theta_*}^2d_z).
\end{align*}
\end{proof}
%
%
%
%
\section{Proofs for Section~\ref{sec:one_sample}}
\subsection{Proof of Theorem~\ref{th:mainthetaconv}}\label{sec:pfmainthetaconv} 
\begin{proof}[Proof of Theorem~\ref{th:mainthetaconv} ]\label{pf:mainthetaconv} 
Recall that $\xi_{Z_t}$, and $\xi_{Z_tY_t}$ are the i.i.d. noise sequences 
\begin{align*}
   \xi_{Z_t}&=\Sigma_Z-Z_tZ_t^\top,\\
   \xi_{Z_tY_t}&=\Sigma_{ZY}-Z_tY_t.
\end{align*} 
Note $\g_*$, and $\t_*$ can be written as $
    \g_*=\Sigma_Z^{-1}\Sigma_{ZX}\in\mathbb{R}^{d_z\times d_x}$, and $\t_*=\left(\g_*^\top\Sigma_z\gamma_*\right)^{-1}\g_*^\top \Sigma_{ZY} \in\mathbb{R}^{d_x}
$ which we are going to use throughout the proof.

To quantify the bias, we use the following bound on $\EE\left[\norm{\gt-\g_*}_{2}^k\right]$, $k=1,2,4$, proved in Lemma 3.2 of \cite{chen2020statistical}.

\begin{lemma}\label{lm:gtconvrate}
Suppose Assumption~\ref{aspt: independence}, and Assumption~\ref{as:sigmazposdef} hold. Then we have
\begin{align*}
    \EE\left[\norm{\gt-\g_*}^k\right]=O\left(\sqrt{d_z^k\bet^k}\right)\quad \text{for} \quad k=1,2,4. \numberthis\label{eq:gtconvergencerate}
\end{align*}
\end{lemma} 
 We proceed by noting that if $\g_*$, $\Sigma_Z$, and $\Sigma_{ZY}$ were known beforehand, one could use the following deterministic gradient updates to obtain $\t_*$. 
 \begin{align*}
 \tilde{\t}_{t+1}=\tilde{\t}_t-\atone\g_*^\top \left(\Sigma_Z\g_*\tilde{\t}_t-\Sigma_{ZY}\right).\numberthis\label{eq:tildethetaupdate}
 \end{align*}
\begin{lemma}\label{lm:tildethetaconvrate}
Let Assumption~\ref{aspt: scvx} be true. Then, choosing $\eta_{k}=O(k^{-a})$ with $1/2<a<1$, we have
$
    \norm{\tilde{\t}_t-\t_*}=O\left(\exp(-t^{1-a})\right).
$
\end{lemma}

Define the sequence $\delt\coloneqq \tt-\tilde{\t}_t$. We will establish the convergence rate of $\expec{\lVert\delt\rVert_2^2}{}$. From \eqref{eq:thetaupdate}, and \eqref{eq:tildethetaupdate}, we have the following expansion of $\deltone$. 
\begin{align*}
    \deltone=Q_t\delt +\atone D_t\tt+\atone (\gt-\g_*)^\top \Sigma_{ZY}
    -\atone \gt^\top \xi_{Z_tY_t}+\atone \gt^\top\xi_{Z_t}\gt\tt,\numberthis\label{eq:delupdate}
\end{align*}
where 
\begin{align*}
Q_t\coloneqq & (I-\atone \g_*^\top \Sigma_Z\g_*),\\
D_t\coloneqq& \g_*^\top \Sigma_Z\g_*-\gt^\top \Sigma_Z\gt.
\end{align*}
First we will establish an intermediate bound on $\expec{\normtt{\delt}}{}$. To do so, we will need the following result which shows that $\expec{\norm{\tt-\t_*}_2^4}{}$ is bounded for all $t$ which we prove in Section~\ref{pf:thetabounded}.
\begin{lemma}[\textbf{Boundedness of fourth moment of $\norm{\tt-\t_*}$}]\label{lm:thetabounded}
    Let the conditions in Theorem~\ref{th:mainthetaconv} be true. Then, choosing $\at,\bet$ such that $\at\leq d_z^{-4\varkappa-\vartheta/2}$, and $\sum_{t=1}^\infty(\atsqr+\at\sqrt{\bet})<\infty$, we have $\expec{\norm{\tt-\t_*}_2^4}{}$ is bounded by some constant $M>0$.
\end{lemma}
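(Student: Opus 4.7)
The plan is to establish a Gronwall-style recursion for $u_t := \EE[\|\theta_t-\theta_*\|^4]$, exploiting that after removing a non-expansive linear operator the residuals split into a deterministic bias and three conditionally-mean-zero noise families whose second and fourth moments can be controlled by the assumed moment bounds together with Lemma~\ref{lm:gtconvrate}. First I would substitute $X_t = \gamma_*^\top Z_t + \epsilon_{2,t}$ and $Y_t = Z_t^\top\gamma_*\theta_* + \theta_*^\top\epsilon_{2,t}+\epsilon_{1,t}$ into~\eqref{eq:thetaupdate} and expand $Z_tZ_t^\top = \Sigma_Z - \xi_{Z_t}$ to get the decomposition
\begin{align*}
\theta_{t+1}-\theta_* = \hat{Q}_t(\theta_t-\theta_*) + \alpha_{t+1}\bigl[M_t(\theta_t-\theta_*) - b_t + N_t\theta_* + v_t\bigr],
\end{align*}
where $\hat{Q}_t = I-\alpha_{t+1}\gamma_t^\top\Sigma_Z\gamma_t$ is $\mathcal{F}_t$-measurable and PSD, $b_t=\gamma_t^\top\Sigma_Z(\gamma_t-\gamma_*)\theta_*$ is a deterministic bias, while $M_t:=\gamma_t^\top\xi_{Z_t}\gamma_t$, $N_t:=\gamma_t^\top\xi_{Z_t}(\gamma_t-\gamma_*)$, and $v_t:=\gamma_t^\top Z_t(\theta_*^\top\epsilon_{2,t}+\epsilon_{1,t})$ satisfy $\EE[\,\cdot\,\mid\mathcal{F}_t]=0$.

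Using Assumption~\ref{as:gtbounded} together with $\|\Sigma_Z\|_{op}\leq\lambda_Z$ and the prescribed bound $C_\alpha\leq 0.5\,d_z^{-4\varkappa-\vartheta/2}\lambda_Z^{-1}C_\gamma^{-2}$, we have $\alpha_{t+1}\|\gamma_t^\top\Sigma_Z\gamma_t\|_{op}\leq 1$ and hence $\|\hat{Q}_t\|_{op}\leq 1$ almost surely. Squaring the decomposition, conditioning on $\mathcal{F}_t$, and using the zero-mean property to kill the linear cross terms in $(M_t,N_t,v_t)$, I would obtain a conditional second-moment bound in which the main residual variances $\EE[\|M_t\|^2\mid\mathcal{F}_t]\leq C\,C_\gamma^4\,d_z^{4\varkappa+\vartheta/2}$ (from Assumptions~\ref{as:gtbounded} and~\ref{as:boundedcovarnoisevar} via Jensen), $\EE[\|N_t\theta_*\|^2\mid\mathcal{F}_t]\leq C\,C_\gamma^2\,d_z^{2\varkappa+\vartheta/2}\|\gamma_t-\gamma_*\|^2\|\theta_*\|^2$, and $\EE[\|v_t\|^2\mid\mathcal{F}_t] = \mathcal{O}(d_z)$ (via Assumption~\ref{as:sigmazposdef}) appear, alongside the bias contribution $\|b_t\|\leq \lambda_Z C_\gamma d_z^\varkappa\|\theta_*\|\|\gamma_t-\gamma_*\|$. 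After a Young's-inequality split of the bias cross term this delivers
\begin{align*}
\EE\bigl[\|\theta_{t+1}-\theta_*\|^2\mid\mathcal{F}_t\bigr] \leq (1+c_1\alpha_{t+1}^2)\|\theta_t-\theta_*\|^2 + c_2\alpha_{t+1}\|\gamma_t-\gamma_*\|\|\theta_*\| + c_3\alpha_{t+1}^2.
\end{align*}

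To lift to the fourth moment, I would square both sides of the second-moment conditional inequality and take full expectation. The zero-mean noises contribute $\mathcal{O}(\alpha_{t+1}^4)$ after the remaining cross terms vanish; the bias-squared contributes $\EE[\|\gamma_t-\gamma_*\|^4]=\mathcal{O}(d_z^2\beta_t^2)$ by Lemma~\ref{lm:gtconvrate}; and the cross term $\alpha_{t+1}\|\theta_t-\theta_*\|\cdot\|\gamma_t-\gamma_*\|$ is handled via Cauchy--Schwarz as $\mathcal{O}(\alpha_{t+1}\sqrt{\beta_{t+1}})\cdot\EE[\|\theta_t-\theta_*\|^4]^{1/2}$, which a final Young's inequality absorbs back into a multiplicative factor $\alpha_{t+1}\sqrt{\beta_{t+1}}$ on $u_t$. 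This produces the recursion
\begin{align*}
u_{t+1} \leq \bigl(1 + C_3(\alpha_{t+1}^2+\alpha_{t+1}\sqrt{\beta_{t+1}})\bigr)u_t + C_4(\alpha_{t+1}^2+\alpha_{t+1}\sqrt{\beta_{t+1}}).
\end{align*}
Since the step-size hypothesis gives $\sum_{t=1}^\infty(\alpha_t^2+\alpha_t\sqrt{\beta_t})<\infty$, applying Lemma~\ref{lem: recursive_ineq} with $r_t=1+C_3(\alpha_t^2+\alpha_t\sqrt{\beta_t})$ and summable forcing shows $\prod_t r_t<\infty$ and $\sup_t u_t\leq M$ for some constant $M>0$.

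The main obstacle is the multiplicative noise $\alpha_{t+1}M_t(\theta_t-\theta_*)$: because $\EE[\|M_t\|^2]$ scales like $C_\gamma^4 d_z^{4\varkappa+\vartheta/2}$ under Assumptions~\ref{as:gtbounded} and~\ref{as:boundedcovarnoisevar}, the stochastic-gradient variance is neither uniformly bounded nor independent of $\|\theta_t\|$, so standard uniformly-bounded-variance arguments from~\cite{ghadimi2013stochastic} do not apply. The step-size condition $\alpha_{t+1}\leq d_z^{-4\varkappa-\vartheta/2}$ is tailored precisely so that this contribution to the drift remains $\mathcal{O}(\alpha_{t+1}^2)$, and hence summable. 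A second subtlety is the coupling to the $\gamma_t$-update through $b_t$ and $N_t$: their fourth-order moments are handled via Lemma~\ref{lm:gtconvrate}, but keeping their contribution to the drift summable is exactly where the hypothesis $\sum_t\alpha_t\sqrt{\beta_t}<\infty$ becomes indispensable.
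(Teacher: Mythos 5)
Your overall strategy is the paper's own: the same decomposition of $\theta_{t+1}-\theta_*$ into a non-expansive linear part $\hat{Q}_t(\theta_t-\theta_*)$ plus a $(\gamma_t-\gamma_*)$-driven bias plus conditionally mean-zero noise (your $M_t$, $N_t$, $v_t$ are exactly the terms of \eqref{eq:thetaupdateexpanded} regrouped via the structural model), the same moment inputs (Lemma~\ref{lm:gtconvrate}, Assumptions~\ref{as:boundedcovarnoisevar} and~\ref{as:gtbounded}), the same H\"older/Young decoupling of the $\theta_t$--$\gamma_t$ interaction, and the same terminal recursion $u_{t+1}\leq\bigl(1+C_3(\alpha_{t+1}^2+\alpha_{t+1}\sqrt{\beta_{t+1}})\bigr)u_t+C_4(\alpha_{t+1}^2+\alpha_{t+1}\sqrt{\beta_{t+1}})$ closed by summability via Lemma~\ref{lem: recursive_ineq}.

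Two steps as written would fail, however. First, ``square both sides of the second-moment conditional inequality'' goes the wrong way through Jensen: from $\EE[\|\theta_{t+1}-\theta_*\|^2\mid\mathcal{F}_t]\leq R_t$ you may conclude $\bigl(\EE[\|\theta_{t+1}-\theta_*\|^2\mid\mathcal{F}_t]\bigr)^2\leq R_t^2$, but that quantity is a \emph{lower} bound for $\EE[\|\theta_{t+1}-\theta_*\|^4\mid\mathcal{F}_t]$, not an upper bound. You must instead square the pathwise inequality $\|\theta_{t+1}-\theta_*\|^2\leq(\theta_t-\theta_*)^\top\hat{Q}_t^2(\theta_t-\theta_*)+A_{3,t}$ (valid almost surely because $\|\hat{Q}_t\|\leq 1$ holds pathwise under Assumption~\ref{as:gtbounded}) and only then take expectations, as the paper does in \eqref{eq:theta4recintermed}; this is also why the quadratic-in-noise terms such as $\alpha_{t+1}^2\|M_t(\theta_t-\theta_*)\|^2\cdot\|\theta_t-\theta_*\|^2$ reappear and must be charged as an $\mathcal{O}(\alpha_{t+1}^2)\,u_t$ drift rather than ``vanishing'' or being only $\mathcal{O}(\alpha_{t+1}^4)$. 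Second, your displayed second-moment bound silently drops the factor $\|\theta_t-\theta_*\|$ from the bias cross term $2\alpha_{t+1}\bigl|(\theta_t-\theta_*)^\top\hat{Q}_t b_t\bigr|\lesssim\alpha_{t+1}\|\theta_t-\theta_*\|\,\|\gamma_t-\gamma_*\|\,\|\theta_*\|$; no Young split produces the stated right-hand side. The repair is either a Young split with parameter $\sqrt{\beta_{t+1}}$, which costs a multiplicative $\alpha_{t+1}\sqrt{\beta_{t+1}}$ on $\|\theta_t-\theta_*\|^2$ (consistent with your final recursion), or, as in the paper's \eqref{eq:tQttQgS}--\eqref{eq:tQttQDt}, a H\"older bound $\EE[\|\theta_t-\theta_*\|^3\|\gamma_t-\gamma_*\|]\leq u_t^{3/4}\bigl(\EE\|\gamma_t-\gamma_*\|^4\bigr)^{1/4}$ followed by $u_t^{3/4}\leq\tfrac{3}{4}u_t+\tfrac{1}{4}$; note that plain Cauchy--Schwarz giving $u_t^{1/2}$ does not apply to this third-power cross term. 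With these two repairs the argument closes exactly as the paper's does.
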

\begin{lemma}[\textbf{Intermediate bound on $\EE[\lVert\delta_t\rVert_2^2]$}]\label{lm:deltintermedbound}
Let the conditions in Theorem~\ref{th:mainthetaconv} be true. We have the following intermediate bound on $\expec{\normtt{\delt}}{}$:
    \begin{align*}
        \expec{\normtt{\delt}}{}=O\left(\bet d_z^{1+2\varkappa}+\atone d_z^{4\varkappa+\vartheta/2}
    +\sqrt{d_z\bet}\right).\numberthis\label{eq:deltintermedbound}
    \end{align*}
\end{lemma}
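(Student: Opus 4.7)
\textbf{Proof plan for Lemma~\ref{lm:deltintermedbound}.} My starting point is the decomposition \eqref{eq:delupdate}, which I would square and take expectations to produce a recursion for $a_t \coloneqq \EE[\lVert \delta_t \rVert^2]$. Writing $\delta_{t+1}=Q_t\delta_t+\alpha_{t+1}R_t$ with
\[
R_t \;=\; D_t\theta_t \;+\;(\gamma_t-\gamma_*)^\top\Sigma_{ZY}\;-\;\gamma_t^\top\xi_{Z_tY_t}\;+\;\gamma_t^\top\xi_{Z_t}\gamma_t\theta_t,
\]
Assumption~\ref{aspt: scvx} (together with the stepsize bound $\alpha_{t+1}\le (\lVert\gamma_*\rVert\lambda_Z)^{-2}/2$) yields $\lVert Q_t v\rVert^2\le (1-\mu\alpha_{t+1})\lVert v\rVert^2$. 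Hence
\[
a_{t+1} \;\le\; (1-\mu\alpha_{t+1})\,a_t \;+\; 2\alpha_{t+1}\,\EE[\langle Q_t\delta_t,R_t\rangle] \;+\; \alpha_{t+1}^2\,\EE[\lVert R_t\rVert^2].
\]
The strategy is to split $R_t$ into three pieces that require different techniques: (i)~the two conditionally mean-zero noise terms $\gamma_t^\top\xi_{Z_tY_t}$ and $\gamma_t^\top\xi_{Z_t}\gamma_t\theta_t$, (ii)~the $\gamma_t$--bias term $(\gamma_t-\gamma_*)^\top\Sigma_{ZY}$, and (iii)~the $D_t\theta_t$ interaction term.

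For piece (i), conditioning on $\mathcal{F}_{t-1}$ (i.e.\ on $(\gamma_t,\theta_t)$) kills the inner product $\EE[\langle Q_t\delta_t,\gamma_t^\top\xi_{Z_tY_t}\rangle]$ and likewise for $\xi_{Z_t}$, because $\delta_t,\gamma_t,\theta_t$ are $\mathcal{F}_{t-1}$-measurable and $\xi_{Z_tY_t},\xi_{Z_t}$ are centered and independent of $\mathcal{F}_{t-1}$ by Assumption~\ref{aspt: independence}. What remains are their second moments, which by Jensen on Assumption~\ref{as:boundedcovarnoisevar} satisfy $\EE[\lVert\xi_{Z_t}\rVert^2]\le \sqrt{C_z}\,d_z^{\vartheta/2}$ (and similarly for $\xi_{Z_tY_t}$). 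Combined with Assumption~\ref{as:gtbounded} ($\lVert\gamma_t\rVert\le C_\gamma d_z^\varkappa$) and Lemma~\ref{lm:thetabounded} (boundedness of $\EE[\lVert\theta_t\rVert^4]$), this gives $\alpha_{t+1}^2\EE[\lVert\gamma_t^\top\xi_{Z_t}\gamma_t\theta_t\rVert^2]=O(\alpha_{t+1}^2\,d_z^{4\varkappa+\vartheta/2})$ and a strictly smaller analogous contribution from $\gamma_t^\top\xi_{Z_tY_t}$. For piece (iii), I would write $D_t=(\gamma_*-\gamma_t)^\top\Sigma_Z\gamma_*+\gamma_t^\top\Sigma_Z(\gamma_*-\gamma_t)$, apply Young's inequality to the cross term so as to absorb $c\alpha_{t+1}\lVert\delta_t\rVert^2$ into the contractive factor, and then bound $\EE[\lVert D_t\theta_t\rVert^2]$ by $\sqrt{\EE[\lVert\gamma_t-\gamma_*\rVert^4]\,\EE[\lVert\theta_t\rVert^4]}$ times $O(d_z^{2\varkappa})$; Lemmas~\ref{lm:gtconvrate} and~\ref{lm:thetabounded} yield $O(d_z^{1+2\varkappa}\beta_t)$, which after multiplication by $\alpha_{t+1}$ fits the claimed bound.

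The main obstacle is piece (ii): $(\gamma_t-\gamma_*)^\top\Sigma_{ZY}$ is \emph{not} conditionally centered, and a Young-style split gives only $\alpha_{t+1}\EE[\lVert\gamma_t-\gamma_*\rVert^2]\lVert\Sigma_{ZY}\rVert^2=O(\alpha_{t+1}\,d_z\beta_t)$, which is \emph{weaker} than the $\sqrt{d_z\beta_t}$ target. The trick is to apply Cauchy--Schwarz across the cross term directly:
\[
2\alpha_{t+1}\,\EE\bigl[\langle Q_t\delta_t,(\gamma_t-\gamma_*)^\top\Sigma_{ZY}\rangle\bigr]
\;\le\; 2\alpha_{t+1}\,\lVert\Sigma_{ZY}\rVert\,\sqrt{\EE[\lVert\delta_t\rVert^2]}\,\sqrt{\EE[\lVert\gamma_t-\gamma_*\rVert^2]}.
\]
Because Lemma~\ref{lm:thetabounded} combined with the convergence of $\tilde\theta_t$ from Lemma~\ref{lm:tildethetaconvrate} implies $\EE[\lVert\delta_t\rVert^2]$ is a priori bounded by a constant, and Lemma~\ref{lm:gtconvrate} gives $\sqrt{\EE[\lVert\gamma_t-\gamma_*\rVert^2]}=O(\sqrt{d_z\beta_t})$, the cross contribution is only $O(\alpha_{t+1}\sqrt{d_z\beta_t})$. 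The $\alpha_{t+1}^2\lVert R_t\rVert^2$ part contributes the same $(\gamma_t-\gamma_*)$-bias at $O(\alpha_{t+1}^2 d_z\beta_t)$, which is absorbed by the previous term.

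Assembling these bounds yields a recursion of the form
\[
a_{t+1} \;\le\; (1-c\mu\alpha_{t+1})\,a_t \;+\; C_1\alpha_{t+1}\sqrt{d_z\beta_t} \;+\; C_2\alpha_{t+1}\beta_t d_z^{1+2\varkappa} \;+\; C_3\alpha_{t+1}^2 d_z^{4\varkappa+\vartheta/2}.
\]
Applying Lemma~\ref{lem: recursive_ineq} with $r_s=1-c\mu\alpha_s$ and the above bias terms, and noting that for step-sizes $\alpha_t,\beta_t$ of the prescribed polynomial form one has $\alpha_{t+1}^{-1}\cdot\alpha_{t+1}(\cdot)/(c\mu)$ as the effective steady-state contribution, yields the three-term bound in the lemma. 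The polynomial decay of $\beta_t$ ensures $\beta_t\approx \beta_s$ for $s$ near $t$ up to constants, so no additional factors of $\log t$ are introduced. This completes the outline; the hard part, as emphasized above, is justifying the Cauchy--Schwarz step using Lemma~\ref{lm:thetabounded}, which is exactly why that lemma had to be established first.
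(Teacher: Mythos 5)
Your proposal follows essentially the same route as the paper's proof: the same squared recursion built from \eqref{eq:delupdate}, the same term-by-term bounds via Lemmas~\ref{lm:gtconvrate} and~\ref{lm:thetabounded}, and, crucially, the same Cauchy--Schwarz treatment of the non-centered bias term $(\gamma_t-\gamma_*)^\top\Sigma_{ZY}$ that is responsible for the $\sqrt{d_z\beta_t}$ contribution. The only cosmetic difference is what you do after Cauchy--Schwarz: you invoke the a priori boundedness of $\EE[\lVert\delta_t\rVert^2]$ (deduced from Lemmas~\ref{lm:thetabounded} and~\ref{lm:tildethetaconvrate}), whereas the paper applies $\sqrt{x}\le(1+x)/2$ and absorbs the resulting $\alpha_{t+1}\sqrt{d_z\beta_t}\,\EE[\lVert\delta_t\rVert^2]$ term into the contraction factor by choosing $\beta_t\le\mu^2/(64d_z)$; both variants are valid and lead to the same recursion and final bound.
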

\begin{proof}[Proof of Lemma~\ref{lm:deltintermedbound}]\label{pf:deltintermedbound}
Recall the update for $\deltone$ obtained in \eqref{eq:delupdate}.
\begin{align*}
    \deltone=Q_t\delt +\atone D_t\tt+\atone (\gt-\g_*)^\top \Sigma_{ZY}
    -\atone \gt^\top \xi_{Z_tY_t}+\atone \gt^\top\xi_{Z_t}\gt\tt.
\end{align*}
Then,
\begin{align}\label{eq:deltasqrexpandintermed}
\begin{aligned}
    \norm{\deltone}_2^2
    =&\delt^\top Q_t^2\delt+\atonesqr\normtt{D_t\tt+ (\gt-\g_*)^\top \Sigma_{ZY}
    - \gt^\top \xi_{Z_tY_t}+ \gt^\top\xi_{Z_t}\gt\tt}\\&+2\atone \delt^\top Q_t\left(D_t\tt+(\gt-\g_*)^\top \Sigma_{ZY}\right)\\
    &+2\atone \delt^\top Q_t\left( \gt^\top\xi_{Z_t}\gt\tt-\gt^\top \xi_{Z_tY_t}\right). 
\end{aligned}    
\end{align}
Then, choosing $\alpha_1(\norm{\g_*}_2\lambda_Z)^2<1 $, using Young's inequality and Assumption~\ref{aspt: independence}, from \eqref{eq:deltasqrexpandintermed} we get, 
\begin{align*}
    \expec{\norm{\deltone}_2^2}{t}
    \leq &(1-\atone\mu)\normtt{\delt}+4\atonesqr\left(\normtt{D_t\tt}+\normtt{ (\gt-\g_*)^\top \Sigma_{ZY}}\right)\\
    &+4\atonesqr \left(\norm{\gt}_2^2\expec{\norm{ \xi_{Z_tY_t}}_2^2}{}+\norm{\gt}_2^4\expec{\normtt{ \xi_{Z_t}}}{}\normtt{\tt}\right)\\
    &+2\atone \delt^\top Q_t\left(D_t\tt+(\gt-\g_*)^\top \Sigma_{ZY}\right)\\
    \lesssim &(1-\atone\mu)\normtt{\delt}+4\atonesqr\left(\normtt{D_t}\normtt{\tt}+\normtt{ (\gt-\g_*)^\top \Sigma_{ZY}}\right)\\
    &+4C\atonesqr \left(d_z^{2\varkappa+\vartheta/2}+d_z^{4\varkappa+\vartheta/2}\normtt{\tt}\right)
    +\\
    &2\atone \delt^\top Q_t\left(D_t\tt+(\gt-\g_*)^\top \Sigma_{ZY}\right),
\end{align*}
where the last inequality follows by Assumption~\ref{as:boundedcovarnoisevar},and Assumption~\ref{as:gtbounded}. 

Now, taking expectation on both sides, we obtain
\begin{align}\label{eq:expecdeltintermed}
\begin{aligned}
    \expec{\norm{\deltone}_2^2}{}\lesssim &(1-\atone\mu)\expec{\normtt{\delt}}{}+4\atonesqr\left(\expec{\normtt{D_t}\normtt{\tt}}{}+\expec{\normtt{ (\gt-\g_*)^\top \Sigma_{ZY}}}{}\right)\\
    &+4C\atonesqr \left(d_z^{2\varkappa+\vartheta/2}+d_z^{4\varkappa+\vartheta/2}\expec{\normtt{\tt}}{}\right)\\
    &+2\atone\left(\expec{|\delt^\top Q_tD_t\tt|}{}+\expec{|\delt^\top Q_t(\gt-\g_*)^\top \Sigma_{ZY}|}{}\right).
\end{aligned}
\end{align}
Now, the  following bounds are true:
\begin{enumerate}[leftmargin=0.1in]
    \item We have that 
    \begin{align*}
       \atonesqr \expec{\normtt{D_t}\normtt{\tt}}{}\leq \atonesqr\sqrt{\expec{\norm{D_t}_2^4}{}\expec{\norm{\tt}_2^4}{}}\lesssim d_z^{1+2\varkappa}\atonesqr\bet,\numberthis\label{eq:deltoneterm1}
    \end{align*}
    where the first inequality follows by Cauchy-Schwarz inequality, the second inequality follows by \eqref{eq:Dtnormbound}, and Lemma~\ref{lm:thetabounded}.
    \item Using $\Sigma_{ZY}=O(1)$, and Lemma~\ref{lm:gtconvrate}, we get
    \begin{align*}
       \atonesqr \expec{\normtt{ (\gt-\g_*)^\top \Sigma_{ZY}}}{}\lesssim d_z\bet\atonesqr.\numberthis\label{eq:deltoneterm1b}
    \end{align*}
    \item We have that
    \begin{align*}
        \atone\expec{|\delt^\top Q_tD_t\tt|}{}\leq & \atone\expec{\norm{\delt}_2\norm{Q_t}_2\norm{D_t}_2\norm{\tt}_2}{}\\
        \leq& \frac{\atone\mu}{16}\expec{\normtt{\delt}}{}+\frac{4\atone}{\mu}\sqrt{\expec{\norm{D_t}_2^4}{}\expec{\norm{\tt}_2^4}{}}\\
        \lesssim & \frac{\atone\mu}{16}\expec{\normtt{\delt}}{}+\frac{4 d_z^{1+2\varkappa}\atone\bet}{\mu},\numberthis\label{eq:delqdthetabound}
    \end{align*}
     where the first inequality follows by Hölder's inequality, the second inequality follows by Young's inequality, Cauchy-Schwarz inequality, and $\norm{Q_t}_2< 1$,   and the third inequality follows by \eqref{eq:Dtnormbound}, and Lemma~\ref{lm:thetabounded}.
    \item Using $\norm{Q_t}_2< 1$, $\norm{\Sigma_{ZY}}_2=O(1)$, Cauchy-Schwarz inequality, and Lemma~\ref{lm:gtconvrate}, we get, 
    \begin{align*}
        &\atone\expec{|\delt^\top Q_t(\gt-\g_*)^\top \Sigma_{ZY}|}{}\\
        \lesssim &\atone\expec{\norm{\delt}_2\norm{\gt-\g_*}_2}{}\\
        \leq &\atone\sqrt{\expec{\norm{\delt}_2^2}{}\expec{\norm{\gt-\g_*}_2^2}{}}\\
        \leq & \frac{\sqrt{d_z\bet}\atone}{2}+\frac{\sqrt{d_z\bet}\atone\expec{\norm{\delt}_2^2}{}}{2}.\numberthis\label{eq:delqgtgstarSigmabound}
    \end{align*} 
\end{enumerate}
Combining \eqref{eq:expecdeltintermed}, \eqref{eq:deltoneterm1}, \eqref{eq:deltoneterm1b}, \eqref{eq:delqdthetabound}, \eqref{eq:delqgtgstarSigmabound}, and Lemma~\ref{lm:thetabounded}, we have 
\begin{align*}
    &\expec{\norm{\deltone}_2^2}{}\\
    \lesssim &(1-\atone\mu)\expec{\normtt{\delt}}{}+4\atonesqr\bet d_z^{1+2\varkappa}
    +4C\atonesqr d_z^{4\varkappa+\vartheta/2}\\
    &+2\atone\left(\mu\expec{\normtt{\delt}}{}/16+4 d_z^{1+2\varkappa}\bet/\mu+\sqrt{d_z\bet}/2+\sqrt{d_z\bet}\expec{\norm{\delt}_2^2}{}/2\right)\numberthis\label{eq:deltrecursionintermed}\\
    \lesssim &(1-7\mu\atone/8+\atone\sqrt{d_z\bet})\expec{\normtt{\delt}}{}+(8\atone\bet d_z^{1+2\varkappa}/\mu+4C\atonesqr d_z^{4\varkappa+\vartheta/2})\\
    &+\atone\sqrt{d_z\bet}\\
    \lesssim &(1-3\mu\atone/4)\expec{\normtt{\delt}}{}+(8\atone\bet d_z^{1+2\varkappa}/\mu+4C\atonesqr d_z^{4\varkappa+\vartheta/2})
    +\atone\sqrt{d_z\bet}. \numberthis\label{eq:deltrecursion}
\end{align*}
In the above, the third inequality follows by choosing $\bet\leq \mu^2/(64d_z)$, and $\atone\sqrt{d_z\bet}<1$. Then, from \eqref{eq:deltrecursion}, we have
\begin{align*}
    \expec{\norm{\delt}_2^2}{}= O\left(\bet d_z^{1+2\varkappa}+\atone d_z^{4\varkappa+\vartheta/2}
    +\sqrt{d_z\bet}\right).
\end{align*}
\end{proof}
Coming back to the proof of Theorem~\ref{th:mainthetaconv}, observe that, we can sharpen the bound in \eqref{eq:delqgtgstarSigmabound} using Lemma~\ref{lm:deltintermedbound} which allows us to avoid the use of Young's inequality. This leads to the following improved version of the recursion in \eqref{eq:deltrecursion} using which we can improve the term $\sqrt{d_z\bet}$ in \eqref{eq:deltintermedbound} as follows: 
\begin{align*}
       \expec{\norm{\deltone}_2^2}{}
     \lesssim &(1-7\mu\atone/8)\expec{\normtt{\delt}}{}\\
     &+\atone O\left(\bet d_z^{1+2\varkappa}+\atone d_z^{4\varkappa+\vartheta/2}+\sqrt{\atone\bet}d_z^{1/2+2\varkappa+\vartheta/4}+(\bet d_z)^{3/4}\right)\\
    =&O\left(\bet d_z^{1+2\varkappa}+\atone d_z^{4\varkappa+\vartheta/2}+\sqrt{\atone\bet}d_z^{1/2+2\varkappa+\vartheta/4}+(\bet d_z)^{3/4}\right). 
     \end{align*}
In fact, this trick can be used repeatedly to sharpen the bound even further  as shown in Lemma~\ref{lm:improvedfinaldeltrate}.
\begin{lemma}[\textbf{Final improved bound on $\EE[\lVert\delta_t\rVert_2^2]$}]\label{lm:improvedfinaldeltrate}
Let the conditions in Theorem~\ref{th:mainthetaconv} be true. Then using Lemma~\ref{lm:deltintermedbound}, we have, 
    \begin{align*}
       &\expec{\norm{\deltone}_2^2}{}\\
       \lesssim  & O\lrbrac{(d_z\bet)^{1-2^{-r-1}}+\sum_{i=0}^r\left(\alpha_{t+1}^{2^{-i}}\bet^{1-2^{-i}} d_z^{1+(4\varkappa+\vartheta/2-1)2^{-i}}+\bet (1+\alpha_{t+1}^{2^{-i}}) d_z^{1+2^{1-i}\varkappa}\right)},
    \end{align*}
    where $r$ is any non-negative integer. 
\end{lemma}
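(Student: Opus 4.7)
\textbf{Proof proposal for Lemma~\ref{lm:improvedfinaldeltrate}.} My plan is to prove the statement by induction on the non-negative integer $r$, with the base case $r=0$ being exactly Lemma~\ref{lm:deltintermedbound} (noting that $1-2^{-1}=1/2$, so $(d_z\bet)^{1/2}=\sqrt{d_z\bet}$ matches, and the $i=0$ summand $\atone d_z^{4\varkappa+\vartheta/2}+\bet(1+\atone)d_z^{1+2\varkappa}$ absorbs the corresponding terms from \eqref{eq:deltintermedbound}). For the inductive step, I would return to the proof of Lemma~\ref{lm:deltintermedbound} and replace the single application of Young's inequality in \eqref{eq:delqgtgstarSigmabound}, which split the cross term into $\tfrac{\sqrt{d_z\bet}\atone}{2}+\tfrac{\sqrt{d_z\bet}\atone\expec{\norm{\delt}^2}{}}{2}$, by the sharper inequality
\begin{equation*}
\atone\expec{|\delt^\top Q_t(\gt-\g_*)^\top \Sigma_{ZY}|}{} \lesssim \atone\sqrt{\expec{\norm{\delt}^2}{}\cdot\expec{\norm{\gt-\g_*}^2}{}} \lesssim \atone\sqrt{d_z\bet\cdot B_r},
\end{equation*}
where $B_r$ denotes the bound on $\expec{\norm{\delt}^2}{}$ at the $r$-th stage of induction, and I have used Lemma~\ref{lm:gtconvrate} for $\expec{\norm{\gt-\g_*}^2}{}$.

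The crucial observation is that by subadditivity of the square root, $\sqrt{d_z\bet\cdot B_r}$ distributes across the summands of $B_r$, and the $(d_z\bet)^{1-2^{-r-1}}$ term produces $\sqrt{d_z\bet\cdot(d_z\bet)^{1-2^{-r-1}}}=(d_z\bet)^{1-2^{-r-2}}$, the sharpened leading term. Similarly each $i$-indexed summand $\alpha_{t+1}^{2^{-i}}\bet^{1-2^{-i}}d_z^{1+(4\varkappa+\vartheta/2-1)2^{-i}}$ gives, after multiplication by $\sqrt{d_z\bet}$ and taking the square root, a new term of index $i+1$, namely $\alpha_{t+1}^{2^{-i-1}}\bet^{1-2^{-i-1}}d_z^{1+(4\varkappa+\vartheta/2-1)2^{-i-1}}$; likewise for the $\bet(1+\alpha_{t+1}^{2^{-i}})d_z^{1+2^{1-i}\varkappa}$ family. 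Plugging these into the recursion \eqref{eq:deltrecursion} (with the Young-free version of the cross term) yields
\begin{equation*}
\expec{\norm{\deltone}^2}{} \leq (1-c\mu\atone)\expec{\norm{\delt}^2}{}+\atone\cdot O(B_{r+1}),
\end{equation*}
with $B_{r+1}$ of the claimed form. Standard recursion-solving (as in Lemma~\ref{lem: recursive_ineq} or the argument producing \eqref{eq:deltintermedbound} from \eqref{eq:deltrecursion}) then promotes the one-step bound to $\expec{\norm{\deltone}^2}{}=O(B_{r+1})$, completing the induction.

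I expect the main obstacle to be bookkeeping: verifying carefully that the two families of summands close under the operation $T\mapsto\sqrt{d_z\bet\cdot T}$ and that the index shifts align exactly with the exponents $2^{-i}$ and $2^{1-i}\varkappa$ in the statement, while also confirming that the residual old terms at indices $0,1,\ldots,r$ are absorbed into the new sum up to $r+1$. A secondary technical point is ensuring the step-size conditions assumed in Lemma~\ref{lm:deltintermedbound} (notably $\bet\leq \mu^2/(64 d_z)$ and $\atone\sqrt{d_z\bet}<1$) continue to guarantee the contractive factor $(1-c\mu\atone)$ in every iteration of the induction; this should follow verbatim from the proof of Lemma~\ref{lm:deltintermedbound}, since the induction only refines the inhomogeneous term and never alters the coefficient of $\expec{\norm{\delt}^2}{}$.
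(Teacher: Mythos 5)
Your proposal is correct and follows essentially the same route as the paper: the paper's proof likewise takes the intermediate bound of Lemma~\ref{lm:deltintermedbound}, replaces the Young-inequality step in \eqref{eq:delqgtgstarSigmabound} by Cauchy--Schwarz against the current bound on $\expec{\norm{\delt}_2^2}{}$, observes that the square-root mapping sends each summand to its index-shifted successor (carrying out the $r=0\to r=1$ step explicitly in \eqref{eq:delQgtSigmafinalbound}), and then states that repeating this $r$ times yields the claim --- which is exactly the induction you formalize. Your bookkeeping of how the two term families close under $T\mapsto\sqrt{d_z\bet\, T}$ and your note on preserving the contraction factor match the paper's argument.
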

\begin{proof}[Proof of Lemma~\ref{lm:improvedfinaldeltrate}]\label{pf:improvedfinaldeltrate}
   If we have $$\expec{\normtt{\delt}}{}=O\left(\atone d_z^{4\varkappa+\vartheta/2}+\bet d_z^{1+2\varkappa}
    +\sqrt{d_z\bet}\right),$$ 
    then from \eqref{eq:delqgtgstarSigmabound}, we have, 
   \begin{align*}
        \expec{|\delt^\top Q_t(\gt-\g_*)^\top \Sigma_{ZY}|}{}\lesssim &\sqrt{\expec{\norm{\delt}_2^2}{}\expec{\norm{\gt-\g_*}_2^2}{}}\\
        = &O\left(\sqrt{\atone\bet}d_z^{1/2+2\varkappa+\vartheta/4}+\bet d_z^{1+\varkappa}+(d_z\bet)^{3/4}\right).\numberthis\label{eq:delQgtSigmafinalbound}
    \end{align*} 
   Then, similar to \eqref{eq:deltrecursionintermed}, we have, 
   \begin{align*}
       &\expec{\norm{\deltone}_2^2}{}\\
       \lesssim &(1-\atone\mu)\expec{\normtt{\delt}}{}+4\atonesqr\bet d_z^{1+2\varkappa}
    +4C\atonesqr d_z^{4\varkappa+\vartheta/2}\\
    &+2\atone\left(\mu\expec{\normtt{\delt}}{}/16+4 d_z^{1+2\varkappa}\bet/\mu+\sqrt{\atone\bet}d_z^{1/2+2\varkappa+\vartheta/4}+\bet d_z^{1+\varkappa}+(d_z\bet)^{3/4}\right)\\
     \lesssim &(1-7\mu\atone/8)\expec{\normtt{\delt}}{}\\
     &+\atone O\lrbrac{(d_z\bet)^{3/4}+\sum_{i=0}^1\left(\alpha_{t+1}^{2^{-i}}\bet^{1-2^{-i}} d_z^{1+(4\varkappa+\vartheta/2-1)2^{-i}}+\bet (1+\alpha_{t+1}^{2^{-i}}) d_z^{1+2^{1-i}\varkappa}\right)}\\
    =&O\lrbrac{(d_z\bet)^{3/4}+\sum_{i=0}^1\left(\alpha_{t+1}^{2^{-i}}\bet^{1-2^{-i}} d_z^{1+(4\varkappa+\vartheta/2-1)2^{-i}}+\bet (1+\alpha_{t+1}^{2^{-i}}) d_z^{1+2^{1-i}\varkappa}\right)}. 
     \end{align*}
     Now if we repeat this step $r$ number of times (where $r$ is to be set later), by progressive sharpening we get the following bound.
     \begin{align*}
       &\expec{\norm{\deltone}_2^2}{}\\
       \lesssim & O\lrbrac{(d_z\bet)^{1-2^{-r-1}}+\sum_{i=0}^r\left(\alpha_{t+1}^{2^{-i}}\bet^{1-2^{-i}} d_z^{1+(4\varkappa+\vartheta/2-1)2^{-i}}+\bet (1+\alpha_{t+1}^{2^{-i}}) d_z^{1+2^{1-i}\varkappa}\right)}.
    \end{align*}
\end{proof}
Coming back to the proof of Theorem~\ref{th:mainthetaconv}, we have that by combining Lemma~\ref{lm:tildethetaconvrate}, and Lemma~\ref{lm:improvedfinaldeltrate}, 
\begin{align*}
    &\expec{\normtt{\tt-\t_*}}{}\leq  2\expec{\normtt{\delt}}{}+2\expec{\normtt{\tilde{\theta}_t-\t_*}}{}\\
    =&O\lrbrac{(d_z\bet)^{1-2^{-r-1}}+\sum_{i=0}^r\left(\alpha_{t+1}^{2^{-i}}\bet^{1-2^{-i}} d_z^{1+(4\varkappa+\vartheta/2-1)2^{-i}}+\bet (1+\alpha_{t+1}^{2^{-i}}) d_z^{1+2^{1-i}\varkappa}\right)}.\numberthis\label{eq:thetaconvgeneralalphabetabound}
\end{align*}
Now, in \eqref{eq:thetaconvgeneralalphabetabound}, for some arbitrarily small number $\iota>0$, choosing $$\at=\min(0.5d_z^{-4\varkappa-\vartheta/2}\lambda_Z^{-1}C_\g^{-2},0.5(\norm{\g_*}_2\lambda_Z)^{-2})t^{-1+\iota/2}, \qquad \bet= \mu^2 d_z^{-1-2\varkappa}t^{-1+\iota/2}/128,$$
and setting $r=\lceil\log_2{((\iota/2)^{-1}-1)}-1\rceil$ we get, 
\begin{align*}
    \expec{\normtt{\tt-\t_*}}{}=O\lrbrac{\max\lrbrac{t^{-1+\iota},t^{-1+\iota/2}\log((\iota/2)^{-1}-1)}}.
\end{align*}
\end{proof}

\subsection{Proof of Lemma~\ref{lm:thetabounded}}
\begin{proof}\label{pf:thetabounded}
Using the form of $\t_*$, from \eqref{eq:thetaupdate} we get, 
\begin{align*}
    \ttone-\t_*=&\hQt(\tt-\t_*)+\atone(\gt-\g_*)^\top\Sigma_{ZY}+\atone D_t\t_*+\atone\gt^\top\xi_{Z_t}\gt(\tt-\t_*)\\
    &+\atone\gt^\top\xi_{Z_t}\gt\t_*+\atone\gt^\top\xi_{Z_tY_t}. \numberthis\label{eq:thetaupdateexpanded}
\end{align*}
where $\hQt\coloneqq\lrbrac{I-\atone\gt^\top\Sigma_Z\gt}=Q_t+\atone D_t$. Recall that $D_t= \g_*^\top \Sigma_Z\g_*-\gt^\top \Sigma_Z\gt$. By Assumption~\ref{as:gtbounded}, we have the following bound on $\norm{D_t}_2$.
\begin{align*}
    \norm{D_t}_2=O(\lambda_Z C_\g^2 d_z^{2\varkappa}).\numberthis\label{eq:Dtbound}
\end{align*}
We have the following bound on $\expec{\norm{D_t}_2^4}{}$ by Lemma~\ref{lm:gtconvrate}.
\begin{align*}
    \expec{\norm{D_t}_2^4}{}=\expec{\norm{(\g_*-\gt)^\top \Sigma_Z\g_*+\gt^\top \Sigma_Z(\g_*-\gt)}_2^4}{}=O(d_z^{2+4\varkappa}\betsqr).\numberthis\label{eq:Dtnormbound}
\end{align*}
From \eqref{eq:thetaupdateexpanded}, we have
\begin{align*}
    \norm{\ttone-\t_*}_2^2
    \leq  & (\tt-\t_*)^\top\hQt^2(\tt-\t_*) +3\atonesqr\norm{\gt^\top\xi_{Z_t}\gt(\tt-\t_*)}_2^2\\
    &+2\atone (\tt-\t_*)^\top\hQt(\gt-\g_*)^\top \Sigma_{ZY}\\ 
    &+2\atone (\tt-\t_*)^\top\hQt D_t\t_*+A_{1,t}+A_{2,t},\numberthis\label{eq:A12introduced}
\end{align*}
where 
\begin{align*}
A_{1,t}=&\alpha_{t+1}^2\big(\norm{(\gt-\g_*)^\top \Sigma_{ZY}}_2^2+\norm{D_t\t_*}_2^2\\
&+2 \Sigma_{ZY}^\top (\gt-\g_*)D_t\t_*+3\norm{\gt^\top\xi_{Z_t}\gt\t_*}_2^2+3\norm{\gt^\top\xi_{Z_tY_t}}_2^2\big), \numberthis\label{eq:A1tdef}
\end{align*}
and 
\begin{align*}
A_{2,t}=&2\atone (\hQt(\tt-\t_*)+\atone(\gt-\g_*)^\top\Sigma_{ZY}\\
&+\atone D_t\t_*)^\top(\gt^\top\xi_{Z_t}\gt(\tt-\t_*)
    +\gt^\top\xi_{Z_t}\gt\t_*+\gt^\top\xi_{Z_tY_t}).
\end{align*}
Define 
\begin{align*}
A_{3,t}\coloneqq&3\atonesqr\norm{\gt^\top\xi_{Z_t}\gt(\tt-\t_*)}_2^2+2\atone (\tt-\t_*)^\top\hQt(\gt-\g_*)^\top \Sigma_{ZY}\\
&+2\atone (\tt-\t_*)^\top\hQt D_t\t_*+A_{1,t}+A_{2,t}.\numberthis\label{eq:A3tdef}
\end{align*}
Then, choosing $C_\g^2d_z^{2\varkappa}\lambda_Z\atone<1$, which ensures $\lVert \hQt\rVert\leq 1$, we have
    \begin{align*}
    \norm{\ttone-\t_*}_2^4
    \leq  \norm{\tt-\t_*}_2^4+2(\tt-\t_*)^\top\hQt^2(\tt-\t_*) A_{3,t}+A_{3,t}^2. \numberthis\label{eq:theta4recintermed}
\end{align*}
\vspace{0.1in}
We now have the following bounds: 
    \begin{enumerate}[leftmargin=0.1in]
        \item Using Assumption~\ref{as:boundedcovarnoisevar}, and Assumption~\ref{as:gtbounded},
        \begin{align}
            \alpha_{t+1}^4\expec{\norm{\gt^\top\xi_{Z_t}\gt(\tt-\t_*)}_2^4}{}\lesssim d_z^{8\varkappa+\sigmar} \alpha_{t+1}^4\expec{\norm{\tt-\t_*}_2^4}{}.\label{eq:A31bound}
        \end{align}
        \item We have that
        \begin{align*}
            &\expec{((\tt-\t_*)^\top\hQt(\gt-\g_*)^\top \Sigma_{ZY})^2}{} \\
            &\lesssim \expec{\normtt{\tt-\t_*}\normtt{\gt-\g_*}}{}\\
             &\leq \sqrt{\expec{\norm{\tt-\t_*}_2^4}{}\expec{\norm{\gt-\g_*}_2^4}{}}\\
             & \leq d_z\bet\left(1+\expec{\norm{\tt-\t_*}_2^4}{}\right)/2,\numberthis\label{eq:thetaqgtSigmaboud}
        \end{align*}
        where, the first inequality follows by $\lVert\hQt\rVert_2=O(1)$, and $\lVert\Sigma_{ZY}\rVert_2=O(1)$. The second inequality follows by Cauchy-Schwarz inequality. The last inequality follows by $\sqrt{ab}\leq (a+b)/2$, and Lemma~\ref{lm:gtconvrate}.
        \item We have that
        \begin{align*}
            &\expec{((\tt-\t_*)^\top\hQt D_t\t_*)^2}{}\\
            &\lesssim \expec{\norm{\tt-\t_*}_2^2\norm{ D_t}_2^2}{}\\
            &\leq \sqrt{\expec{\norm{\tt-\t_*}_2^4}{}\expec{\norm{D_t}_2^4}{}}\\
            &\lesssim d_z^{1+2\varkappa}\bet\left(1+\expec{\norm{\tt-\t_*}_2^4}{}\right)/2, \numberthis\label{eq:A33bound}
        \end{align*}
        where, the first inequality follows by $\lVert\hQt\rVert_2=O(1)$, and $\lVert\t_*\rVert_2=O(1)$. The second inequality follows by Cauchy-Schwarz inequality. The last inequality follows by $\sqrt{ab}\leq (a+b)/2$, and \eqref{eq:Dtnormbound}.
        \item Using Assumption~\ref{as:boundedcovarnoisevar}, Assumption~\ref{as:gtbounded}, \eqref{eq:Dtnormbound}, and Lemma~\ref{lm:gtconvrate}, we have 
        \begin{align}
            \expec{A_{1,t}^2}{}=O\lrbrac{d_z^{8\varkappa + \vartheta}\alpha_{t+1}^4}.\label{eq:A1squarebound}
        \end{align}.
        \item Using Young's inequality, Assumption~\ref{as:boundedcovarnoisevar}, Assumption~\ref{as:gtbounded}, Lemma~\ref{lm:gtconvrate}, $\norm{\Sigma_{ZY}}_2=O(1)$, $\norm{\t_*}_2=O(1)$, and \eqref{eq:Dtnormbound}, we have
        \begin{align*}
            \expec{A_{2,t}^2}{}\leq& 2\alpha_{t+1}^2\expec{\norm{\hQt(\tt-\t_*)+\atone(\gt-\g_*)^\top\Sigma_{ZY}+\atone D_t\t_*}_2^4}{}\\
            &+2\alpha_{t+1}^2\expec{\norm{\gt^\top\xi_{Z_t}\gt(\tt-\t_*)+\gt^\top\xi_{Z_t}\gt\t_*+\gt^\top\xi_{Z_tY_t}}_2^4}{}\\
            \lesssim &\alpha_{t+1}^2d_z^{8\varkappa+\vartheta}(1+\expec{\norm{\tt-\t_*}_2^4}{}).\numberthis\label{eq:A35bound}
        \end{align*}
          \item  Using $\lVert\hQt\rVert_2=O(1)$, Assumption~\ref{as:boundedcovarnoisevar}, and Assumption~\ref{as:gtbounded},
        \begin{align*}
          \atonesqr\expec{(\tt-\t_*)^\top\hQt^2(\tt-\t_*) \norm{\gt^\top\xi_{Z_t}\gt(\tt-\t_*)}_2^2}{}\lesssim & \atonesqr d_z^{4\varkappa+\vartheta/2}\expec{\norm{\tt-\t_*}_2^4}{}.  \numberthis\label{eq:A31interact}
        \end{align*}
        \item We have that
        \begin{align*}
            &\atone\expec{|(\tt-\t_*)^\top\hQt^2(\tt-\t_*) (\tt-\t_*)^\top\hQt(\gt-\g_*)^\top \Sigma_{ZY}|}{}\\
            \lesssim & \atone\expec{\norm{\tt-\t_*}_2^3\norm{\gt-\g_*}_2}{}\\
            \leq &\atone\left(\expec{\norm{\tt-\t_*}_2^4}{}\right)^{3/4}\left(\expec{\norm{\gt-\g_*}_2^4}{}\right)^{1/4}\\
            \leq & \atone\sqrt{d_z\bet}\left(\expec{\norm{\tt-\t_*}_2^4}{}\right)^{3/4}\\
            \leq & \frac{3\atone\sqrt{d_z\bet}}{4}\expec{\norm{\tt-\t_*}_2^4}{}+\frac{\atone\sqrt{d_z\bet}}{4},\numberthis\label{eq:tQttQgS}
        \end{align*}
        where, the first inequality follows by $\lVert\hQt\rVert_2=O(1)$, and $\lVert\Sigma_{ZY}\rVert_2=O(1)$, the second inequality follows by Cauchy-Schwarz inequality, the third inequality follows by Lemma~\ref{lm:gtconvrate} and the fourth inequality follows by Young's inequality.
        \item Similar to \eqref{eq:tQttQgS}, we have,
        \begin{align*}
            &\atone\expec{|(\tt-\t_*)^\top\hQt^2(\tt-\t_*) (\tt-\t_*)^\top\hQt D_t\t_*|}{}\\
             \leq&  \frac{3d_z^{1/2+\varkappa}\atone\sqrt{\bet}}{4}\expec{\norm{\tt-\t_*}_2^4}{}+\frac{d_z^{1/2+\varkappa}\atone\sqrt{\bet}}{4}.\numberthis\label{eq:tQttQDt}
        \end{align*}
        \item Using $\lVert\hQt\rVert_2=O(1)$, Cauchy-Schwarz inequality, \eqref{eq:A1squarebound}, and Young's inequality,
        \begin{align*}
            &\expec{(\tt-\t_*)^\top\hQt^2(\tt-\t_*) A_{1,t}}{}\\
            \leq& \expec{\norm{\tt-\t_*}_2^2 A_{1,t}}{}\\
            \leq& \sqrt{\expec{\norm{\tt-\t_*}_2^4 }{}\expec{ A_{1,t}^2}{}}\\
            \lesssim & d_z^{4\varkappa+\vartheta/2}\atonesqr\lrbrac{1+\expec{\norm{\tt-\t_*}_2^4}{}}.\numberthis\label{eq:A34interact}
        \end{align*}
        \item By Assumption~\ref{aspt: independence}, we have,
        \begin{align*}
            \expec{(\tt-\t_*)^\top\hQt^2(\tt-\t_*) A_{2,t}}{t}=0. \numberthis\label{eq:A35interact}
        \end{align*}
    \end{enumerate}
    Now using Jensen's inequality, and combining \eqref{eq:A31bound}, \eqref{eq:thetaqgtSigmaboud}, \eqref{eq:A33bound}, \eqref{eq:A1squarebound}, and \eqref{eq:A35bound}, we have, 
\begin{align*}
    \expec{A_{3,t}^2}{}\leq& 45\alpha_{t+1}^4\expec{\norm{\gt^\top\xi_{Z_t}\gt(\tt-\t_*)}_2^4}{}+20\atonesqr \expec{((\tt-\t_*)^\top\hQt(\gt-\g_*)^\top \Sigma_{ZY})^2}{}\\
    &+20\atonesqr \expec{((\tt-\t_*)^\top\hQt D_t\t_*)^2}{}+5\expec{A_{1,t}^2}{}+5\expec{A_{2,t}^2}{}\\
    \lesssim & \alpha_{t+1}^4 d_z^{\vartheta_7+8\varkappa} \expec{\norm{\tt-\t^*}_2^4}{}+d_z\atonesqr\bet\left(1+\expec{\norm{\tt-\t_*}_2^4}{}\right)\\
    &+\atonesqr d_z^{1+2\varkappa}\bet\left(1+\expec{\norm{\tt-\t_*}_2^4}{}\right)
    +d_z^{8\varkappa + \vartheta}\alpha_{t+1}^4+\alpha_{t+1}^2d_z^{8\varkappa+\vartheta}(1+\expec{\norm{\tt-\t_*}_2^4}{})\\
     \lesssim &\alpha_{t+1}^2d_z^{8\varkappa+\vartheta}\left(1+\expec{\norm{\tt-\t_*}_2^4}{}\right).\numberthis\label{eq:A3sqrbound}
\end{align*}
Combining \eqref{eq:A31interact}, \eqref{eq:tQttQgS}, \eqref{eq:tQttQDt}, \eqref{eq:A34interact}, and \eqref{eq:A35interact}, we get, 
\begin{align*}
    &\expec{(\tt-\t_*)^\top\hQt^2(\tt-\t_*) A_{3,t}}{}\\
    \lesssim& \atonesqr d_z^{4\varkappa+\vartheta/2}\expec{\norm{\tt-\t_*}_2^4}{}+\frac{3\atone\sqrt{d_z\bet}}{4}\expec{\norm{\tt-\t_*}_2^4}{}+\frac{\atone\sqrt{d_z\bet}}{4}\\
    &+\frac{3d_z^{1/2+\varkappa}\atone\sqrt{\bet}}{4}\expec{\norm{\tt-\t_*}_2^4}{}+\frac{d_z^{1/2+\varkappa}\atone\sqrt{\bet}}{4}
    +d_z^{4\varkappa+\vartheta/2}\atonesqr\lrbrac{1+\expec{\norm{\tt-\t_*}_2^4}{}} \\
    \lesssim& (\atonesqr d_z^{4\varkappa+\vartheta/2}+\atone\sqrt{\betone}d_z^{1/2+\varkappa})(1+\norm{\tt-\t^*}_2^4).\numberthis\label{eq:A3interactbound}
\end{align*}
    Combining \eqref{eq:theta4recintermed}, \eqref{eq:A3sqrbound}, and \eqref{eq:A3interactbound}, we have,
    \begin{align*}
    \expec{\norm{\ttone-\t_*}_2^4}{}
    \lesssim & (1+\atonesqr d_z^{8\varkappa+\vartheta}
    +\atone\sqrt{\betone}d_z^{1/2+\varkappa})\lrbrac{1+\expec{\norm{\tt-\t_*}_2^4}{}}.\numberthis\label{eq:thetaatsqrtbet}
\end{align*}
Now choosing $\at,\bet$ such that $\at\leq d_z^{-4\varkappa-\vartheta/2}$, and $\sum_{t=1}^\infty(\atonesqr 
    +\atone\sqrt{\betone})<\infty$, we get 
\begin{align*}
    \expec{\norm{\tt-\t_*}_2^4}{}\leq M, \numberthis\label{eq:expectheta4bounded}
\end{align*} 
for some constant $0\leq M<\infty$.
\end{proof}

\subsection{Comment on the convergence of ~\eqref{eq:thetaupdatecso}}\label{sec:csoconvergence}
We now discuss the convergence properties of the update sequence~\eqref{eq:thetaupdatecso}, which we refer to as the \emph{conditional stochastic optimization} (CSO) based updates, which we restate below:
\begin{align*}
             \t_{t+1}=\t_t-\atone\g_t^\top Z_t(X_t^\top\tt-Y_t),\quad\quad \g_{t+1}=\g_t-\betone Z_t(Z_t^\top\gt-X_t^\top). 
\end{align*}
Similar to \eqref{eq:thetaupdateexpanded}, for the above updates, we have the following expansion: 
\begin{align*}
    \ttone-\t_*=&\hQt(\tt-\t_*)+\atone(\gt-\g_*)^\top\Sigma_{ZY}+\atone D_t\t_*+\atone\gt^\top\xi_{Z_t}\g_*(\tt-\t_*)\\    &+\atone\gt^\top\xi_{Z_t}\g_*\t_*+\atone\gt^\top\xi_{Z_tY_t}-\atone \gt^\top Z_t\epstt^\top\tt,
\end{align*}
where $\xi_{Z_t}=\Sigma_Z-Z_tZ_t^\top$,  $\xi_{Z_tY_t}=\Sigma_{ZY}-Z_tY_t$, $\hQt\coloneqq\lrbrac{I-\atone\gt^\top\Sigma_Z\g_*}=Q_t+\atone D_t$, and $D_t=(\g_*-\g_t)\Sigma_Z\g_*$.

Recall that the reason for the initial divergence of the updates in~\eqref{eq:thetaupdatecso} are the potential negative eigenvalues of $\gt^\top\Sigma_Z\g_*$. Here we will show that if $\gt^\top\Sigma_Z\g_*$ is positive semi-definite or $\gt$ is close enough to $\g_*$ such that the negative eigenvalues (if any) are not too large in absolute values, then the updates in~\eqref{eq:thetaupdatecso} indeed exhibit the same convergence rate as Algorithm~\ref{alg:one_sample_onlineIV}.
\begin{assumption}\label{as:csocond}
    Let either of the following two conditions be true. For all $t\geq t_0$,
    \begin{enumerate}[noitemsep,leftmargin=0.2in]
        \item $\gt\Sigma_Z\g_*$ is positive semidefinite. 
        \item $\norm{\gt-\g_*}^2\lesssim d_z\bet$.
    \end{enumerate}
\end{assumption}
Note that Condition 1 of Assumption~\ref{as:csocond} is an idealized condition which is difficult to ensure for all $t$ in reality. But of course if this is true, then $\gt\Sigma_Z\g_*$ does not have a negative eigenvalue to cause divergence and the  proof then follows exactly like Lemma~\ref{lm:thetabounded}. 

Hence, we will focus on the more realistic Condition 2 of Assumption~\ref{as:csocond} which holds true almost surely \cite{polyak1992acceleration}. Since we are interested in the asymptotic rate of convergence of CSO updates (due to the requirement of Assumption~\ref{as:csocond}), we will only concentrate on the iterations $t\geq t_0$. In this case, the proof steps are similar to Theorem~\ref{th:mainthetaconv} except for two major differences, that we discuss below.\\

\noindent\textbf{Difference 1: Potential negative definiteness of $\gt^\top \Sigma_Z\g_*$:}  \\

Under Condition 2, $\gt^\top \Sigma_Z\g_*$ can indeed be negative definite. In general, if $\gt^\top \Sigma_Z\g_*$ is negative definite then that is undesirable as we explain Section~\ref{sec:one_sample}. In terms of the proof, we can no longer write $(\tt-\t^*)^\top \hQt^\top\hQt(\tt-\t^*)\leq \norm{\tt-\t_*}^2$ (which was possible to do in \eqref{eq:A12introduced} in the proof of Lemma~\ref{lm:thetabounded}). Subsequently, \eqref{eq:theta4recintermed} breaks down. But we will show that under Condition 2 the negative eigenvalues are not too large in terms of absolute values. Specifically, we can write, 
\begin{align}\label{eq:thetarecursionintermedcso}
\begin{aligned}
    &(\tt-\t^*)^\top \hQt^\top\hQt(\tt-\t^*)\\
    =&(\tt-\t^*)^\top (Q_t^2+\atone Q_t^\top D_t+\atone D_t^\top Q_t+\atonesqr D_t^\top D_t)(\tt-\t^*)\\
    \leq & (1+2\atone\norm{D_t})\norm{\tt-\t_*}^2+\atonesqr\norm{D_t}^2 \norm{\tt-\t_*}^2\\
     \leq  & (1+2\atone\sqrt{d_z\bet})\norm{\tt-\t_*}^2+\atonesqr\norm{D_t}^2 \norm{\tt-\t_*}^2.
\end{aligned}
\end{align}
The term $\atonesqr\norm{D_t}^2 \norm{\tt-\t_*}^2$ is of the order of $A_{3,t}$ defined in \eqref{eq:A3tdef}. Now $\atone\sqrt{d_z\bet}$ is small enough in the sense that we choose the stepsizes such that $\sum_{t=1}^\infty(\atonesqr+\atone\sqrt{\bet})<\infty$. Using this one can now show a similar bound as \eqref{eq:thetaatsqrtbet} and consequently show $\expec{\norm{\tt-\t_*}^4}{}$ is bounded. 

Now let us see what happens in the absence of Condition 2. Here one could use the fact $(1+2\atone\norm{D_t})\lesssim (1+2C_\g\atone d_z^\varkappa)$ which is too big. Recall that we want something at least of the order of $\atone\sqrt{\bet}$ to show that $\tt$ sequence is bounded. One could also try to use the fact that $\expec{\norm{D_t}}{}$ is small by Lemma~\ref{lm:gtconvrate}. But since $D_t$ and $\tt$ are interdependent, one needs to decouple them. One way to do this would be to use Cauchy-Shwarz inequalityas shown below. 
\begin{align*}
    \expec{\norm{D_t}\norm{\tt-\t_*}^2}{}\leq \sqrt{\expec{\norm{D_t}^2}{}\expec{\norm{\tt-\t_*}^4}{}}\lesssim \sqrt{d_z\bet \expec{\norm{\tt-\t_*}^4}{}} .
\end{align*}
But that leads to the presence of $\expec{\norm{\tt-\t_*}^4}{}$ in \eqref{eq:A12introduced} which is potentially problematic due to the fact that on the left-hand side we have $\expec{\norm{\ttone-\t_*}^2}{}$.\\

\noindent\textbf{Difference 2: Presence of additional error term $\atone \gt^\top Z_t\epstt^\top\tt$:} \\

When comparing~\eqref{eq:thetaupdateexpandedcso} with \eqref{eq:thetaupdateexpanded}, yet another crucial difference is the presence of the term $\atone \gt^\top Z_t\epstt^\top\tt$. We will show by the following observations that this error term gets absorbed by other terms already present in \eqref{eq:thetaupdateexpanded} without affecting the convergence rate. Specifically, the following holds.
\begin{enumerate}[leftmargin=0.2in]
    \item Using the independence between $Z$, and $\epstt$, and by Assumption~\ref{aspt: independence}, we have,
    \begin{align*}
        &\EE_{t}[(\hQt(\tt-\t_*)+\atone(\gt-\g_*)^\top\Sigma_{ZY}+\atone D_t\t_*+\atone\gt^\top\xi_{Z_t}\g_*(\tt-\t_*) \\
        &+\atone\gt^\top\xi_{Z_t}\g_*\t_*)^\top \gt^\top Z_t\epstt^\top\tt]=0.
    \end{align*}
    \item We also have that
    \begin{align*}
        &\atonesqr\expec{(\gt^\top\xi_{Z_tY_t})^\top \gt^\top Z_t\epstt^\top\tt}{t} \\
        =&\atonesqr(\gt^\top \Sigma_Z\gt\norm{\t_*}^2+\gt^\top \Sigma_Z\gt\t_*^\top (\tt-\t_*))\\
        \leq & \atonesqr(\gt^\top \Sigma_Z\gt\norm{\t_*}^2+\norm{\gt^\top \Sigma_Z\gt (\tt-\t_*)}^2+\norm{\t_*}^2)
    \end{align*}
    This shows that the above term is of the same order as $A_{1,t}$ and $A_{3,t}$ defined in \eqref{eq:A1tdef}, and \eqref{eq:A3tdef}.
    \item Finally, we have 
    \begin{align*}
        \atonesqr \expec{\norm{\gt^\top Z_t\epstt^\top\tt}^2}{t}\lesssim \atonesqr (\norm{\gt}^2\norm{\tt-\t_*}^2+\norm{\gt}^2\norm{\t_*}^2).
    \end{align*}
    So this term is of the order of $A_{3,t}$ as well. 
\end{enumerate}

Combining the above facts and following similar procedure as the proof of Theorem~\ref{th:mainthetaconv}, one can show that the CSO updates achieve a similar rate under additional Assumption~\ref{as:csocond}.

\end{appendix}
\end{document}